    \renewenvironment{proof}[1][\proofname]{\par\noindent{\bf #1~ }}{\hfill\BlackBox\\[2mm]}
    \newenvironment{proof}{\par\noindent{\bf Proof\ }}{\hfill\BlackBox\\[2mm]}
\newcommand{\para}[1]{\vspace{0.2em}\textbf{#1\afterhead}~}
\begin{document}
\allowdisplaybreaks

\title{Multi-Agent Online Optimization with Delays:\\
Asynchronicity, Adaptivity, and Optimism}

\author{\name Yu-Guan Hsieh \email yu-guan.hsieh@univ-grenoble-alpes.fr \\
       \name Franck Iutzeler \email franck.iutzeler@univ-grenoble-alpes.fr \\
       \addr Univ. Grenoble Alpes, LJK, Grenoble, 38000, France
       \AND
       \name Jérôme Malick \email jerome.malick@univ-grenoble-alpes.fr \\
       \addr 
       Univ. Grenoble Alpes, CNRS, Grenoble INP, LJK, 38000 Grenoble, France
       \AND 
    \name Panayotis Mertikopoulos \email panayotis.mertikopoulos@imag.fr \\
       \addr
       Univ. Grenoble Alpes, CNRS, Inria, Grenoble INP, LIG, 38000, Grenoble, France
       \&
       Criteo AI Lab
       }

\editor{Sebastien Bubeck}

\maketitle

\begin{abstract}%

In this paper, we provide a general framework for studying multi-agent online learning problems in the presence of delays and asynchronicities.
Specifically, we propose and analyze a class of adaptive dual averaging schemes in which agents only need to accumulate gradient feedback received from the whole system, without requiring any between-agent coordination.
In the single-agent case, the adaptivity of the proposed method allows us to extend a range of existing results to problems with potentially unbounded delays between playing an action and receiving the corresponding feedback.
In the multi-agent case, the situation is significantly more complicated because agents may not have access to a global clock to use as a reference point;
to overcome this, we focus on the information that is available for producing each prediction rather than the actual delay associated with each feedback.
This allows us to derive adaptive learning strategies with optimal regret bounds, even in a fully \emph{decentralized}, \emph{asynchronous} environment.
Finally, we also analyze an ``optimistic'' variant of the proposed algorithm which is capable of exploiting the predictability of problems with a slower variation and leads to improved regret bounds. \end{abstract}

\begin{keywords}
Online learning; multi-agent systems; delayed feedback; asynchronous methods; adaptive algorithms 
\end{keywords}

\renewcommand{\qedsymbol}{{$\blacksquare$}}

\section{Introduction}
\label{sec:intro}

Online learning is a powerful paradigm for sequential decision-making,
with a range of diverse applications in portfolio selection, online auctions, recommender systems, and many other fields;
for a comprehensive introduction to the topic, see the textbooks by \citet{SS11}, \citet{BCB12}, \citet{Hazan16}, and references therein.
In the most basic online learning scenario,
the agent (or ``learner'') chooses an action,
the cost of this action is subsequently revealed to the agent (possibly along with some gradient-based feedback),
and the process repeats.

In this bare-bones model, the time-varying nature of the problem is reflected in the variability of the cost functions encountered by the agent, and the feedback received by the agent is assumed to be immediately available at the end of each time step.
However,
in many cases of practical interest, there could be a significant delay between playing an action and receiving the corresponding feedback;
for instance, this is typically the case in online ad auctions \citep{CAC20}, network traffic routing \citep{ABeA+06}, etc.

Our work concerns online learning setups where delays and asynchronicities play a major role;
these may be due to the computational overhead involved,
the communication latency between different learners in distributed multi-agent systems,
the prediction of long-term effects, or any other reason.
In the literature, the specifics of the delay model are often tailored to the targeted application:
for instance, in online ad placement problems, delays are caused by the lag between the impression of an ad and its conversion, which data suggests are often exponentially distributed \citep{Cha14}.
Instead of zooming in on a particular application, our paper aims at studying the impact of delays and stimulus-response asynchronicities from a generalist, application-agnostic standpoint.
To that end, we propose a flexible framework for distributed online optimization problems in which several agents collaborate asynchronously to enhance their individual/collective performance in an evolving environment with non-zero response times.
This allows us to provide a wide range of regret bounds extending existing results in the literature,
and to design novel adaptive methods that can be implemented in a fully distributed and decentralized manner.

\para{Our contributions in the context of related work}
There are three major underlying themes in our analysis.
As we discussed above, the first has to do with \textbf{delays:}
either due to a computing overhead or an inherent lag between ``action'' and ``reaction'', agents may have to update their actions based on feedback that is potentially stale and obsolete.
The second has to do with \textbf{multi-agent} systems:
in a network setting, 
learners may have to take decisions with very different information at their disposal, and with no realistic means of coordinating their decision-making mechanisms.
Expanding further on this point, the third has to do with \textbf{adaptivity:}
we are interested in learning algorithms that can be run with minimal information prerequisites at the agent end, while still achieving optimal regret bounds.

To take all this into account, we introduce in \cref{sec:framework} a novel, flexible framework that unifies several models of online learning in the presence of delays---including both single- and multi-agent setups.
To achieve no regret in this context, we employ the \acl{DA} template of \citet{Nes09} which we combine with adaptive learning rates inspired by the ``inverse root sum'' blueprint of \citet{ACG02}, \citet{MS10} and \citet{DHS11}.
We show that the resulting policies achieve optimal data- and delay-dependent guarantees even in a fully decentralized environment (\cref{sec:variable}).
In the literature, the closest antecedents to our result are the works of \cite{MS14} and \cite{JGS16,JGS19}, in which the authors also devised adaptive methods for delayed online learning problems.
However, all these papers dealt with the single-agent (shared-memory) setup---and while \citet{JGS19} makes the weakest assumptions among these three papers, the derived bounds are only data-dependent, not delay-dependent.

On the technical side, the multi-agent nature of the problem gives rise to two additional challenges that are not present in the single-agent setup:
\begin{enumerate*}[\itshape i\upshape)]
\item
the non-monotonicity of the total amount of information available to the decision-making agent;
and
\item
the lack of a global counter that indicates the number of updates performed in the entire network so far.
\end{enumerate*}
\footnote{To the best of our knowledge, the only work providing a partial answer to these challenges is that of \citet{JGS19}:
this work takes into account the first challenge and can partly address the second challenge, through an approach that is different from ours. Nonetheless, as mentioned in the previous paragraph, they focused on a setup that is fundamentally different from ours, and the obtained results hence also differ considerably.}
In face of these challenges, we introduce in \cref{sec:DDA} the notion of \emph{dependency graph}, a \ac{DAG} that encodes how the feedback is actually received and used in the network.
Each topological sorting of this \ac{DAG} represents a \emph{faithful permutation} of time that is compatible with the underlying decision-making process.
With help of the dependency graph we also provide a novel characterization
of the key quantities that are involved in the incurred regret.
Taken together, these elements allow us to design and analyze an adaptive algorithm
that achieves optimal data- and delay-dependent regret bounds in this completely decentralized setting.
As a bonus of the new characterizations, we derive for the single-agent setup the first data- and delay-adaptive algorithm that does not require a ``bounded delay'' assumption.

Finally, in \cref{sec:optimistic}, we focus on improving these worst-case bounds by introducing a more ``optimistic'' step-size policy in the spirit of \citet{RS13-NIPS}.
This approach exploits the slow variation of ``predictable'' sequences, thereby improving the regret guarantees of online algorithms.
However, when gradients arrive out of order, the predictability of a loss sequence may be compromised---and, indeed, in the presence of delays, we show that a crude implementation of optimistic methods cannot yield any obvious benefit.
To account for this, we introduce a ``separation of timescales'' between the ``sensing'' and ``updating'' steps of the optimistic \acl{DA} method, and we show that this variable step-size scaling leads to optimal data-dependent guarantees.

\section{A General Framework for Asynchronous Online Optimization}
\label{sec:framework}

In this section, we lay out the general asynchronous online optimization framework that we study throughout the paper.
We also highlight the two challenges that arise in our framework due to its multi-agent nature.

\subsection{Problem Setup}
\label{subsec:framework-prob}

Consider a set of agents $\workers = \{1,\dotsc,\nWorkers\}$ playing against a sequence of time-varying loss functions, with the goal of minimizing their regret.
Formally, at each time slot $\run=\running$, one of the agents becomes \emph{active},
they select an action $\vt[\state]$ from the constraint set $\points$,
and they incur a loss $\vt[\obj](\vt[\state])$.
\footnote{For simplicity, we assume throughout that only one agent is active at each time step.}
The performance of the agents is then measured by the cumulative regret
\begin{equation}
    \label{eq:reg-def}
    \reg_{\nRuns}(\comp) = \sum_{\run=\start}^{\nRuns} \vt[\obj](\vt[\state])
    - \sum_{\run=\start}^{\nRuns} \vt[\obj](\comp)
\end{equation}
where $\comp\in\points$ is an arbitrary comparator action.
In the above, 
$\points$ is assumed to be a closed convex subset of $\vecspace$,
and
each $\vt[\obj]\from\vecspace\to\R\cup\{+\infty\}$ is convex and
subdifferentiable on $\points$.
Unless otherwise stated, we assume that the agents receive first-order feedback $\vt[\gvec]\in\subd\vt[\obj](\vt)$
at some moment after $\vt$ is played
(namely, $\vt[\gvec]$ is a subgradient of $\vt[\obj]$ at $\vt$).
\footnote{In a slight abuse of terminology, the terms gradient and subgradient will be used interchangeably in the sequel.}
Irrespective of the nature of the problem, we will refer to $\vt[\state]$ interchangeably as the \emph{prediction} made by the active agent or the \emph{action} played by the active agent at time $\run$, and we will write $\activeworker{\run}$ for the agent that is active at time $\run$.

For visualization purposes, the above setup is illustrated in \cref{fig:setup}, where we also highlight the fact that we do not put any restriction are how information is exchanged between agents.

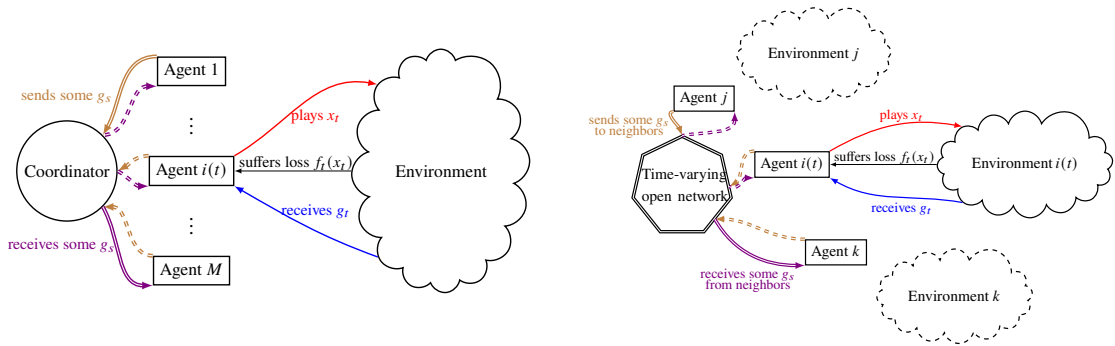
\begin{figure}[t]
    \centering
    \begin{subfigure}[b]{0.47\textwidth}
        \centering
        \resizebox{\textwidth}{!}{
        \begin{tikzpicture}[font=\large]
\tikzset{>=latex}

\node[rectangle,draw,thick] (rect1) at (-0.5,4)  {\small Agent $1$};
\node[rectangle,draw,thick] (rectM) at (-0.5,0)  {\small Agent $M$};
\node at (-0.5,3) {$\vdots$};
\node at (-0.5,1) {$\vdots$};

\node[rectangle,draw,thick] (rect) at (-0.5,2)  {\small Agent $\activeworker{\run}$};

\node[cloud, cloud puffs=15.7, aspect=0.65,align=center, draw,thick] (cloud) at (4.5,2) {\small Environment};

\node[circle,draw,thick] (coord) at (-3,2) {\small Coordinator};

\draw [->,red,thick] (rect.north east) to  [out=30,in=170] (cloud.north west);
\node[red] at (2.0,3.1) {\footnotesize plays $\point_\run$};

\draw [->,black] (cloud.west) to (rect.east);
\node[black] at (1.6,2.15) {\footnotesize suffers loss $\vt[\obj](\vt[\state])$};

\draw [->,blue,thick] (cloud.south west) to [out=160,in=-30] (rect.south east);
\node[blue] at (2.0,1.2) {\footnotesize receives $\vt[\gvec][\run]$ };

\draw [->,brown,double,thick] (rect1.north west) to  [out=180,in=60] (coord.north east);
\draw [<-,violet,double,dashed,thick] (rect1.south west) to  [out=180,in=0] (coord.north east);

\draw [->,brown,double,dashed,thick] (rect.north west) to  [out=180,in=30] (coord.east);
\draw [<-,violet,double,dashed,thick] (rect.south west) to  [out=180,in=-30] (coord.east);

\draw [->,brown,double,dashed,thick] (rectM.north west) to  [out=180,in=0] (coord.south east);
\draw [<-,violet,double,thick] (rectM.south west) to  [out=180,in=-60] (coord.south east);

\node[brown] at (-3,3.5) {\footnotesize sends some $\vt[\gvec][\runalt]$ };
\node[violet] at (-3.1,0.5) {\footnotesize receives some $\vt[\gvec][\runalt]$  };

\end{tikzpicture}         }
        \vspace{0.5em}
    \end{subfigure}
    \hspace{1em}
    \begin{subfigure}[b]{0.47\textwidth}
        \centering
        \resizebox{\textwidth}{!}{
        \begin{tikzpicture}[font=\large]
\tikzset{>=latex}

\node[rectangle,draw,thick] (rect1) at (-2.5,3.5)  {\small Agent $j$};
\node[rectangle,draw,thick] (rectM) at (0.5,0)  {\small Agent $k$};

\node[rectangle,draw,thick] (rect) at (-0.5,2)  {\small Agent $\activeworker{\run}$};

\node[cloud, cloud puffs=15.7, aspect=2,align=center, draw,thick] (cloud) at (4.8,2) {\small Environment $i(t)$};

\node[cloud, cloud puffs=15.7, aspect=2,align=center, draw,thick,dashed] (cloud2) at (0,4.5) {\small Environment $j$};

\node[cloud, cloud puffs=15.7, aspect=2,align=center, draw,thick,dashed] (cloud3) at (3.2,-1.0) {\small Environment $k$};

\node[regular polygon,regular polygon sides=7,thick,draw,text width=1.2cm,double] (coord) at (-3,1.5) {\small };
\node[text width=2.5cm] at (-2.67,1.5) {\small Time-varying open network};

\draw [->,red,thick] (rect.north east) to  [out=30,in=170] (cloud.north west);
\node[red] at (2.0,3.1) {\footnotesize plays $\point_\run$};

\draw [->,black] (cloud.west) to (rect.east);
\node[black] at (1.6,2.15) {\footnotesize suffers loss $\vt[\obj](\vt[\state])$};

\draw [->,blue,thick] (cloud.south west) to [out=170,in=-30] (rect.south east);
\node[blue] at (2.0,1) {\footnotesize receives $\vt[\gvec][\run]$ };

\draw [->,brown,double,dashed,thick] (rect.north west) to  [out=180,in=30] (coord.east);
\draw [<-,violet,double,dashed,thick] (rect.south west) to  [out=180,in=-30] (coord.east);

\draw [->,brown,double,thick] (rect1.south west) to  [out=180,in=110] (coord.north);
\draw [<-,violet,double,dashed,thick] (rect1.south east) to  [out=-90,in=10] (coord.north);

\draw [->,brown,double,dashed,thick] (rectM.north west) to  [out=180,in=0] (coord.south east);
\draw [<-,violet,double,thick] (rectM.south west) to  [out=180,in=-60] (coord.south east);

\node[brown] at (-4.2,3) {\footnotesize sends some $\vt[\gvec][\runalt]$ };
\node[brown] at (-4.2,2.75) {\footnotesize to neighbors };

\node[violet] at (-1.5,-0.5) {\footnotesize receives some $\vt[\gvec][\runalt]$  };
\node[violet] at (-1.5,-0.75) {\footnotesize from neighbors  };

\end{tikzpicture}         }
        \end{subfigure}
    \vspace{-0.6em}
    \caption{Illustration of the considered setup: a network of agents collaborate to minimize the total regret. We do not put any restriction on how the feedback is actually communicated. This can for example be done either through a coordinator-worker structure (left) or a decentralized open network (right).}
    \vspace{-1.4em}
    \label{fig:setup}
\end{figure}

\para{The delay model}
In environments with \emph{delayed feedback},
$\vt[\gvec]$ is only received by all the agents $\worker\in\workers$ a certain amount of time after the generating action $\vt[\state]$ was played.
In this regard, we will focus on the following sources of delay:
\begin{enumerate*}
[\itshape i\upshape)]
\item
\emph{inherent delays} that arise when the effect of a decision requires some time to be observed;
\item
\emph{computation delays} that arise when processing the action takes time (\eg due to gradient computations);
and
\item
\emph{communication delays} that arise in network setups where multiple workers share first-order information among themselves.
\end{enumerate*}

To express this formally, we write $\oneto{\run}\defeq\intinterval{\start}{\run}$
and we write $\vwt[\set]\subseteq\oneto{\run-1}$ for the set of gradient timestamps that are available to agent $\worker$ at time $\run$;
in other words, at time $\run$, the $\worker$-th agent only has $\setdef{\vt[\gvec][\runalt]}{\runalt\in\vwt[\set]}$ at their disposal. 
Clearly, at each stage $\run = \running$, the active agent $\activeworker{\run}$ can only compute $\vt[\state]$ based on $\setdef{\vt[\gvec][\runalt]}{\runalt\in\vwtATt[\set]}$, \ie the set of subgradients available for it at time $\run$.
This quantity is of utmost importance in our framework, so we also define
\begin{equation}
\vt[\set]
	= \vwtATt[\set]
	\qquad
	\text{and}
	\qquad
\vt[\setout]
	=\setexclude{\oneto{\run-1}}{\vt[\set]}
\end{equation}
for the set of timestamps that are available (resp.\;unavailable) to the active agent at time $\run$.

In a slight abuse of terminology, we will refer to both $(\vwt[\set])_{\run\in\oneto{\nRuns}}$ and $(\vt[\set])_{\run\in\oneto{\nRuns}}$ as feedback sequences although, strictly speaking, they only contain the \emph{timestamps} of the corresponding feedback.
Clearly, the non-delayed setting corresponds to the  case $\vt[\set]=\vwt[\set]=\oneto{\run-1}$ and $\vt[\setout]=\varnothing$.

\begin{figure}[t]
\setlength{\leftskip}{0.6cm}
\hspace{-2mm}\textbf{Single-agent ($\nWorkers=1$)}\\[-0.5em]

\resizebox{0.93\textwidth}{!}{
\begin{tikzpicture}
[framed,
background rectangle/.style={fill=gray!5,draw=black},
scale=1.0,
nodestyle/.style={circle,draw=black,thick,double,fill=yellow!20!white, inner sep=2pt},
nodestyleG/.style={,draw=black,fill=black!20!white, inner sep=2pt},
nodestyleGA/.style={,draw=black,thick,double,fill=yellow!20!white, inner sep=2pt},
edgestyle/.style={-,very thick,color=Goldenrod!20!black,opacity=0.4},
edgestyleC/.style={-,very thick, color=blue!20!black,opacity=0.4},
actstyle/.style={,fill=yellow!30!white, inner sep=2pt},
usestyle/.style={dashed,thick, color=white!20!black,opacity=0.4},
>=stealth]

\def\dx{1.6}
\def\dy{1}

\coordinate (T0) at (0*\dx,0);
\coordinate (A0) at (0*\dx,-0.5*\dy);
\coordinate (X0) at (0*\dx,-\dy);
\coordinate (GI0) at (0*\dx,-2.6\dy);
\coordinate (SI0) at (0*\dx,-3.1*\dy);
\coordinate (GII0) at (0*\dx,-2.6\dy);
\coordinate (SII0) at (0*\dx,-3.1*\dy);
\coordinate (S0) at (0*\dx,-5.5*\dy);

\coordinate (T1) at (1*\dx,0);
\coordinate (A1) at (1*\dx,-0.5*\dy);
\coordinate (X1) at (1*\dx,-\dy);
\coordinate (GI1) at (1*\dx,-2.6\dy);
\coordinate (SI1) at (1*\dx,-3.1*\dy);
\coordinate (GII1) at (1*\dx,-2.6\dy);
\coordinate (SII1) at (1*\dx,-3.1*\dy);
\coordinate (S1) at (1*\dx,-5.5*\dy);

\coordinate (T2) at (2*\dx,0);
\coordinate (A2) at (2*\dx,-0.5*\dy);
\coordinate (X2) at (2*\dx,-\dy);
\coordinate (GI2) at (2*\dx,-2.6\dy);
\coordinate (SI2) at (2*\dx,-3.1*\dy);
\coordinate (GII2) at (2*\dx,-2.6\dy);
\coordinate (SII2) at (2*\dx,-3.1*\dy);
\coordinate (S2) at (2*\dx,-5.5*\dy);

\coordinate (T3) at (3*\dx,0);
\coordinate (A3) at (3*\dx,-0.5*\dy);
\coordinate (X3) at (3*\dx,-\dy);
\coordinate (GI3) at (3*\dx,-2.6\dy);
\coordinate (SI3) at (3*\dx,-3.1*\dy);
\coordinate (GII3) at (3*\dx,-2.6\dy);
\coordinate (SII3) at (3*\dx,-3.1*\dy);
\coordinate (S3) at (3*\dx,-5.5*\dy);

\coordinate (T4) at (4*\dx,0);
\coordinate (A4) at (4*\dx,-0.5*\dy);
\coordinate (X4) at (4*\dx,-\dy);
\coordinate (GI4) at (4*\dx,-2.6\dy);
\coordinate (SI4) at (4*\dx,-3.1*\dy);
\coordinate (GII4) at (4*\dx,-2.6\dy);
\coordinate (SII4) at (4*\dx,-3.1*\dy);
\coordinate (S4) at (4*\dx,-5.5*\dy);

\coordinate (T5) at (5*\dx,0);
\coordinate (A5) at (5*\dx,-0.5*\dy);
\coordinate (X5) at (5*\dx,-\dy);
\coordinate (GI5) at (5*\dx,-2.6\dy);
\coordinate (SI5) at (5*\dx,-3.1*\dy);
\coordinate (GII5) at (5*\dx,-2.6\dy);
\coordinate (SII5) at (5*\dx,-3.1*\dy);
\coordinate (S5) at (5*\dx,-5.5*\dy);

\coordinate (T6) at (6*\dx,0);
\coordinate (A6) at (6*\dx,-0.5*\dy);
\coordinate (X6) at (6*\dx,-\dy);
\coordinate (GI6) at (6*\dx,-2.6\dy);
\coordinate (SI6) at (6*\dx,-3.1*\dy);
\coordinate (GII6) at (6*\dx,-2.6\dy);
\coordinate (SII6) at (6*\dx,-3.1*\dy);
\coordinate (S6) at (6*\dx,-5.5*\dy);

\coordinate (Tlast) at (7*\dx,0);
\coordinate (Alast) at (7*\dx,-0.5*\dy);
\coordinate (Xlast) at (7*\dx,-\dy);
\coordinate (GIlast) at (7*\dx,-2.6\dy);
\coordinate (SIlast) at (7*\dx,-3.1*\dy);
\coordinate (GIIlast) at (7*\dx,-2.6\dy);
\coordinate (SIIlast) at (7*\dx,-3.1*\dy);
\coordinate (Slast) at (7*\dx,-5.5*\dy);

\coordinate (Gend) at (7*\dx,-1.5*\dy);

\node (T0) at (T0) [left] {Time $\run$};
\node (X0) at (X0) [left] {Point played $\vt[\state]$};

\node (GI0) at (GI0)  [left,text = black]  {Gradients received};
\node (SI0) at (SI0)  [left,text = black]  {$\vt[\set]$};

\node (T1) at (T1)  {$1$};
\node (X1) at (X1)  [nodestyle] {$\vt[\state][1]$};

\node (GII1) at (GII1) [nodestyleGA] {};
\node (SII1) at (SII1) [actstyle] {$\emptyset$};

\node (T2) at (T2)  {$2$};

\node (X2) at (X2)  [nodestyle] {$\vt[\state][2]$};

\node (GII2) at (GII2) [nodestyleGA] {};
\node (SII2) at (SII2)  [actstyle] {$\emptyset$};

\node (T3) at (T3)  {$3$};
\node (X3) at (X3)  [nodestyle] {$\vt[\state][3]$};

\node (GII3) at (GII3) [nodestyleGA] {$\boldsymbol{\vt[\gvec][1]}$};
\node (SII3) at (SII3)  [actstyle] {$\{1\}$};

\node (T4) at (T4)  {$4$};
\node (X4) at (X4)  [nodestyle] {$\vt[\state][4]$};

\node (GII4) at (GII4) [nodestyleGA] {$\vt[\gvec][1],\boldsymbol{\vt[\gvec][3]}$};
\node (SII4) at (SII4) [actstyle] {$\{1,3\}$};

\node (T5) at (T5)  {$5$};

\node (X5) at (X5)  [nodestyle] {$\vt[\state][5]$};

\node (GII5) at (GII5) [nodestyleGA] {$\vt[\gvec][1],\vt[\gvec][3],\boldsymbol{\vt[\gvec][2]}$};
\node (SII5) at (SII5)  [actstyle] {$\{1,3,2\}$};

\node (Tlast) at (Tlast) {$\dots$};
\node (Xlast) at (Xlast) {$\dots$};
\node (GIlast) at (GIlast) {$\dots$};

\draw [edgestyle,->,out=-45,in=-225] (X1.south east) to (GII3.north west);

\draw [edgestyle,->,out=-45,in=-225] (X2.south east) to (GII5.north west);

\draw [edgestyle,->,out=-45,in=-225] (X3.south east) to (GII4.north west);

\draw [edgestyle,->,out=-45,in=-180] (X4.south east) to (Gend);
\draw [edgestyle,->,out=-45,in=-180] (X5.south east) to (Gend);

\draw [usestyle,->] (GII1.north) to (X1.south);
\draw [usestyle,->] (GII2.north) to (X2.south);
\draw [usestyle,->] (GII3.north) to (X3.south);
\draw [usestyle,->] (GII4.north) to (X4.south);
\draw [usestyle,->] (GII5.north) to (X5.south);

\end{tikzpicture}}
 \\[0.5em]
\hspace*{-2mm}\textbf{Multi-agent ($\nWorkers=2$)}\\[-0.5em]

\resizebox{0.934\textwidth}{!}{
\begin{tikzpicture}
[framed,
background rectangle/.style={fill=gray!5,draw=black},
scale=1.0,
nodestyle/.style={circle,draw=black,thick,double,fill=yellow!20!white, inner sep=2pt},
nodestyleG/.style={,draw=black,fill=black!20!white, inner sep=2pt},
nodestyleGA/.style={,draw=black,thick,double,fill=yellow!20!white, inner sep=2pt},
edgestyle/.style={-,very thick,color=Goldenrod!20!black,opacity=0.4},
edgestyleC/.style={-,very thick, color=green!20!black,opacity=0.4},
actstyle/.style={,fill=yellow!30!white, inner sep=2pt},
actstyleA/.style={,fill=yellow!20!white, inner sep=2pt},
usestyle/.style={dashed,thick, color=white!20!black,opacity=0.4},
>=stealth]

\def\dx{1.6}
\def\dy{1}

\coordinate (T0) at (0*\dx,0);
\coordinate (A0) at (0*\dx,-0.5*\dy);
\coordinate (X0) at (0*\dx,-1.1*\dy);
\coordinate (GI0) at (0*\dx,-2.6\dy);
\coordinate (SI0) at (0*\dx,-3.1*\dy);
\coordinate (GII0) at (0*\dx,-4.1\dy);
\coordinate (SII0) at (0*\dx,-4.6*\dy);
\coordinate (S0) at (0*\dx,-5.5*\dy);

\coordinate (T1) at (1*\dx,0);
\coordinate (A1) at (1*\dx,-0.5*\dy);
\coordinate (X1) at (1*\dx,-1.1\dy);
\coordinate (GI1) at (1*\dx,-2.6\dy);
\coordinate (SI1) at (1*\dx,-3.1*\dy);
\coordinate (GII1) at (1*\dx,-4.1\dy);
\coordinate (SII1) at (1*\dx,-4.6*\dy);
\coordinate (S1) at (1*\dx,-5.5*\dy);

\coordinate (T2) at (2*\dx,0);
\coordinate (A2) at (2*\dx,-0.5*\dy);
\coordinate (X2) at (2*\dx,-1.1\dy);
\coordinate (GI2) at (2*\dx,-2.6\dy);
\coordinate (SI2) at (2*\dx,-3.1*\dy);
\coordinate (GII2) at (2*\dx,-4.1\dy);
\coordinate (SII2) at (2*\dx,-4.6*\dy);
\coordinate (S2) at (2*\dx,-5.5*\dy);

\coordinate (T3) at (3*\dx,0);
\coordinate (A3) at (3*\dx,-0.5*\dy);
\coordinate (X3) at (3*\dx,-1.1\dy);
\coordinate (GI3) at (3*\dx,-2.6\dy);
\coordinate (SI3) at (3*\dx,-3.1*\dy);
\coordinate (GII3) at (3*\dx,-4.1\dy);
\coordinate (SII3) at (3*\dx,-4.6*\dy);
\coordinate (S3) at (3*\dx,-5.5*\dy);

\coordinate (T4) at (4*\dx,0);
\coordinate (A4) at (4*\dx,-0.5*\dy);
\coordinate (X4) at (4*\dx,-1.1\dy);
\coordinate (GI4) at (4*\dx,-2.6\dy);
\coordinate (SI4) at (4*\dx,-3.1*\dy);
\coordinate (GII4) at (4*\dx,-4.1\dy);
\coordinate (SII4) at (4*\dx,-4.6*\dy);
\coordinate (S4) at (4*\dx,-5.5*\dy);

\coordinate (T5) at (5*\dx,0);
\coordinate (A5) at (5*\dx,-0.5*\dy);
\coordinate (X5) at (5*\dx,-1.1\dy);
\coordinate (GI5) at (5*\dx,-2.6\dy);
\coordinate (SI5) at (5*\dx,-3.1*\dy);
\coordinate (GII5) at (5*\dx,-4.1\dy);
\coordinate (SII5) at (5*\dx,-4.6*\dy);
\coordinate (S5) at (5*\dx,-5.5*\dy);

\coordinate (T6) at (6*\dx,0);
\coordinate (A6) at (6*\dx,-0.5*\dy);
\coordinate (X6) at (6*\dx,-1.1\dy);
\coordinate (GI6) at (6*\dx,-2.6\dy);
\coordinate (SI6) at (6*\dx,-3.1*\dy);
\coordinate (GII6) at (6*\dx,-4.1\dy);
\coordinate (SII6) at (6*\dx,-4.6*\dy);
\coordinate (S6) at (6*\dx,-5.5*\dy);

\coordinate (Tlast) at (7*\dx,0);
\coordinate (Alast) at (7*\dx,-0.5*\dy);
\coordinate (Xlast) at (7*\dx,-1.1\dy);
\coordinate (GIlast) at (7*\dx,-2.6\dy);
\coordinate (SIlast) at (7*\dx,-3.1*\dy);
\coordinate (GIIlast) at (7*\dx,-4.1\dy);
\coordinate (SIIlast) at (7*\dx,-4.6*\dy);
\coordinate (Slast) at (7*\dx,-5.5*\dy);

\coordinate (Gend) at (7*\dx,-2.3*\dy);
\coordinate (Aux1) at (3.5*\dx,-3.4*\dy);
\coordinate (Aux2) at (4.1*\dx,-3.4*\dy);

\node (T0) at (T0) [left] {Time $\run$};
\node (A0) at (A0) [left] {Active agent $i(\run)$};
\node (X0) at (X0) [left] {Point played $\vt[\state]$};

\node (GI0) at (GI0)  [left,text = black]  {Gradients received by $1$};
\node (SI0) at (SI0)  [left,text = black]  {$\vwt[\set][1]$};

\node (GII0) at (GII0)  [left,text = black]  {Gradients received by $2$};
\node (SII0) at (SII0)  [left,text = black]  {$\vwt[\set][2]$};
\node (S0) at (S0)  [left,text = black]  {$\vt[\set]$};

\node (T1) at (T1)  {$1$};
\node (A1) at (A1)  [actstyleA] {$2$};
\node (X1) at (X1)  [nodestyle] {$\vt[\state][1]$};

\node (GI1) at (GI1) [nodestyleG] {};
\node (SI1) at (SI1) {$\emptyset$};

\node (GII1) at (GII1) [nodestyleGA] {};
\node (SII1) at (SII1) [actstyle] {$\emptyset$};

\node (S1) at (S1)  [actstyle] {$\emptyset$};

\node (T2) at (T2)  {$2$};
\node (A2) at (A2) [actstyleA] {$1$};
\node (X2) at (X2)  [nodestyle] {$\vt[\state][2]$};

\node (GI2) at (GI2) [nodestyleGA] {};
\node (SI2) at (SI2) [actstyle] {$\emptyset$};

\node (GII2) at (GII2) [nodestyleG] {};
\node (SII2) at (SII2) {$\emptyset$};

\node (S2) at (S2) [actstyle] {$\emptyset$};

\node (T3) at (T3)  {$3$};
\node (A3) at (A3) [actstyleA] {$1$};
\node (X3) at (X3)  [nodestyle] {$\vt[\state][3]$};

\node (GI3) at (GI3) [nodestyleGA] {$\boldsymbol{\vt[\gvec][2]}$};
\node (SI3) at (SI3) [actstyle] {$\{2\}$};

\node (GII3) at (GII3) [nodestyleG] {$\boldsymbol{\vt[\gvec][1]}$};
\node (SII3) at (SII3) {$\{1\}$};

\node (S3) at (S3) [actstyle] {$\{2\}$};

\node (T4) at (T4)  {$4$};
\node (A4) at (A4)  [actstyleA] {$2$};
\node (X4) at (X4)  [nodestyle] {$\vt[\state][4]$};

\node (GI4) at (GI4) [nodestyleG] {$\vt[\gvec][2],\boldsymbol{\vt[\gvec][3]}$};
\node (SI4) at (SI4) {$\{2,3\}$};

\node (GII4) at (GII4) [nodestyleGA] {$\vt[\gvec][1]$};
\node (SII4) at (SII4) [actstyle] {$\{1\}$};

\node (S4) at (S4)   [actstyle] {$\{1\}$};

\node (T5) at (T5)  {$5$};
\node (A5) at (A5)  [actstyleA] {$1$};
\node (X5) at (X5)  [nodestyle] {$\vt[\state][5]$};

\node (GI5) at (GI5) [nodestyleGA] {$\vt[\gvec][2],\vt[\gvec][3],\boldsymbol{\vt[\gvec][1]}$};
\node (SI5) at (SI5) [actstyle] {$\{2,3,1\}$};

\node (GII5) at (GII5) [nodestyleG] {$\vt[\gvec][1],\boldsymbol{\vt[\gvec][2]},\boldsymbol{\vt[\gvec][3]}$};
\node (SII5) at (SII5) {$\{1,2,3\}$};

\node (S5) at (S5) [actstyle] {$\{2,3,1\}$};

\node (Tlast) at (Tlast) {$\dots$};
\node (Xlast) at (Xlast) {$\dots$};
\node (GIlast) at (GIlast) {$\dots$};
\node (GIIlast) at (GIIlast) {$\dots$};
\node (Slast) at (Slast) {$\dots$};

\draw [edgestyle,->,out=-45,in=-225] (X1.south east) to (GII3.north west);
\draw [edgestyleC,->,out=35,in=255] (GII3.north east) to (GI5.south west);

\draw [edgestyle,->,out=-45,in=-225] (X2.south east) to (GI3.north west);
\draw [edgestyleC,->,out=-55,in=-215] (GI3.south east) to (GII5.north west);

\draw [edgestyle,->,out=-45,in=-225] (X3.south east) to (GI4.north west);
\draw [edgestyleC,->,out=-80,in=-215] (GI4.south east) to (GII5.north west);

\draw [edgestyle,->,out=-45,in=-190] (X4.south east) to (Gend);
\draw [edgestyle,->,out=-45,in=-190] (X5.south east) to (Gend);

\draw [usestyle,->,out=110,in=-110] (GII1.north) to (X1.south);
\draw [usestyle,->] (GI2.north) to (X2.south);
\draw [usestyle,->] (GI3.north) to (X3.south);
\draw [usestyle,->,out=140,in=-120] (GII4.north west) to (X4.south west);
\draw [usestyle,->] (GI5.north) to (X5.south);

\end{tikzpicture}
} \vspace{-1em}
\caption{
Illustration of the type of feedback sequences that may occur in a multi-agent setting.
In the standard single-agent case, the feedback sequence $\vt[\set]$, $\run=\running$, is necessarily non-decreasing: even though the feedback may not arrive with the same order as the corresponding actions, the \emph{number} of available gradients can only grow. This no longer holds when multiple agents are involved in the optimization process.
}
\vspace{-1.6em}
\label{fig:feedback}
\end{figure}
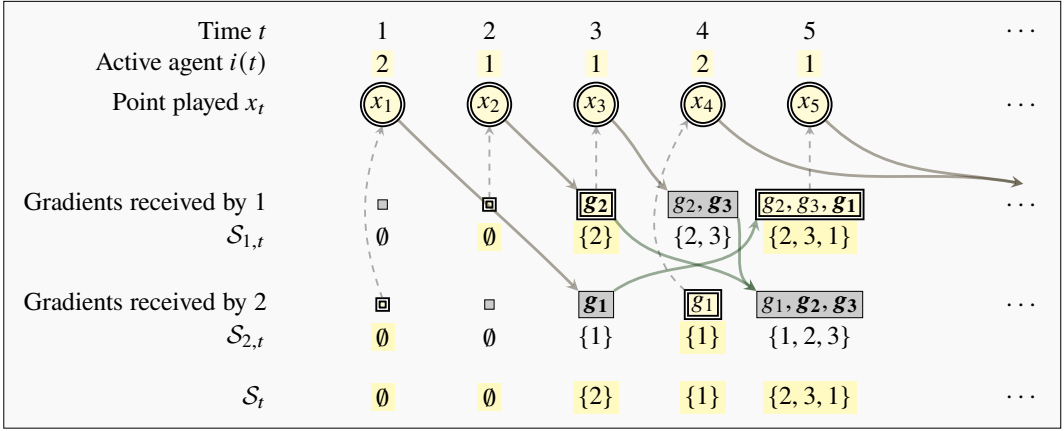

\subsection{Main Challenges: Non-Monotonicity of Feedback Sequence and Lack of Synchronization}

We now highlight two prominent features of our asynchronous online optimization framework that distinguish it from the large corpus of literature on \emph{single-agent} online learning with delays.
First, from the point of view of \emph{any} single agent $\worker$, the feedback sequence $(\vwt[\set])_{\run\in\oneto{\nRuns}}$ is non-decreasing by definition, \ie $\vwt[\set]\subseteq\vwt[\set][\worker][\run+1]$ for all $\run=\running$.
However, this may not be the case for the \emph{active} feedback sequence $(\vt[\set])_{\run\in\oneto{\nRuns}}$ which is in general \emph{non-monotone}.
In fact, due to communication delays, the same element of feedback may not arrive at each node at the same time.
Thus, as the active agent differs from one time slot to another, a timestamp contained in $\vt[\set]$ may not belong to $\update[\set]$ (see \cref{fig:feedback} for an illustration).
This leads to the first challenge we seek to overcome:

\begin{enumerate}
[leftmargin=*,label={Challenge \Roman*.}]
\item
Design learning algorithms capable of handling non-monotone feedback sequences.
\end{enumerate}

\begin{remark*}
We stress here that this issue is inextricably tied to the multi-agent character of our model.
In the single-agent case, $\vt[\set]$ is \emph{de facto} monotone, so this problem does not arise.
\end{remark*}

Second, in our model the agents only communicate when they exchange the received feedback.
Without additional coordination, the network does not maintain any global information about the evolution of the learning process.
In particular,
for reasons of privacy and information security,
we do not assume that agents have access to a global counter that indicates how many actions have been played at any given stage (as this could carry sensitive, identification-prone information).
Similarly, other quantities of interest, such as the current cumulative unavailability $\vt[\totaldelay]$ defined below, are also unavailable to each agent.
This leads to our second challenge:
\begin{enumerate}
[leftmargin=*,label={Challenge \Roman*.}]
\setcounter{enumi}{1}
\item
Dispense of the need to know $\run$ or other non-local information.
\end{enumerate}

As shown above, the lack of network synchronization, along with the non-monotonicity of the active feedback sequence, poses crucial challenges to both the design of the algorithms and the accompanying analysis.
In face of these, we introduce in \cref{subsec:faithful} an appropriate reordering of time that enables us to go beyond the algorithms developed for the single-agent setting.

\para{Quantifying the impact of delays}
As illustrated in \cref{fig:feedback}, having multiple agents also means that we can no longer associate a single delay to each individual feedback element.
This explains our choice of focusing on the available subgradients instead of the actual delays, which largely simplifies the description of the framework.
The delays, in turn, are still implicitly encoded in the sets $(\vwt[\set])$.
To quantify their effect, it will be convenient to consider the following measures:
\begin{itemize}[topsep=0.4em,itemsep=0.1em]
    \item The \emph{maximum delay} $\delaybound$ is the longest wait to receive an element of feedback:
    $\delaybound = \min \{ \tau : \oneto{\run-\delaybound-1}\subseteq\vt[\set] \text{ for all } {\run\in\oneto{\nRuns}} \}$.
    \item The \emph{maximum unavailability} $\outbound$  of the feedback is defined as $\outbound=\max_{\run\in\oneto{\nRuns}}\card(\vt[\setout])$.
    This is the maximum number of subgradients that could have---but otherwise \emph{haven't}---been communicated to an active agent at activation time.
    It is straightforward to see that $\outbound\le\delaybound$.\footnote{
    For any $\run \in \oneto{\nRuns} $, we have $\oneto{\run-\delaybound-1} \subseteq \vt[\set][\run]$  and thus $\vt[\setout][\run]=\setexclude{\oneto{\run-1}}{\vt[\set][\run]} \subseteq \intinterval{\run-\delaybound-1}{\run-1}$ which consists of $\delaybound$ elements.
    On the other hand, if, for some reason, one feedback is \emph{lost}, say the first one, then, the maximum delay is $\tau=\nRuns-1$ while the maximum unavailability is $\outbound=1$, in which case $\outbound \ll \delaybound$.}
    \item The \emph{cumulative unavailability} $\vt[\totaldelay]$ is given by
    $\vt[\totaldelay]=\sum_{\runalt=\start}^{\run}\card(\vt[\setout][\runalt])$.
    This generalizes the sum of delays to the multi-agent case;
    clearly, $\vt[\totaldelay]\le\outbound\run$.
\end{itemize}

\section{Delayed Dual Averaging and Faithful Permutations}
\label{sec:DDA}

In this section we present the main algorithmic template that we will use to address the limitations identified in the previous section, and which we call \acli{DDA}.
We also introduce the notion of ``faithful permuation'', which plays a major role in the analysis to come, as illustrated by the basic regret bound of \cref{thm:delay-regret} below.

\subsection{Delayed Dual Averaging}

To begin, recall that at each time $\run$, an agent computes the point $\vt[\state]$ using a collection of \emph{previously received subgradients} $\setdef{\vt[\gvec][\runalt]}{\runalt\in \vt[\set]}$ where $ \vt[\set]\subseteq\oneto{\run-1}$ represents the set of timestamps corresponding to the subgradients used by the active agent to produce $\vt[\state]$.
Put differently, if $\runalt \in \vt[\set]$, then  $\vt[\gvec][\runalt]\in\subd\vt[\obj][\runalt](\vt[\state][\runalt])$ has been used in the computation leading to playing $\vt[\state]$ at time $\run$.
On the other hand, $\vt[\setout]=\setexclude{\oneto{\run-1}}{\vt[\set]}$ collects the timestamps of the subgradients that are missing for the computation of $\vt[\state]$ due to delays.

Our candidate algorithm for this asynchronous setup builds on the \ac{DA} master template
\begin{equation}
\label{eq:dual-avg}
\tag{DA}
	\vt[\state] = \argmin_{\point\in\points}   
	\braces*{
	\sum_{\runalt <\run }\product{\vt[\gvec][\runalt]}{\point} + \frac{1}{\vt[\step]} \hreg(\point)
	}
\end{equation}
where
$\vt[\step]\geq0$ is a learning rate parameter
and
$\hreg\from\points\to\R$ is the method's \emph{regularizer}, assumed itself to be continuous and $1$-strongly convex relative to some ambient norm $\norm{\cdot}$ on $\vecspace$.
This algorithm is a version of ``\ac{FTRL} with linearized losses'' \citep{SSS06,SS11};
our terminology instead follows \citet{Nes09} and \citet{Xiao10} and is meant to clarify that we will be working with first-order feedback.

\begin{examples}
The two most popular candidates for $\hreg$ are the squared $\ell_2$-norm $\hreg(\point)=\norm{\point}_2^2/2$ for arbitrary closed convex constrained set $\points$ and the negative entropy $\hreg(\point)=\sum_{\indg}\point[\indg]\log(\point[\indg])$ for simplex constraints $\points=\setdef{\point}{\sum_{\indg=1}^{\vdim}\point[\indg]=1}$ (here $\point[\indg]$ denotes the $\indg^{th}$ coordinate of $\point$).
The first example is $1$-strongly convex relative to the Euclidean norm $\norm{\cdot} = \norm{\cdot}_{2}$, while the second one is $1$-strongly convex relative to the $\ell^{1}$ norm $\norm{\cdot}_{1}$ on the simplex.
\hfill
\endenv
\end{examples}

Of course, as stated, \eqref{eq:dual-avg} is not a practical algorithm in our setup because the active agent $\activeworker{\run}$ only has at its disposal the subgradients $\setdef{\vt[\gvec][\runalt]}{\runalt\in \vt[\set]}$ at time $\run$.
In view of this, we will consider the \acdef{DDA} policy
\begin{equation}
\label{eq:delayed-dual-avg}
\tag{DDA}
\vt[\state]
	= \argmin_{\point\in\points} \braces*{ \sum_{\runalt\in\vt[\set]}\product{\vt[\gvec][\runalt]}{\point} + \frac{\hreg(\point)}{\vt[\step]} }
	= \prox\left(- \vt[\step] \sum_{\runalt\in\vt[\set]}\vt[\gvec][\runalt]\right),
\end{equation}
where, for concision, we have now set
\begin{equation}
\notag
\prox(\dvec)
	= \argmax_{\point\in\points} \{ \braket{\dvec}{\point} - \hreg(\point) \},
	\quad
	\forall\dvec\in\dspace,
\end{equation}
for the \emph{mirror map} induced by $\hreg$.
An intuitive motivation for our algorithmic choice is that every feedback/gradient is put on a equal footing no matter which agent generated the corresponding action or the delay it suffers.\footnote{See \cref{subsec:MD-DA} for more discussion.}
Moreover, as long as $\vt[\step]$ can be computed locally,
\eqref{eq:delayed-dual-avg} can indeed be implemented independently by each agent of the network,
without requiring a global clock;
for a pseudocode implementation, see \cref{algo:DDA-dist}.

\begin{algorithm}[t]
    \caption{\eqref{eq:delayed-dual-avg} -- from the point of view of agent $\worker$  }
     \label{algo:DDA-dist}
 \begin{algorithmic}[1]
     \STATE {\bfseries Initialize:}
         $\gvecs_\worker \subs \emptyset$, $\run\subs1$.
    \WHILE{not stopped}
    \STATE {\bfseries asynchronously} {receive feedback $\vt[\gvec][\runalt]$ from time $\runalt$
    \STATE \hspace{\algorithmicindent} $\gvecs_\worker \subs \gvecs_\worker \union\{\runalt\}$
    } \label{algo:DDA-dist:receive}
    \STATE \hspace{\algorithmicindent} Relay $\vt[\gvec][\runalt]$ if necessary
    \IF{the agent becomes active, \ie $\activeworker{\run}=\worker$}
     \STATE $\vt[\set] \subs \gvecs_\worker$
     \STATE Update $\vt[\step]$ and play $\vt[\state] =  \argmin_{\point\in\points} \sum_{\runalt\in\vt[\set]}\product{\vt[\gvec][\runalt]}{\point} + \frac{\hreg(\point)}{\vt[\step]}$
    \ENDIF
    \ENDWHILE
 \end{algorithmic}
 \end{algorithm}

\subsection{Dependencies and Faithful Permutations}
\label{subsec:faithful}

A crucial challenge in \eqref{eq:delayed-dual-avg} is the choice of $\vt[\step]$.
Indeed, the standard analysis of \ac{DA} requires the learning rate sequence to be non-increasing, a property that can hardly be ensured in our situation due to the non-monotonicity of the active feedback sequence and the lack of network synchronization.
To sidestep this issue, we need to rethink what ``time'', or the ordering of the timestamps means to \eqref{eq:delayed-dual-avg}, and how this can be leveraged to construct a valid algorithm.

Our starting point will be to redefine the algorithm's internal clock (and corresponding learning rate) based exclusively on the active timestamp sets $\vt[\set]$, $\run=\running,\nRuns$.
To that end, we will start by viewing each timestamp as a node in a ``causal graph'', and we will include a directed edge from $\runalt$ to $\run$ if and only if $\runalt\in\vt[\set]$:
this represents a ``causal dependency'' of $\run$ on $\runalt$ in the sense that the gradient $\vt[\gvec][\runalt]$ has been used to define $\vt[\state]$ (cf.~\cref{fig:DAG}).
We will refer to this graph as the \emph{dependency graph} associated to the active feedback sequence $\vt[\set]$, $\run=\running,\nRuns$, and we will denote it by $\graph$;
for clarity, we also stress here that we do not assume that this structure is known to the agents.

A first important observation is that the default time ordering $\run=\running$ represents a topological sort of $\graph$, \ie a linear ordering of its vertices such that $\runalt < \run$ if there exists a directed edge $\runalt\leadsto\run$ in $\graph$.
\footnote{In particular, this property implies that $\graph$ is a \acf{DAG}.}
Second, since the update structure of \eqref{eq:delayed-dual-avg} is determined entirely by $\graph$ and the value of $\vt[\step]$ at each vertex of $\graph$, it follows that any reshuffling of time that respects the causal structure of $\graph$ should be an equally viable alternative for the algorithm.
We formalize this idea below via the notion of a \emph{faithful permutation}.

\begin{figure}[t]
\centering
\begin{tikzpicture}
[scale=1.0,
nodestyle/.style={circle,draw=black,thick,inner sep=2pt},
edgestyle/.style={-,thick,color=black,opacity=1},
>=stealth]

\def\dx{1}
\def\dy{1.4}

\coordinate (N1) at (0*\dx,0*\dy);
\coordinate (N2) at (1*\dx,1*\dy);
\coordinate (N3) at (2*\dx,0*\dy);
\coordinate (N4) at (3*\dx,1*\dy);
\coordinate (N5) at (4*\dx,0*\dy);

\node (N1) at (N1) [nodestyle] {$1$};
\node (N2) at (N2) [nodestyle] {$2$};
\node (N3) at (N3) [nodestyle] {$3$};
\node (N4) at (N4) [nodestyle] {$4$};
\node (N5) at (N5) [nodestyle] {$5$};

\draw [edgestyle,->] (N1) to (N4);
\draw [edgestyle,->] (N1) to (N3);
\draw [edgestyle,->] (N2) to (N5);
\draw [edgestyle,->] (N3) to (N4);
\draw [edgestyle,->] (N3) to (N5);
\draw [edgestyle,->,out=-30,in=-150] (N1) to (N5);

\end{tikzpicture}
 \hspace{1.5cm}
\begin{tikzpicture}
[scale=1.0,
nodestyle/.style={circle,draw=black,thick,inner sep=2pt},
edgestyle/.style={-,thick,color=black,opacity=1},
>=stealth]

\def\dx{1}
\def\dy{1.4}

\coordinate (N1) at (0*\dx,0*\dy);
\coordinate (N2) at (1*\dx,1*\dy);
\coordinate (N3) at (2*\dx,0*\dy);
\coordinate (N4) at (3*\dx,1*\dy);
\coordinate (N5) at (4*\dx,0*\dy);

\node (N1) at (N1) [nodestyle] {$1$};
\node (N2) at (N2) [nodestyle] {$2$};
\node (N3) at (N3) [nodestyle] {$3$};
\node (N4) at (N4) [nodestyle] {$4$};
\node (N5) at (N5) [nodestyle] {$5$};

\draw [edgestyle,->] (N1) to (N4);
\draw [edgestyle,->] (N2) to (N3);
\draw [edgestyle,->] (N2) to (N5);
\draw [edgestyle,->] (N3) to (N5);
\draw [edgestyle,->,out=-30,in=-150] (N1) to (N5);

\end{tikzpicture}
 \vspace{-0.4em}
\caption{The dependency graphs for the two examples of \cref{fig:feedback};
the left and right graphs correspond respectively to the single- and multi-agent examples presented therein.
The active feedback at time $\run$ is exactly the set of in-neighbors of the corresponding vertex.
}
\label{fig:DAG}
\vspace{-1.4em}
\end{figure}
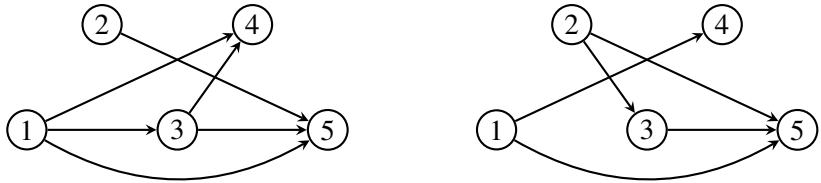

\begin{definition}
[Faithful permutations]
\label{def:faithful}
A permutation $\Dpermu$ of $\{\running,\nRuns\}$ is \emph{faithful} if and only if, for all $\runalt,\run=1,\dotsc,\nRuns$, we have
\begin{equation}
\label{eq:faithful}
\runalt\in\vt[\set]
	\implies
	\Dpermu^{-1}(\runalt)<\Dpermu^{-1}(\run).
\end{equation}
Equivalently, $\Dpermu$ is faithful if and only if $\Dpermu(1),\dotsc,\Dpermu(\nRuns)$ is a topological ordering of $\graph$.
\end{definition}

\Cref{def:faithful} means that the feedback used at time $\Dpermu(\run)$ (whose time indices are in $\vt[\set][\Dpermu(\run)]$) form a subset of  $\{\vt[\gvec][\Dpermu(\start)],\ldots,\vt[\gvec][\Dpermu(\run-1)]\}$. Indeed, if $\Dpermu(\runalt)\in\vt[\set][\Dpermu(\run)]$, then $\runalt=\Dpermu^{-1}(\Dpermu(\runalt)) < \Dpermu^{-1}(\Dpermu(\run)) = \run $, \ie $s\in\{\start,\ldots,\run-1\}$.
Thus, a faithful permutation can be seen as a reordering of the time that would still be compatible with the feedback used by each agent at every time.
We illustrate this notion with two examples below:

\begin{examples}
Clearly, the identity permutation $\run\mapsto\run$ is always faithful.
More interestingly, in the single-agent setting, we can define the \emph{ordering-by-arrival} as follows:
if the $\grun$-th received subgradient originates from round $\run$---\ie $\vt[\gvec] \in \subd\vt[\obj](\vt[\state])$---we set $\Dpermu(\grun) = \run$, so $\vt[\gvec]$ is the $\Dpermu^{-1}(\run)$-th received gradient.
\footnote{If multiple gradients arrive at a given round, we resolve ties arbitrarily;
this ambiguity in the definition of $\Dpermu$ plays no role in the analysis.}
In this notation, the timestamps of all feedback received \emph{before} $\vt[\gvec][\run]$ can be written as $\vt[\setrec] \defeq \{ \Dpermu(1),\dotsc,\Dpermu(\Dpermu^{-1}(\run)-1) \}$ for that $\vt[\gvec][\run]$ is the $\Dpermu^{-1}(\run)$-th feedback.
Along with the inclusion $\vt[\set]\subseteq\vt[\setrec]$, which holds because $\vt[\gvec]$ is necessarily computed with gradients arriving before itself, we see that $\Dpermu$ is indeed a faithful permutation.
\hfill
\endenv
\end{examples}

\begin{remark*}
A similar notion was considered by \citet{ZS20}, but for a completely different purpose.
There, the authors aimed to provide optimal algorithms for \emph{single-agent} adversarial bandits with delays.
They defined a ``dependency-preserving permutation'' exactly as the inverse of what we call a faithful permutation, and they used this notion to analyze an algorithm that can ``skip'' certain rounds of feedback
when tuning the algorithm's learning rate.
Our definition is motivated by---and tailored to---the multi-agent setting, where the non-monotonicity of the active feedback sequence $\vt[\set]$ plays a major role (we recall that this phenomenon cannot arise in the single-agent case).
These elements are altogether absent in the single-agent considerations of \citet{ZS20}.
\end{remark*}

\subsection{Bounding the Regret of Delayed Dual Averaging}
\label{subsec:reg-DDDA}

We are now in a position to state and prove our main, data-dependent regret guarantee for \eqref{eq:delayed-dual-avg} when run with a learning rate that is non-increasing \emph{along a faithful permutation}.
For simplicity, we assume throughout the sequel that  $\hreg$ is non-negative.
This is possible because $\hreg$ is strongly convex and we can thus always replace $\hreg$ by the non-negative function $\hreg-\min\hreg$ without affecting our algorithms.

Similar to $\oneto{\run}$ and $\vt[\setout]$, for a faithful permutation $\Dpermu$, we also define the set of the first $\run$ elements under the new ordering and the set of unavailable elements induced by this ordering as
\begin{equation}
	\notag
	\oneto{\run}^{\Dpermu} = \{\Dpermu(1),\ldots,\Dpermu(\run)\}
	\quad
	\text{and}
	\quad
	\vt[\setout^{\Dpermu}] = \setexclude{\oneto{\run-1}^{\Dpermu}}{\vt[\set][\Dpermu(\run)]}.
\end{equation}
We have the following theorem concerning the regret of \eqref{eq:delayed-dual-avg}.

\begin{restatable}{theorem}{DelayRegret}
\label{thm:delay-regret}
Let $\Dpermu$ be a faithful permutation of $\{\start,\ldots,\nRuns\}$, and assume that \acl{DDA} \eqref{eq:delayed-dual-avg} is run with a learning rate $\vt[\step]$, $\run=\running$, such that $\vt[\step][\Dpermu(\run+1)]\le\vt[\step][\Dpermu(\run)]$ for all $\run$.
Then the algorithm enjoys the regret bound
\begin{equation}
\label{eq:delay-bound}
	\reg_{\nRuns}(\comp)
	\le
	\frac{\hreg(\comp)}{\vt[\step][\Dpermu(\nRuns)]}
	+ \frac{1}{2}\sum_{\run=\start}^{\nRuns}
	\vt[\step][\Dpermu(\run)]
	\left(\dnorm{\vt[\gvec][\Dpermu(\run)]}^2
		+2\dnorm{\vt[\gvec][\Dpermu(\run)]}
		\sum_{\runalt\in
		\vt[\setout^{\Dpermu}]}\dnorm{\vt[\gvec][\runalt]}\right).
\end{equation}
\end{restatable}

\cref{thm:delay-regret} provides a template regret bound that forms the basis of all the upcoming analysis.
To begin, we note that the bound \eqref{eq:delay-bound} consists of 
the usual online dual averaging bound (\cf \cref{apx:DA}) plus
a term containing $ \sum_{\runalt\in\vt[\setout^{\Dpermu}]}\dnorm{\vt[\gvec][\runalt]}$ that reflects the impact of delay.
Similar decompositions can be found in \citet{MS14}, \citet{JGS16} and \citet{JGS19} respectively for online gradient descent, online mirror descent, and dual averaging.
\footnote{In \cite{MS14}, the authors work with the specific setting of coordinate-wise unconstrained gradient methods. Therefore, instead of products of norms they have products of scalars in their analysis.}
These papers focused on the single-agent (shared-memory) setting and conducted the analysis by either choosing $\Dpermu$ as the identity or the ordering by arrival.
\cref{thm:delay-regret} thus extends these results by providing a larger class of possible learning rate policies, which enables us to devise efficient and truly implementable learning rate update schemes for the fully decentralized setting in \cref{sec:variable}.

\vskip 0.5em
\begin{proof}[Proof of \cref{thm:delay-regret}]
As usual, the first step is to bound the algorithm's regret by its linearized counterpart, viz.
\begin{equation}
	\notag
	\reg_{\nRuns}(\comp)
	=\sum_{\run=\start}^{\nRuns}\vt[\obj](\vt)-\vt[\obj](\comp)
	\le\sum_{\run=\start}^{\nRuns}\product{\vt[\gvec]}{\vt[\state]-\comp}.
\end{equation}
To proceed, we leverage the so-called ``perturbed iterate'' framework for analyzing asynchronous algorithms in the spirit of \cite{MPPR+15} and \cite{JGS19}.
Formally, we define the following virtual iterate sequence
\begin{equation}
\notag
\vt[\virtual] = \argmin_{\point\in\points}
\sum_{\runalt=1}^{\run-1}\product{\vt[\gvec][\Dpermu(\runalt)]}{\point} + \frac{\hreg(\point)}{\vt[\step][\Dpermu(\run)]}.
\end{equation}
and decompose the sum as:
\begin{equation}
\label{eq:delay-proof-decomp}
	\sum_{\run=\start}^{\nRuns}\product{\vt[\gvec]}{\vt[\state]-\comp}
	= 
  \underbrace{ \sum_{\run=\start}^{\nRuns}\product{\vt[\gvec]}{\vt[\virtual][\Dpermu^{-1}(\run)]-\comp} }_{(a)}
	+ \underbrace{\sum_{\run=\start}^{\nRuns}\product{\vt[\gvec]}{\vt[\state]-\vt[\virtual][\Dpermu^{-1}(\run)]}}_{(b)}.
\end{equation}
We now proceed to bound each term separately.

\para{Term (\emph{a})}
The first term is exactly the linearized regret of the iterates $\vt[\virtual][\start],\ldots,\vt[\virtual][\nRuns]$ that is constructed with the feedback $\vt[\gvec][\Dpermu(\start)],\ldots,\vt[\gvec][\Dpermu(\nRuns)]$.
Thus, by analyzing the regret incurred by the dual averaging algorithm \eqref{eq:dual-avg} \emph{without} delays, we show in \cref{apx:DA} that this term can be bounded as
\begin{equation}
	\label{eq:delay-proof-virtual-regret}
	\sum_{\run=\start}^{\nRuns}\product{\vt[\gvec]}{\vt[\virtual][\Dpermu^{-1}(\run)]-\comp}
	=\sum_{\run=\start}^{\nRuns}\product{\vt[\gvec][\Dpermu(\run)]}{\vt[\virtual]-\comp}
	\le
	\frac{\hreg(\comp)}{\vt[\step][\Dpermu(\nRuns)]}
	+ \frac{1}{2}\sum_{\run=\start}^{\nRuns}\vt[\step][\Dpermu(\run)]\dnorm{\vt[\gvec][\Dpermu(\run)]}^2.
\end{equation}
Note that the assumption on the learning rate sequence ($\vt[\step][\Dpermu(\run+1)]\leq \vt[\step][\Dpermu(\run)]$) is crucial for the derivation of this bound.
\smallskip

\para{Term (\emph{b})}
For the second term, we would like to bound the distance between $\vt$ and $\vt[\virtual][\Dpermu^{-1}(\run)]$, or equivalently, the distance between $\vt[\state][\Dpermu(\run)]$ and $\vt[\virtual]$ (since we shall consider all the $\run\in\{\start,\ldots,\nRuns\}$).
To that end, we write
\begin{align}
\notag
\vt[\state][\Dpermu(\run)]
	= \prox\parens*{-\vt[\step][\Dpermu(\run)]\sum_{\runalt\in\vt[\set][\Dpermu(\run)]}\vt[\gvec][\runalt]}
	\quad
	\text{and}
	\quad
\vt[\virtual]
	= \prox\parens*{-\vt[\step][\Dpermu(\run)]\sum_{\runalt\in\oneto{\run-1}^{\Dpermu}}\vt[\gvec][\runalt]}.
\end{align}
Since the permutation $\Dpermu$ is faithful, we have $\vt[\set][\Dpermu(\run)]\subseteq\{\Dpermu(1),..,\Dpermu(\run-1)\}=\oneto{\run-1}^{\Dpermu}$.
We can then use the non-expansivity of the mirror map (\cref{lem:prox-nonexp} in \cref{apx:DA}) to get
\begin{equation}
	\notag
	\norm{\vt[\state][\Dpermu(\run)]-\vt[\virtual]}
	\le
	\dnorm{	\vt[\step][\Dpermu(\run)]
	\sum_{\runalt\in\vt[\setout^{\Dpermu}]}\vt[\gvec][\runalt]}
	\le
	\vt[\step][\Dpermu(\run)]
	\sum_{\runalt\in\vt[\setout^{\Dpermu}]}\dnorm{\vt[\gvec][\runalt]}.
\end{equation}
Subsequently,
\begin{align}
	\notag
	\sum_{\run=\start}^{\nRuns}\product{\vt[\gvec]}{\vt[\state]-\vt[\virtual][\Dpermu^{-1}(\run)]}
	&=
	\sum_{\run=\start}^{\nRuns}\product{\vt[\gvec][\Dpermu(\run)]}{\vt[\state][\Dpermu(\run)]-\vt[\virtual][\run]}\\
	\notag
	&\le
	\sum_{\run=\start}^{\nRuns}
	\dnorm{\vt[\gvec][\Dpermu(\run)]}
	\norm{\vt[\state][\Dpermu(\run)]-\vt[\virtual][\run]}\\
	\label{eq:delay-proof-diff}
	&\le
	\sum_{\run=\start}^{\nRuns}
	\vt[\step][\Dpermu(\run)]
	\dnorm{\vt[\gvec][\Dpermu(\run)]}
	\sum_{\runalt\in\vt[\setout^{\Dpermu}]}\dnorm{\vt[\gvec][\runalt]}.
\end{align}
Combining \eqref{eq:delay-proof-decomp}, \eqref{eq:delay-proof-virtual-regret} and \eqref{eq:delay-proof-diff}, we obtain the desired result.
\end{proof}

\subsection{Constant Learning Rate and Lag}

To get an idea of the optimal regret that the algorithm can achieve, we fix a constant learning rate  $\vt[\step]\equiv\step$, which we subsequently optimize to minimize the upper-bound on the regret.
To proceed, we define the \emph{cumulative lag} as
\begin{equation}
\label{eq:gsum-sigma}
\gsum^{\Dpermu}_{\run}
	= \sum_{\runalt=\start}^{\run}
	\left(
		\dnorm{\vt[\gvec][\Dpermu(\runalt)]}^2 
		+ 2\dnorm{\vt[\gvec][\Dpermu(\runalt)]}
		\sum_{\runano\in\vt[\setout^{\Dpermu}][\runalt]}\dnorm{\vt[\gvec][\runano]}
	\right)
	 = 
	\sum_{\runalt\in\oneto{\run}^{\Dpermu}}
	\dnorm{\vt[\gvec][\runalt]}^2 + 2
		\sum_{\{\runalt,\runano\}\in \vt[\setdel^{\Dpermu}]}
	\dnorm{\vt[\gvec][\runalt]}\dnorm{\vt[\gvec][\runano]},
\end{equation}
where
\begin{equation}
    \notag
    \vt[\setdel^{\Dpermu}] = \setdef{\{\Dpermu(\runalt),\runano\}}{\runalt\in\oneto{\run},\runano\in\vt[\setout^{\Dpermu}][\runalt]}.
\end{equation}
In words, $\{\alt{\runalt},\runano\}\in\vt[\setdel^{\Dpermu}]$ if 
\begin{enumerate*}[\itshape i\upshape)]
\item $\vt[\gvec][\runano]$ is not used to define $\vt[\state][\alt{\runalt}]$; and
\item after reordering by $\Dpermu$, $\runano$ comes before $\alt{\runalt}$ and $\alt{\runalt}$ comes before $\Dpermu(\run)$.
\end{enumerate*}
We also write $\vt[\gsum] = \vt[\gsum^{\idp}]$ for the lag associated to the standard time ordering
and define $\vt[\totaldelay^{\Dpermu}]=\card(\vt[\setdel^{\Dpermu}])=\sum_{\run=\start}^{\nRuns}\card(\vt[\setout^{\Dpermu}])$.

In the above, while $\vt[\totaldelay^{\Dpermu}]$ captures the ``total delay'' in terms of the reordering induced by $\Dpermu$,
the cumulative lag $\gsum^{\Dpermu}_{\run}$
regroups the actual errors caused by the inability of the learners to compensate the missing feedback, and gives the most fine-grained characterization of the effect of delayed feedback on the regret.
In the single-agent setting, \cite{JGS16} and \citet{MS14} also considered the same quantity but in the special case where $\Dpermu$ is the
ordering-by-arrival permutation discussed in \cref{subsec:faithful}.
In general, it is clear that $\vt[\gsum^{\Dpermu}]\le(\run+2\vt[\totaldelay^{\Dpermu}])\gbound^2$ provided that all subgradients are bounded in norm by $\gbound$;
moreover, if $\Dpermu$ is the identity permutation, we further have $\vt[\totaldelay^{\Dpermu}]=\vt[\totaldelay]\le\outbound\run$.
With all this in mind, a direct application of \cref{thm:delay-regret} gives the following series of more explicit bounds.

\begin{corollary}
\label{cor:delay-regret}
Let $\Dpermu$ be a faithful permutation and assume that \acl{DDA} \eqref{eq:delayed-dual-avg} is run with a constant learning rate $\step > 0$.
Then:
\begin{itemize}[leftmargin=1em]
	\item If $\dnorm{\vt[\gvec]}$ is uniformly bounded and $\step=\Theta\left(1/\sqrt{\max(1,\outbound)\nRuns}\right)$, then $\reg_{\nRuns}(\comp)=\bigoh\left(\sqrt{\max(1,\outbound)\nRuns}\right)$.
    \item If 
	$\dnorm{\vt[\gvec]}$ is uniformly bounded and $\step=\Theta\left(1/\sqrt{\max(\nRuns,\vt[\totaldelay^{\Dpermu}][\nRuns])}\right)$, then $\reg_{\nRuns}(\comp)=\bigoh\left(\sqrt{\max(\nRuns,\vt[\totaldelay^{\Dpermu}][\nRuns])}\right)$.
	\item If $\step=\Theta\left(1/\sqrt{\vt[\gsum^{\Dpermu}][\nRuns]}\right)$, then $\reg_{\nRuns}(\comp)=\bigoh\left(\sqrt{\vt[\gsum^{\Dpermu}][\nRuns]}\right)$.
\end{itemize}
\end{corollary}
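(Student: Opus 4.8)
The plan is to reduce all three bullets to \cref{thm:delay-regret-first} specialized to the constant schedule $\vt[\step]\equiv\step$. With this choice $\vt[\step][\nRuns]=\step$, and the summand appearing in the theorem's bound is \emph{exactly} the summand defining the lag in \eqref{eq:gsum}; hence the theorem collapses to
\begin{equation}
\notag
\reg_{\nRuns}(\comp) \le \frac{\hreg(\comp)}{\step} + \frac{\step}{2}\,\vt[\gsum][\nRuns] .
\end{equation}
This is a function of the form $A/\step+B\step$, which is minimized at $\step=\sqrt{A/B}$ with value $2\sqrt{AB}$. So choosing $\step=\Theta(1/\sqrt{\vt[\gsum][\nRuns]})$ balances the two terms and yields $\reg_{\nRuns}(\comp)=\bigoh(\sqrt{\hreg(\comp)\,\vt[\gsum][\nRuns]})=\bigoh(\sqrt{\vt[\gsum][\nRuns]})$, treating $\hreg(\comp)$ as a constant; this is the third bullet. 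The first two bullets then follow by over-bounding $\vt[\gsum][\nRuns]$ in terms of the coarser delay measures $\outbound$ and $\totaldelay$ under the additional assumption of bounded gradients.

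It therefore remains to estimate $\vt[\gsum][\nRuns]$, and I would assume $\dnorm{\vt[\gvec]}\le L$ for all $\run$. For the first bullet I use $\card(\vt[\setout])\le\outbound$, so each summand obeys $\dnorm{\vt[\gvec]}^2+2\dnorm{\vt[\gvec]}\sum_{\runalt\in\vt[\setout]}\dnorm{\vt[\gvec][\runalt]}\le L^2+2L^2\outbound$; summing over $\run=\running$ gives $\vt[\gsum][\nRuns]\le(1+2\outbound)L^2\nRuns=\bigoh(\outbound\nRuns)$, whence $\step=\Theta(1/\sqrt{\outbound\nRuns})$ delivers $\bigoh(\sqrt{\outbound\nRuns})$. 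For the second bullet I keep the two pieces of \eqref{eq:gsum} separate: the diagonal part satisfies $\sum_{\run}\dnorm{\vt[\gvec]}^2\le L^2\nRuns$, while the cross term is $2\sum_{\run}\dnorm{\vt[\gvec]}\sum_{\runalt\in\vt[\setout]}\dnorm{\vt[\gvec][\runalt]}\le 2L^2\sum_{\run}\card(\vt[\setout])=2L^2\totaldelay$ directly from the definition of $\totaldelay$. Hence $\vt[\gsum][\nRuns]\le L^2(\nRuns+2\totaldelay)$, and $\step=\Theta(1/\sqrt{\totaldelay})$ produces $\bigoh(\sqrt{\totaldelay}+\nRuns/\sqrt{\totaldelay})$.

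There is essentially one point that needs care rather than routine computation. In the second bullet the diagonal contribution $\sum_{\run}\dnorm{\vt[\gvec]}^2$ is $\Theta(\nRuns)$ and does not disappear, so the clean bound $\bigoh(\sqrt{\totaldelay})$ is recovered only in the genuinely delayed regime $\nRuns=\bigoh(\totaldelay)$ (equivalently, when essentially every feedback incurs at least unit unavailability); in general one only obtains $\bigoh(\sqrt{\nRuns+\totaldelay})$, which matches the $\bigoh(\sqrt{\totaldelay})$ convention of \citet{QD15} in which the aggregated-delay count dominates $\nRuns$. The second thing worth flagging — and the reason this is only a corollary and not the end of the story — is that all three step-size choices presuppose advance knowledge of $\outbound$, $\totaldelay$, or $\vt[\gsum][\nRuns]$ in order to tune $\step$; removing this oracle tuning through an adaptive, data-dependent schedule is precisely what the remainder of the section develops.
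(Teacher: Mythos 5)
Your proof is correct and takes essentially the same route as the paper: there the corollary is presented as a direct application of \cref{thm:delay-regret-first} with constant step size, which collapses to $\reg_{\nRuns}(\comp)\le\hreg(\comp)/\step+\step\,\gsum_{\nRuns}/2$, followed by exactly the two estimates you derive, $\gsum_{\nRuns}\le(1+2\outbound)\gbound^2\nRuns$ and $\gsum_{\nRuns}\le(\nRuns+2\totaldelay)\gbound^2$. Your caveat on the second bullet is also consistent with the paper's own treatment, which obtains the clean $\bigoh(\sqrt{\totaldelay})$ statement only by appealing to the convention of \citet{QD15}, where the aggregated delay count dominates $\nRuns$.
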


\cref{cor:delay-regret} recapitulates several types of regret bounds that we can expect from \eqref{eq:delayed-dual-avg}, depending on the tuning of $\vt[\step]$ (either by using a pessimistic upper bound on the delays and the norms of the gradients, or using the actual delays and/or received gradients).
In particular, if we focus on the standard time ordering $\Dpermu = \idp$, \cref{cor:delay-regret} allows us to recover the optimal data-dependent bound of $\bigoh(\sqrt{\vt[\gsum][\nRuns]})$ that was previously obtained for the single-agent setting by \citet{JGS16} and \citet{MS14}.
Moreover, if we further assume that $\dnorm{\vt[\gvec]}\le\gbound$ for all $\run\in\oneto{\nRuns}$, we have $\gsum_{\nRuns}\le(\nRuns+2\vt[\totaldelay][\nRuns])\gbound^2$, which leads to the well-known $\bigoh(\sqrt{\vt[\totaldelay][\nRuns]})$ bound on the regret (see \eg \citealp{QD15}).

On the downside, \cref{cor:delay-regret} would seem to suggest that the derived regret bounds depend on the choice of the permutation $\Dpermu$, a concept that is relevant for the analysis, but which is otherwise devoid of physical meaning (at least, relative to the sequence of events as it unfolds in real time).
Because of this, the computation of the optimal learning rates required by \cref{cor:delay-regret} seems beyond reach in practice \textendash\ even if we assume that the various quantities involved are somehow known to the agents.
However, as we show below, \emph{this is not the case:}
the values of both $\vt[\totaldelay^{\Dpermu}][\nRuns]$ and $\vt[\gsum^{\Dpermu}][\nRuns]$ are independent of $\Dpermu$, and hence, so are the bounds of \cref{cor:delay-regret}.
To prove this, we first provide a new characterization of the set $\vt[\setdel^\Dpermu]$ which is of independent interest:

\begin{proposition}
\label{prop:setdel-charac}
Let $\Dpermu$ be a faithful permutation. Then
\begin{equation}
\vt[\setdel^{\Dpermu}]
	= \setdef*{\{\runalt,\runano\} \subseteq \oneto{\run}^{\Dpermu}}{\text{$\runalt$ and $\runano$ are not adjacent in $\graph$}}.
\end{equation}
\end{proposition}

\begin{proof}
By definition of the dependency graph, $\runalt$ and $\runano$ are not adjacent in $\graph$ if and only if  $\{\runalt\notin\vt[\set][\runano], \runano\notin\vt[\set][\runalt]\}$.
We will thus show that
    \begin{equation}
    \notag
    \vt[\setdel^{\Dpermu}]
	= \setdef*{\{\runalt,\runano\} \subseteq \oneto{\run}^{\Dpermu}}{\runalt\notin\vt[\set][\runano], \runano\notin\vt[\set][\runalt]}.
\end{equation}
This relies on a two-way inclusion argument.

\para{Inclusion (``\,$\subseteq$\,'')}
Let $\runalt\in\oneto{\run}$ and $\runano\in\vt[\setout^\Dpermu][\runalt]=\setexclude{\oneto{\runalt-1}^{\Dpermu}}{\vt[\set][\Dpermu(\runalt)]}$. By definition of $\oneto{\run}^{\Dpermu}$ we have $\Dpermu(\runalt)\in\oneto{\run}^{\Dpermu}$ and $\runano\in\oneto{\runalt-1}^{\Dpermu}\subseteq\oneto{\run}^{\Dpermu}$. It remains to prove that $\Dpermu(\runalt)\notin\vt[\set][\runano]$. We exploit the equivalence
\begin{equation}
\label{eq:apx-charac-D-eqv}
\runano\in\oneto{\runalt-1}^{\Dpermu}
	\iff
	\Dpermu^{-1}(\runano)\le\runalt-1
	\iff
	\Dpermu^{-1}(\runano) < \Dpermu^{-1}(\Dpermu(\runalt))
	\iff
	\Dpermu(\runalt)\notin
	\oneto{\Dpermu^{-1}(\runano)}^{\Dpermu}.
\end{equation}
To conclude, we use the fact that $\Dpermu$ is a faithful permutation and accordingly $\vt[\set][\runano]\subseteq\oneto{\Dpermu^{-1}(\runano)-1}^{\Dpermu}\subseteq\oneto{\Dpermu^{-1}(\runano)}^{\Dpermu}$. Along with \eqref{eq:apx-charac-D-eqv} we deduce that $\Dpermu(\runalt)\notin\vt[\set][\runano]$.

\para{Containment (``\,$\supseteq$\,'')}
Let $\{\runalt,\runano\}\subset\oneto{\run}^{\Dpermu}$ such that  $\runalt\notin\vt[\set][\runano]$ and $\runano\notin\vt[\set][\runalt]$. We assume without loss of generality $\Dpermu^{-1}(\runano)<\Dpermu^{-1}(\runalt)$. This is indeed equivalent to $\runano\in\oneto{\Dpermu^{-1}(\runalt)-1}^{\Dpermu}$ and therefore $\runano\in\vt[\setout^\Dpermu][\Dpermu^{-1}(\runalt)]$. We complete the proof by noting that $\runalt\in\oneto{\run}^{\Dpermu}$ if and only if $\Dpermu^{-1}(\runalt)\in\oneto{\run}$.
\end{proof}

In contrast to the original definition of $\vt[\setdel^{\Dpermu}]$, the characterization of \cref{prop:setdel-charac}---\ie the non-adjacency of the vertices---is independent of the ordering of the timestamps.
By defining $\graph_{\run}^{\Dpermu}$ as the subgraph of $\graph$ spanned by the vertices of $\oneto{\run}^{\Dpermu}$ in $\graph$,
the proposition says that $\vt[\setdel^{\Dpermu}]$ contains exactly the non-adjacent vertex pairs of $\vt[\graph^{\Dpermu}]$.
With this in mind, we readily obtain the following important corollary:

\begin{corollary}
\label{cor:D-gsum-sigma}
For any two faithful permutations $\Dpermu$ and $\Dpermualt$, we have $\vt[\setdel^{\Dpermu}][\nRuns]=\vt[\setdel^{\Dpermualt}][\nRuns]$, and, a fortiori, $\vt[\totaldelay^{\Dpermu}][\nRuns]=\vt[\totaldelay^{\Dpermualt}][\nRuns]$ and $\vt[\gsum^{\Dpermu}][\nRuns]=\vt[\gsum^{\Dpermualt}][\nRuns]$.
In other words, the regret bounds of \cref{cor:delay-regret} are independent of $\Dpermu$.
\end{corollary}
\begin{proof}
Simply note that $\oneto{\nRuns}^{\Dpermu}=\oneto{\nRuns}^{\Dpermualt}=\oneto{\nRuns}$.
\end{proof}

\cref{cor:D-gsum-sigma} shows that the regret bounds of \cref{cor:delay-regret} are indeed meaningful, as they do not depend on any ``virtual'' reordering of time by a faithful permutation.
However, given that the quantities $\vt[\gsum][\nRuns]$ and $\vt[\totaldelay][\nRuns]$ cannot be assumed known beforehand, the agents might need to employ a much more conservative learning rate of the order of $\Theta(1/\sqrt{\outbound\nRuns})$ to minimize their regret.
We address this important issue via the design of suitable adaptive learning methods in the next section. 

\section{Tuning the Learning Rate in the Presence of Delays}
\label{sec:variable}

In this section, we exploit the template bound of \cref{thm:delay-regret} to design efficient leaning rates that provably achieve low regret.
To clarify our objective, we begin by identifying the main desiderata that
we seek to achieve:
\begin{enumerate}
[\itshape i\upshape)]
\item
\emph{Anytime / Restart-free:}
the algorithm should not require the knowledge of the horizon $\nRuns$ and/or include a restart schedule where previous information is discarded.
\item
\emph{Coordination-free:}
the learning rate of each agent must be computable based \emph{exclusively} on local information without any need for agent coordination.
\item
\emph{Data-dependent bounds:}
the algorithm's regret guarantees should feature the actual gradients observed instead of an upper bound thereof.
\item
\emph{Adaptivity to delays:}
the algorithm's regret should depend on the observed delays and not only on a pessimistic, worst-case estimate thereof.
\end{enumerate}

To derive a learning rate with the above properties,
we will employ an ``inverse-root-sum-square'' policy in the spirit of AdaGrad and other adaptive algorithms.
This is perhaps the easiest to illustrate in the case $\Dpermu=\idp$:
here, to obtain an $\bigoh(\sqrt{\vt[\gsum][\nRuns]})$ regret, we could employ the policy $\vt[\step] = 1/\sqrt{\vt[\gsum]} = 1/\sqrt{\sum_{\runalt=\start}^{\run}\vt[\gsumpar][\runalt]}$ where
\[
\vt[\gsumpar][\runalt]
	= \dnorm{\vt[\gvec][\runalt]}^2 + 2\dnorm{\vt[\gvec][\runalt]}
		\sum_{\runano\in\vt[\setout][\runalt]}\dnorm{\vt[\gvec][\runano]}.
\]
The key in the analysis of this policy is provided by the following standard lemma (dating back at least to \citealp{ACG02}, and proven for completeness in \cref{apx:adaptive}):

\begin{restatable}{lemma}{adaptive}
\label{lem:adaptive}
For any sequence of real numbers $\seq{\scalar}{1}{\nRuns}$ with $\sum_{\runalt=1}^{\run}\scalar_{\runalt}>0$ for all $\run\in\oneto{\nRuns}$, we have
\begin{equation}
    \notag
    \sum_{\run=1}^{\nRuns}
    \frac{\scalar_{\run}}{\sqrt{\sum_{\runalt=1}^{\run}\scalar_{\runalt}}}
    \le 2 \sqrt{\sum_{\run=1}^{\nRuns}\scalar_{\run}}.
\end{equation}
\end{restatable}

Based on this lemma, it is straightforward to show that \eqref{eq:delayed-dual-avg} with learning rate $\vt[\step] = 1/\sqrt{\vt[\gsum]}$ incurs at most $\bigoh(\sqrt{\vt[\gsum][\nRuns]})$ regret.
However, this policy is not implementable because it involves unobserved feedback---and hence violates one of our principal desiderata.
In the rest of this section, we show how this difficulty can be circumvented in many relevant scenarios.

\subsection{Pessimistic Non-Adaptive Learning Rate}
\label{subsec:nonada}

To set the stage for the analysis to come, we begin by assuming that the agents know $\delaybound$ an upper bound on the maximum delay and $\gbound$ an upper bound on the norms of the observed gradients.
This leads to $\vt[\gsumpar][\runalt] \leq \gbound^2 (1+2\outbound) \leq \gbound^2 (1+2\delaybound)$, and subsequently $\vt[\gsum]\le\gbound^2 (1+2\delaybound)\run$.
Given this preliminary result, it is tempting to choose $\vt[\step] = \Theta(1/\gbound\sqrt{\run(1+2\delaybound)})$.
This is however still unrealistic as the agents do not know the exact value of $\run$, and may only estimate it by using $\run\le\card(\vt[\set])+\delaybound+1$.
To justify this strategy, we need to prove that the corresponding learning rate is indeed non-increasing along some faithful permutation in order to apply \cref{thm:delay-regret}.
For this, we make the following assumption.

\begin{assumption}
\label{asm:card-in-order}
If $\runalt\in\vt[\set]$, then $\card(\vt[\set][\runalt])<\card(\vt[\set])$.
\end{assumption}

In words, the assumption requires that if $\vt[\gvec][\runalt]$ is used to compute $\vt[\state]$, then $\vt[\state][\runalt]$ is computed with fewer gradients than $\vt[\state]$.
This is a fairly mild requirement which is in turn implied by the upcoming \cref{asm:in-order} (see the accompanying discussion).
In particular, if the agents also relay the information $\card(\vt[\set])$ as well, \cref{asm:card-in-order} can be ensured by delaying the actual usage of a received feedback when necessary.
\footnote{In this case, $\vt[\set]$ refers to the timestamps of the gradients that are used for the computation of $\vt[\state]$;
however, this does not necessarily contain all the gradients that the active agent $\activeworker{\run}$ has received by time $\run$.}
Then, when the actual delays are bounded by $\delaybound$, the gradients $\{\vt[\gvec][\start],\ldots,\vt[\gvec][\run-\delaybound-1]\}$ can always be used for computing $\vt[\state]$. Therefore, introducing this extra delay will not increase the maximum delay and has no effect on the regret bound of the following proposition.

\begin{restatable}{proposition}{DecenDecrRegret}
\label{prop:decen-decr-regret}
Suppose that \cref{asm:card-in-order} holds, the maximum delay is bounded by $\delaybound$, and the norm of the observed gradients is bounded by $\gbound$.
Assume further that \acl{DDA} \eqref{eq:delayed-dual-avg} is run with the learning rate
\begin{align}
\notag
	\vt[\step] = \frac{\radius}{\gbound\sqrt{(1+2\delaybound)(\card(\vt[\set])+\delaybound+1)}}.
\end{align}
Then, for any $\comp$ such that $\hreg(\comp)\le\radius^2$, the generated points $\seq{\state}{\start}{\nRuns}$ enjoy the regret bound
\begin{equation}
\notag
	\reg_{\nRuns}(\comp)
	\le
   2 \radius \gbound \sqrt{(\nRuns+\delaybound)(1+2\delaybound)}.
\end{equation}
\end{restatable}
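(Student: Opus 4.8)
The plan is to invoke the master regret bound of \cref{thm:delay-regret}, which only requires exhibiting a \emph{faithful} permutation $\Dpermu$ of $\{\start,\ldots,\nRuns\}$ along which the prescribed learning rate is non-increasing. The natural choice is to sort the rounds by the amount of information available at update time: let $\Dpermu$ be any permutation for which $\card(\vt[\set][\Dpermu(1)]) \le \card(\vt[\set][\Dpermu(2)]) \le \cdots \le \card(\vt[\set][\Dpermu(\nRuns)])$, breaking ties arbitrarily. I would first verify that this $\Dpermu$ is faithful: by \cref{asm:card-in-order}, $\runalt\in\vt[\set]$ forces $\card(\vt[\set][\runalt]) < \card(\vt[\set])$, a \emph{strict} inequality, so $\runalt$ must precede $\run$ in the sorted order, i.e.\ $\Dpermu^{-1}(\runalt)<\Dpermu^{-1}(\run)$. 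Moreover, since $\vt[\step]$ is a strictly decreasing function of $\card(\vt[\set])$ and $\card(\vt[\set][\Dpermu(\run)])$ is non-decreasing in $\run$ by construction, the step-size is non-increasing along $\Dpermu$, so the hypotheses of \cref{thm:delay-regret} are met.

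The crux of the argument is a counting estimate on the sorted cardinalities $c_\run \defeq \card(\vt[\set][\Dpermu(\run)])$: I would establish that $c_\run \ge \run - 1 - \delaybound$ for every $\run$. This follows from the bounded-delay property, since $\oneto{t-\delaybound-1}\subset\vt[\set][t]$ gives $\card(\vt[\set][t]) \ge t - \delaybound - 1$ for every round $t$; hence any round with $\card(\vt[\set][t]) \le \run-2-\delaybound$ must satisfy $t\le\run-1$, so at most $\run-1$ rounds have cardinality $\le\run-2-\delaybound$, which forces the $\run$-th smallest cardinality $c_\run$ to exceed $\run-2-\delaybound$. This single inequality controls both ingredients of the bound. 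First, because $\Dpermu$ is faithful, $\vt[\set][\Dpermu(\run)]$ is contained in $\{\Dpermu(1),\ldots,\Dpermu(\run-1)\}$, so the inner index set in \cref{thm:delay-regret} has cardinality exactly $(\run-1) - c_\run \le \delaybound$; together with $\dnorm{\vt[\gvec]}\le\gbound$ this bounds each lag summand by $\gbound^2(1+2\delaybound)$. Second, $c_\run + \delaybound + 1 \ge \run$ yields $\vt[\step][\Dpermu(\run)] \le \radius/(\gbound\sqrt{(1+2\delaybound)\run})$.

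Substituting these into \cref{thm:delay-regret} then reduces everything to routine bookkeeping. For the regularizer term I would use $\hreg(\comp)\le\radius^2$ together with $c_{\nRuns}\le\nRuns-1$ (as $\vt[\set][\nRuns]\subset\oneto{\nRuns-1}$) to get $\hreg(\comp)/\vt[\step][\Dpermu(\nRuns)] \le \radius\gbound\sqrt{(1+2\delaybound)(\nRuns+\delaybound)}$. For the lag term, combining the two estimates above gives $\tfrac12\sum_{\run}\vt[\step][\Dpermu(\run)]\,\gbound^2(1+2\delaybound) \le \tfrac12\radius\gbound\sqrt{1+2\delaybound}\sum_{\run=1}^{\nRuns}\run^{-1/2} \le \radius\gbound\sqrt{(1+2\delaybound)\nRuns}$, using the standard bound $\sum_{\run=1}^{\nRuns}\run^{-1/2}\le 2\sqrt{\nRuns}$. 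Adding the two contributions and using $\nRuns\le\nRuns+\delaybound$ to merge the radicals delivers the claimed $2\radius\gbound\sqrt{(\nRuns+\delaybound)(1+2\delaybound)}$.

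I expect the main obstacle to be the counting estimate $c_\run \ge \run-1-\delaybound$: it is the one place where the pessimistic step-size, built from the observable quantity $\card(\vt[\set])$ rather than the unavailable global clock $\run$, is reconciled with the true round index through the bounded-delay assumption, and it must be argued at the level of order statistics of the cardinality sequence rather than round-by-round. Everything else is a careful but mechanical substitution into \cref{thm:delay-regret}.
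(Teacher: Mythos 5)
Your proposal is correct and follows essentially the same route as the paper's proof (\cref{prop:decen-decr-regret-apx} in \cref{apx:adadelay-decentralized}): the same permutation sorting rounds by $\card(\vt[\set])$, the same faithfulness argument from the strict inequality in \cref{asm:card-in-order}, the same pigeonhole estimate $\card(\vt[\set][\Dpermu(\run)])\ge\run-1-\delaybound$ reconciling sorted cardinalities with the global round index, and the same conclusion via \cref{thm:delay-regret}. The only cosmetic difference is that you sum $\sum_{\run}\run^{-1/2}\le2\sqrt{\nRuns}$ directly, whereas the paper packages this step through its adaptive lemma (\cref{lem:adaptive,lem:adaptive-base-decen}) and proves the marginally stronger statement with the maximum unavailability $\outbound$ in place of $\delaybound$.
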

\begin{proof}
We will in fact prove a stronger variant for which it is sufficient to assume that $\delaybound$ is an upper bound on the maximum unavailability (denoted by $\outbound$ previously).
Let $\vt[\gsumupper]=\gbound^2(1+2\delaybound)(\card(\vt[\set])+\delaybound+1)$ so that $\vt[\step]=\radius/\sqrt{\vt[\gsumupper]}$.
We choose a permutation $\Dpermu$ that satisfies if $\vt[\gsumupper][\runalt]<\vt[\gsumupper]$ then $\Dpermu^{-1}(\runalt)<\Dpermu^{-1}(\run)$
(we just need to sort the time indices using $\vt[\gsumupper]$ and map to this new order).
From \cref{asm:card-in-order} and the definition of $\vt[\gsumupper]$ we know that $\Dpermu$ is a faithful permutation.
Moreover, $(\vt[\gsumupper])_{\run}$ is non-decreasing along $\Dpermu$:
indeed, if this were not the case---that is, if $\vt[\gsumupper][\Dpermu(\run+1)]<\vt[\gsumupper][\Dpermu(\run)]$ for some $\run$---we would have $\run+1=\Dpermu^{-1}(\Dpermu(\run+1))<\Dpermu^{-1}(\Dpermu(\run))=\run$, a contradiction.

We now proceed to prove $\card(\vt[\setout^{\Dpermu}])\le\delaybound$, or equivalently $\card(\vt[\set][\Dpermu(\run)])\ge\run-1-\delaybound$.
For this we show $\oneto{\run}^{\Dpermu}\subseteq\oneto{\card(\vt[\set][\Dpermu(\run)])+\delaybound+1}$, which implies $\run\le\card(\vt[\set][\Dpermu(\run)])+\delaybound+1$ and thus the above inequality.
Provided that $\vt[\gsumupper]$ is non-decreasing along $\Dpermu$, for $\runalt\le\run$ we have $\card(\vt[\set][\Dpermu(\runalt)])\le\card(\vt[\set][\Dpermu(\run)])$.
Using the bounded unavailability assumption we get $\card(\setexclude{\oneto{\Dpermu(\runalt)-1}}{\vt[\set][\Dpermu(\runalt)]})\le\delaybound$ so that 
$\Dpermu(\runalt)-1-\card(\vt[\set][\Dpermu(\runalt)])\le\delaybound$ and subsequently $\Dpermu(\runalt)\le\card(\vt[\set][\Dpermu(\run)])+\delaybound+1$. This proves $\oneto{\run}^{\Dpermu}\subseteq\oneto{\card(\vt[\set][\Dpermu(\run)])+\delaybound+1}$.

From $\card(\vt[\setout^{\Dpermu}])\le\delaybound$ it follows immediately $\vt[\gsumpar^\Dpermu]
\defeq\dnorm{\vt[\gvec][\Dpermu(\run)]}^2
		+2\dnorm{\vt[\gvec][\Dpermu(\run)]}
		\sum_{\runalt\in
		\vt[\setout^{\Dpermu}]}\dnorm{\vt[\gvec][\runalt]}\le\gbound^2(1+2\delaybound)$ for all $\run$.
Along with $\run\le\card(\vt[\set][\Dpermu(\run)])+\delaybound+1$ we deduce $\vt[\gsum^{\Dpermu}]\le\gbound^2(1+2\delaybound)(\card(\vt[\set][\Dpermu(\run)])+\delaybound+1)=\vt[\gsumupper][\Dpermu(\run)]$.
Applying \cref{thm:delay-regret} and the AdaGrad lemma (\cref{lem:adaptive}), we obtain
\begin{equation}
\notag
\begin{aligned}
	\reg_{\nRuns}(\comp)
	&\le
	\frac{\hreg(\comp)}{\vt[\step][\Dpermu(\nRuns)]}
	+ \frac{1}{2}\sum_{\run=\start}^{\nRuns}
	\vt[\step][\Dpermu(\run)]
	\Bigg(\dnorm{\vt[\gvec][\Dpermu(\run)]}^2
		+2\dnorm{\vt[\gvec][\Dpermu(\run)]}
		\sum_{\runalt\in
		\vt[\setout^{\Dpermu}]}\dnorm{\vt[\gvec][\runalt]}\Bigg)\\
	&\le\radius\sqrt{\vt[\gsumupper][\Dpermu(\nRuns)]}
	+ \frac{\radius}{2}\sum_{\run=\start}^{\nRuns}
	\frac{\vt[\gsumpar^\Dpermu]}{\sqrt{\vt[\gsumupper][\Dpermu(\run)]}}\\
	&\le\radius\sqrt{\vt[\gsumupper][\Dpermu(\nRuns)]}
	+ \frac{\radius}{2}\sum_{\run=\start}^{\nRuns}
	\frac{\vt[\gsumpar^\Dpermu]}{\sqrt{\vt[\gsum^{\Dpermu}]}}\\
	&\le\radius\sqrt{\vt[\gsumupper][\Dpermu(\nRuns)]}+\radius\sqrt{\vt[\gsum^\Dpermu][\nRuns]}
	\le 2\radius\sqrt{\vt[\gsumupper][\Dpermu(\nRuns)]}.
\end{aligned}
\end{equation}
Our assertion then follows by noting that $\card(\vt[\set][\Dpermu(\nRuns)])\le\Dpermu(\nRuns)-1\le\nRuns-1$.
\end{proof}

\cref{prop:decen-decr-regret} shows that, even in the fully decentralized case where no global clock is available, it is \emph{still} possible to design implementable algorithms that retain the optimal $\bigoh(\sqrt{\delaybound\nRuns})$ regret bound.
Our next step is to further improve the algorithm so that it can adapt to both the \emph{data} and the \emph{delay} of the feedback.
The aforementioned characterization of delay will turn out to be crucial for this task.

\subsection{Adaptation to Delays in Distributed Systems}
\label{subsec:ada-dist}

To design a learning rate policy that adapts to both data and delays, we have to find a way to estimate $\vt[\gsum]$ by only using local information of each agent.
To that end, define for each agent $\worker$ the \emph{individual} ordering-by-arrival as a permutation $\Dpermu_\worker$ of $\{1,\ldots,\nRuns\}$ such that the $\grun$-th received feedback of $\worker$ comes from $\vt[\state][\Dpermu_i(\grun)]$ (played by $\worker$ or another player), \ie the $\grun$-th received feedback of $\worker$ is $\vt[\gvec][\Dpermu_i(\grun)] \in  \subd\vt[\obj][\Dpermu_i(\grun)](\vt[\state][\Dpermu_i(\grun)])$. 
With this notation, we can define the set of all feedback received \emph{before  $\vt[\gvec][\run]$ by agent $\worker$}; since $\vt[\gvec][\run]$ is the $\Dpermu_i^{-1}(\run)$-th feedback, this set is defined as $\vwt[\setrec] \defeq \{ \Dpermu_i(1),\Dpermu_i(2),\ldots,\Dpermu_i(\Dpermu_i^{-1}(\run)-1) \}$.

Using these definitions and looking closely at the definition of the lag\;\eqref{eq:gsum-sigma}, we notice that:
\begin{enumerate}
\item
The quantity $\sum_{\runalt=\start}^{\run} \dnorm{\vt[\gvec][\Dpermu(\runalt)]}^2$ cannot be known at instant $\Dpermu(\run)$ since the set of gradients available at that time is $\vt[\set][\Dpermu(\run)]$. It is thus natural to consider approximating it by $ \sum_{\runalt\in\vt[\set][\Dpermu(\run)]} \dnorm{\vt[\gvec][\runalt]}^2 $.
\item
For each $\run$ the quantity $\sum_{\{\runalt,\runano\}\in \vt[\setdel^{\Dpermu}]}\dnorm{\vt[\gvec][\runalt]}\dnorm{\vt[\gvec][\runano]}$, gathering the pairs of feedback of $\oneto{\run}^{\Dpermu}$ satisfying the relation $\{\runalt\notin\vt[\set][\runano],\runano\notin\vt[\set][\runalt]\}$ (\cref{prop:setdel-charac}), is generally unknown.
Building on the works of \citet{JGS16} and \citet{MS14}, this sum can be approximated by
\(
\sum_{\runalt\in\vt[\set][\Dpermu(\run)]}
	(\dnorm{\vt[\gvec][\runalt]}
	\sum_{\runano\in\setexclude{\vt[\setrec][\worker(\run),\runalt]}{\vt[\set][\runalt]}}\dnorm{\vt[\gvec][\runano]}).
\)
In words, for all $\runalt\in\vt[\set][\Dpermu(\run)]$, the worker $\activeworker{\run}$ aggregates the feedback received before $\vt[\gvec][\runalt]$ but was not used to generate $\vt[\gvec][\runalt]$.
\end{enumerate}

Putting these two points together, a reasonable surrogate for $\vt[\gsum^\Dpermu]$ would be $\vt[\Gsumbck][\Dpermu(\run)]$, where for all $\run\in\oneto{\nRuns}$, we define
\begin{align}
\notag
	\vt[\Gsumbck] = \sum_{\runalt\in\vt[\set]}
		\left(
		\dnorm{\vt[\gvec][\runalt]}^2 + 2\dnorm{\vt[\gvec][\runalt]}
		\sum_{\runano\in\setexclude{\vt[\setrec][\worker(\run),\runalt]}{\vt[\set][\runalt]}}\dnorm{\vt[\gvec][\runano]}
			\right). 
\end{align}
To make $\vt[\Gsumbck]$ a valid approximation, we would need $\runalt$ to satisfy $\runalt\notin\vt[\set][\runano]$ whenever $\runano\in\setexclude{\vt[\setrec][\worker(\run),\runalt]}{\vt[\set][\runalt]}$ given the characterization of \cref{prop:setdel-charac}.
In particular, this is true if $\vt[\gvec][\runalt]$ is not used to generate $\vt[\state][\runano]$ whenever $\vt[\gvec][\runano]$ arrives before $\vt[\gvec][\runalt]$ at node $\activeworker{\run}$.
This leads to the following mild assumption:
when an agent receives a gradient $\vt[\gvec]$, they must have already received all the feedback that was used to compute it.

\begin{assumption}
\label{asm:in-order}
For every worker $\worker\in\workers$ and all $\run=\running$, we have $\vt[\set]\subseteq\vwt[\setrec]$.
\end{assumption}

The above assumption is notably verified in the following scenarios:
\begin{enumerate*}[\itshape i\upshape)]
\item
a coordinator-worker scheme in which the transmission of the gradients occurs \emph{in order}, in first-come, first-serve manner;
\item
broadcasting of newly received and computed gradient over a fixed communication network;
\item
whenever two agents communicate their gradient pools are synchronized and the gradients are exchanged in the order they become available to the agents.
\end{enumerate*}
As a consequence, \cref{asm:in-order} is satisfied in many relevant setups and can otherwise be enforced by imposing \emph{iii)} at the price of a slightly higher communication cost.

Now, since the active agent $\worker(\run)$ at time $\run$ knows $\vt[\set]$ (by definition) and $\vwt[\setrec][\activeworker{\run}][\runalt]$ for $\runalt\in\vt[\set]$ (by construction), the quantity $\vt[\Gsumbck]$ is indeed computable with purely local information.
The agents can thus run $\eqref{eq:delayed-dual-avg}$ with a learning rate of the form $\vt[\step]=\Theta(1/\sqrt{\vt[\Gsumbck]})$. 
The obtained algorithm, which we call AdaDelay-Dist, is detailed in \cref{algo:adadelay-dist};
its principal regret guarantee is given below:

\begin{restatable}{theorem}{AdaDelayDistRegret}
\label{thm:adadelayi}
Suppose that the maximum delay is bounded by $\delaybound$, the norm of the gradients are bounded by $\gbound$, and that \cref{asm:in-order} holds.
Assume further that \acl{DDA} \eqref{eq:delayed-dual-avg} is run with the learning rate
\begin{align}
\label{eq:adadelayDist}
\tag{AdaDelay--Dist}
	\vt[\step] = \frac{\radius}{\sqrt{  \vt[\Gsumbck]  + \regparinit}}.
\end{align}
where $\regparinit > 0$ is a positive constant.
Then, for all $\comp$ such that $\hreg(\comp)\le\radius^2$,
the algorithm
enjoys the regret bound
\begin{equation}
\notag
	\reg_{\nRuns}(\comp)
	\le
	2\radius\sqrt{\vt[\gsum][\nRuns]}
	+2\radius\sqrt{\regparinit}
	+\frac{\radius}{\sqrt{\regparinit}}
	\gbound^2(2\delaybound+1)^2.
\end{equation}
\end{restatable}

The bound of \cref{thm:adadelayi} differs from the optimal data-dependent bound by at most a time-independent constant, and this is achieved at the worst-case cost of transmitting an additional scalar (\ie $\sum_{\runalt\in\vt[\set]}\dnorm{\vt[\gvec][\runalt]}$) per element of feedback sent. 
Moreover, we should also stress that the algorithm does not use the global time
as in the case of \cref{prop:decen-decr-regret},
time indices are present in \cref{algo:adadelay-dist} only for ease of comprehension, notably to highlight the fact that a worker knows (and keeps track) of the feedback used to produce past points (\ie $\sum_{\runalt\in\vt[\set]}\dnorm{\vt[\gvec][\runalt]}$ for each point $\vt[\state]$ played by the worker).
Finally, notice that although the theorem assumes the gradients and delays to be bounded, the algorithm itself does \emph{not} require any knowledge of these bounds.
A bad estimate of these quantities would only cause the method to suffer from higher regret at the first iterations.

\begin{algorithm}[tb]
    \caption{\ref{eq:adadelayDist} -- from the point of view of agent $\worker$  }
     \label{algo:adadelay-dist}
 \begin{algorithmic}[1]
     \STATE {\bfseries Initialize:}
         $\gvecs_\worker \subs \emptyset$, $\vw[\regpar] \subs \regparinit>0$
    \WHILE{not stopped}
    \STATE {\bfseries asynchronously} {receive  $\vt[\gvec]$ along with $\sum_{\runalt\in\vt[\set]}\dnorm{\vt[\gvec][\runalt]}$ \underline{from other agents}}  
     \STATE \hspace{\algorithmicindent} $\vw[\regpar] \subs \vw[\regpar] + \dnorm{\vt[\gvec]}^2 + 2\dnorm{\vt[\gvec]}
    (\sum_{\runalt\in\vw[\gvecs]}\dnorm{\vt[\gvec][\runalt]}
    -\sum_{\runalt\in\vt[\set]}\dnorm{\vt[\gvec][\runalt]})$
    \STATE \hspace{\algorithmicindent} $\vw[\gvecs] \subs \vw[\gvecs]\union\{\vt[\gvec]\}$
    \STATE \hspace{\algorithmicindent} Relay the information if necessary \\[0.2cm]
    \STATE {\bfseries asynchronously} {receive  $\vt[\gvec]$ \underline{as a feedback}}
    \STATE \hspace{\algorithmicindent} Retrieve $\sum_{\runalt\in\vt[\set]}\dnorm{\vt[\gvec][\runalt]}$ \underline{from the memory}
    \STATE \hspace{\algorithmicindent} $\vw[\regpar] \subs \vw[\regpar] + \dnorm{\vt[\gvec]}^2 + 2\dnorm{\vt[\gvec]}
    (\sum_{\runalt\in\vw[\gvecs]}\dnorm{\vt[\gvec][\runalt]}
    -\sum_{\runalt\in\vt[\set]}\dnorm{\vt[\gvec][\runalt]})$
    \STATE \hspace{\algorithmicindent} $\vw[\gvecs] \subs \vw[\gvecs]\union\{\vt[\gvec]\}$
    \STATE \hspace{\algorithmicindent} Send $\vt[\gvec]$ and $\sum_{\runalt\in\vt[\set]}\dnorm{\vt[\gvec][\runalt]}$ \underline{to other agents} \\[0.2cm]
    \IF{the agent becomes active, \ie $\activeworker{\run}=\worker$}
     \STATE $\vt[\set] \subs \gvecs_\worker$
     \STATE $\vt[\step] \subs \radius/\sqrt{\vw[\regpar] }$
      \STATE Play $\vt[\state] =  \argmin_{\point\in\points} \sum_{\runalt\in\vt[\set]}\product{\vt[\gvec][\runalt]}{\point} + \frac{\hreg(\point)}{\vt[\step]}$
     \ENDIF
     \ENDWHILE
 \end{algorithmic}
 \end{algorithm}

We will now proceed to prove \cref{thm:adadelayi}.
To that end, let
$\vwt[\setbck] \defeq
\setdef{\{\runalt, \runano\}}
{\runalt\in\vt[\set],\runano\in
\setexclude{\vwt[\setrec][\worker][\runalt]}{\vt[\set][\runalt]}}$ so that
\begin{align}
\label{eq:gsumbcki-setbck}
	\vt[\Gsumbck] = \sum_{\runalt\in\vt[\set]}
		\left(
		\dnorm{\vt[\gvec][\runalt]}^2 + 2\dnorm{\vt[\gvec][\runalt]}
		\sum_{\runano\in\setexclude{\vt[\setrec][\worker(\run),\runalt]}{\vt[\set][\runalt]}}\dnorm{\vt[\gvec][\runano]}
			\right) 
		= 
		\sum_{\runalt\in\vt[\set]}
		\dnorm{\vt[\gvec][\runalt]}^2 + 2
			   \sum_{\{\runalt,\runano\}\in  \vwt[\setbck][\activeworker{\run}]}
		\dnorm{\vt[\gvec][\runalt]}
\dnorm{\vt[\gvec][\runano]}
\end{align}
To simplify the notation, we will write $\vt[\setbck]=\vwt[\setbck][\activeworker{\run}]$.
In the following proposition, we show that $\vt[\setbck]$ can be characterized in the same way as $\vt[\setdel^{\Dpermu}]$.

\begin{restatable}{proposition}{SetbckCharac}
\label{prop:setbck-charac}
Let $\Dpermu$ be a faithful permutation and let \cref{asm:in-order} hold. Then
\begin{equation}
\notag
	\vt[\setbck]=
	\setdef{\{\runalt,\runano\}\subseteq\vt[\set]}{\text{$\runalt$ and $\runano$ are not adjacent in $\graph$}}
\end{equation}
\end{restatable}
\begin{proof}
The proof is similar to that of \cref{prop:setdel-charac}, and we defer it to \cref{apx:adaptive}.
\end{proof}

Thanks to \cref{prop:setdel-charac} and \cref{prop:setbck-charac}, comparing $\vt[\setdel^{\Dpermu}]$ with $\vt[\setbck][\Dpermu(\run)]$ amounts to comparing $\oneto{\run}^{\Dpermu}$ with $\vt[\set][\Dpermu(\run)]$.
Using the bounded delay assumption, we can prove the following properties on a faithful permutation.

\begin{restatable}{proposition}{FaithfulDelayBound}
\label{prop:decent-delay-bound}
Let $\Dpermu$ be a faithful permutation and assume that the maximum delay is bounded by $\delaybound$.
We have \emph{(}a\emph{)} $\oneto{\run}^{\Dpermu}\subseteq\oneto{\Dpermu(\run)+\delaybound}$;  \emph{(}b\emph{)} $\setexclude{\oneto{\run}^{\Dpermu}}{\vt[\set][\Dpermu(\run)]}\subseteq\intinterval{\Dpermu(\run)-\delaybound}{\Dpermu(\run)+\delaybound}$; and \emph{(}c\emph{)} $|\Dpermu(\run)-\run|\le\delaybound$.
\end{restatable}

\begin{proof}[Main idea of the proof]
Proving (\emph{a}) and (\emph{b}) simply uses the definition of faithful permutations and the maximum delay, while to prove (\emph{c}) we also leverage the fact that $\Dpermu$ is a permutation.
To streamline our discussion, the complete proof is again deferred to \cref{apx:adaptive}.
\end{proof}

Interestingly, \cref{prop:decent-delay-bound}(\emph{c}) shows that when the delays are bounded by $\delaybound$, a faithful permutation can at most move an element $\delaybound$ steps away from its original position.
We are now ready to provide the complete proof of \cref{thm:adadelayi}.

\begin{proof}[Proof of \cref{thm:adadelayi}]
Let $\vt[\gsumupper]=\vt[\Gsumbck]+\regparinit$ so that $\vt[\step]=\radius/\sqrt{\vt[\gsumupper]}$ and $\Dpermu$ be a permutation such that \emph{i}) if $\vt[\gsumupper][\runalt]<\vt[\gsumupper]$ then $\Dpermu^{-1}(\runalt)<\Dpermu^{-1}(\run)$; \emph{ii}) if $\vt[\gsumupper][\runalt]=\vt[\gsumupper]$ and $\runalt\in\vt[\set]$ then $\Dpermu^{-1}(\runalt)<\Dpermu^{-1}(\run)$. $(\vt[\gsumupper])_{\run}$ is obviously non-decreasing along $\Dpermu$ (see proof of \cref{prop:decen-decr-regret}). We claim that this is a faithful permutation. 
For this, let $\runalt\in\vt[\set]$ and we would like to show  $\Dpermu^{-1}(\runalt)<\Dpermu^{-1}(\run)$. By \cref{asm:in-order} we have
$\vt[\set][\runalt]\subseteq\vwt[\setrec][\activeworker{\run}][\runalt]$ and from $\runalt\in\vt[\set]$ it holds $\vwt[\setrec][\activeworker{\run}][\runalt]\subseteq\vt[\set]$; accordingly, $\vt[\set][\runalt]\subseteq\vt[\set]$.
Invoking \cref{prop:setbck-charac} we deduce $\vt[\setbck][\runalt]\subseteq\vt[\setbck]$.
Using \eqref{eq:gsumbcki-setbck} we then get $\vt[\gsumupper][\runalt]\le\vt[\gsumupper]$. This inequality along with $\runalt\in\vt[\set]$ imply $\Dpermu^{-1}(\runalt)<\Dpermu^{-1}(\run)$.

In the remainder of the proof, we will use the notation $\vt[\Gsumbck][\run]=\vt[\gsum][\nRuns]=\vt[\gsum^\Dpermu][\nRuns]$ for $\run>\nRuns$.
Let us prove that $\vt[\Gsumbck][\Dpermu(\run)+2\delaybound+1]\ge\vt[\gsum^\Dpermu]$ for $\run\in\oneto{\nRuns}$.
This is the case when $\Dpermu(\run)+2\delaybound+1>\nRuns$ by the previous definition.
Otherwise, with \eqref{eq:gsum-sigma}, \eqref{eq:gsumbcki-setbck}, \cref{prop:setdel-charac,prop:setbck-charac}, this is equivalent to proving that $\oneto{\run}^{\Dpermu}\subseteq\vt[\set][\Dpermu(\run)+2\delaybound+1]$.
The inclusion holds since on one hand, by \cref{prop:decent-delay-bound}(a) we have $\oneto{\run}^{\Dpermu}\subseteq\oneto{\Dpermu(\run)+\delaybound}$ and on the other hand $\oneto{\Dpermu(\run)+\delaybound}\subseteq\vt[\set][\Dpermu(\run)+2\delaybound+1]$ by the definition of maximum delay. 

As we have proved that $\Dpermu$ is a faithful permutation, it holds $\vt[\set][\Dpermu(\run)]\subseteq\oneto{\run}^{\Dpermu}$. The above hence also implies $\vt[\set][\Dpermu(\run)]\subseteq\vt[\set][\Dpermu(\run)+2\delaybound+1]$, and accordingly, $\vt[\gsumupper][\Dpermu(\run)+2\delaybound+1]\ge\vt[\gsumupper][\Dpermu(\run)]$.
The inequality is still true when $\Dpermu(\run)+2\delaybound+1>\nRuns$ as $\vt[\Gsumbck]\le\vt[\gsum][\nRuns]$ always holds by \cref{prop:setdel-charac,prop:setbck-charac} and $\vt[\set]\subseteq\oneto{\nRuns}$.
Applying \cref{thm:delay-regret} gives
\begin{align*}
	\reg_{\nRuns}(\comp)
	&\le
	\frac{\hreg(\comp)}{\vt[\step][\Dpermu(\nRuns)]}
	+ \frac{1}{2}\sum_{\run=\start}^{\nRuns}
	\vt[\step][\Dpermu(\run)]
	\Bigg(\dnorm{\vt[\gvec][\Dpermu(\run)]}^2
		+2\dnorm{\vt[\gvec][\Dpermu(\run)]}
		\sum_{\runalt\in
		\vt[\setout^{\Dpermu}]}\dnorm{\vt[\gvec][\runalt]}\Bigg)\\
	&\le
	\radius\sqrt{\vt[\gsumupper][\Dpermu(\nRuns)]}
	+ \frac{\radius}{2}\sum_{\run=\start}^{\nRuns}
	\frac{\vt[\gsumpar^\Dpermu]}{\sqrt{\vt[\gsumupper][\Dpermu(\run)]}}\\
	&=
	\radius\sqrt{\vt[\gsumupper][\Dpermu(\nRuns)]}
	+ \frac{\radius}{2}
	\sum_{\run=\start}^{\nRuns}
	\left(
	\frac{1}{\sqrt{\vt[\gsumupper][\Dpermu(\run)+2\delaybound+1]}}
	+\frac{1}{\sqrt{\vt[\gsumupper][\Dpermu(\run)]}}
	-\frac{1}{\sqrt{\vt[\gsumupper][\Dpermu(\run)+2\delaybound+1]}}
	\right)\vt[\gsumpar^\Dpermu],
\end{align*}
where as in the proof of \cref{prop:decen-decr-regret} we write $\vt[\gsumpar^\Dpermu]=\dnorm{\vt[\gvec][\Dpermu(\run)]}^2
		+2\dnorm{\vt[\gvec][\Dpermu(\run)]}
		\sum_{\runalt\in
		\vt[\setout^{\Dpermu}]}\dnorm{\vt[\gvec][\runalt]}$.
From \cref{prop:decent-delay-bound}(b) we know that $\setexclude{\oneto{\run}^{\Dpermu}}{\vt[\set][\Dpermu(\run)]}\subseteq\intinterval{\Dpermu(\run)-\delaybound}{\Dpermu(\run)+\delaybound}$. Since $\oneto{\run-1}^{\Dpermu}=\setexclude{\oneto{\run}^{\Dpermu}}{\{\Dpermu(\run)\}}$ and $\Dpermu(\run)\notin\vt[\set][\Dpermu(\run)]$, we deduce that $\card(\vt[\setout^{\Dpermu}])\le2\delaybound$ and hence $\vt[\gsumpar^{\Dpermu}] \le \gbound^2(1+4\delaybound)$.
With the non-negativity of $1/\sqrt{\vt[\gsumupper][\Dpermu(\run)]}-1/\sqrt{\vt[\gsumupper][\Dpermu(\run)+2\delaybound+1]}$
and the fact that $\vt[\gsum^\Dpermu]\le\vt[\Gsumbck][\Dpermu(\run)+2\delaybound+1]<\vt[\gsumupper][\Dpermu(\run)+2\delaybound+1]$ we then get

\begin{align*}
    \reg_{\nRuns}(\comp)
	&\le\radius\sqrt{\vt[\gsumupper][\Dpermu(\nRuns)]}
	+ \frac{\radius}{2}\sum_{\run=\start}^{\nRuns}
	\frac{\vt[\gsumpar^\Dpermu]}{\sqrt{\vt[\gsum^{\Dpermu}]}}
	+ \frac{\radius}{2}\sum_{\run=\start}^{\nRuns}
	\left(
	\frac{1}{\sqrt{\vt[\gsumupper][\Dpermu(\run)]}}
	-\frac{1}{\sqrt{\vt[\gsumupper][\Dpermu(\run)+2\delaybound+1]}}
	\right)\gbound^2(1+4\delaybound)
	\\
	&\le\radius\sqrt{\vt[\gsumupper][\Dpermu(\nRuns)]}+\radius\sqrt{\vt[\gsum^\Dpermu][\nRuns]}
	+\frac{\radius}{2}\sum_{\run=\start}^{\nRuns}
	\left(
	\frac{1}{\sqrt{\vt[\gsumupper][\run]}}
	-\frac{1}{\sqrt{\vt[\gsumupper][\run+2\delaybound+1]}}
	\right)\gbound^2(1+4\delaybound)
	\\
	&\le
	2\radius\sqrt{\vt[\gsum][\nRuns]+\regparinit}
	+\frac{\radius}{2\sqrt{\regparinit}}
	(2\delaybound+1)(4\delaybound+1)\gbound^2\\
	&\le
	2\radius\sqrt{\vt[\gsum][\nRuns]}
	+2\radius\sqrt{\regparinit}
	+\frac{\radius}{\sqrt{\regparinit}}
	(2\delaybound+1)^2\gbound^2
\end{align*}
The second inequality uses \cref{lem:adaptive} and reorders the timestamps of the sum;
the third inequality upper bounds both $\vt[\gsumupper][\Dpermu(\nRuns)]$ and $\vt[\gsum^\Dpermu][\nRuns]=\vt[\gsum][\nRuns]$ by $\vt[\gsum][\nRuns]+\regparinit$ for the first term, and uses telescoping and lower bounds $\vt[\gsumupper]$ by $\regparinit$ for the second term;
in the last inequality we employ the fact that $\sqrt{a+b}\le\sqrt{a}+\sqrt{b}$ for all $a,b\ge0$.
This concludes the proof.
\end{proof}

\subsection{Adaptation to Unbounded Delays in the Single-Agent Setting}
\label{subsec:adadelayplus}

In this part, we will show that when there is only one agent (\ie $\nWorkers=1$), we can extend the ideas developed in the previous section to cope even with \emph{unbounded} delays.
In fact, in this situation the agent knows exactly the delay of each feedback and how each iterate is computed, so they can tune their learning rate accordingly.
This is in sharp contrast with the decentralized case in which the agents are in general unable to estimate the number of actions that have been played in the network but for which they have not received the corresponding feedback (\ie $\card(\vt[\setout])$).

To put all this in motion, let $\gbound$ be an upper bound on the norms of gradients that we assume to be known by the agent, and let $\vt[\setrec]=\vt[\setrec][1,\run]$ denotes the set of feedback (represented by their timestamps) received before $\vt[\gvec]$.
Our goal is to provide an upper bound of $\vt[\gsum]=\vt[\gsum^{\idp}]$ that is as tight as possible. As in \cref{subsec:ada-dist}, this is done in two steps (we write below $\vt[\setdel]=\vt[\setdel^{\idp}]$ for simplicity)

\begin{enumerate}
\item
The quantity $\sum_{\runalt=\start}^{\run}
		\dnorm{\vt[\gvec][\runalt]}^2$ can be approximated by $ \sum_{\runalt\in\vt[\set][\run]}
		\dnorm{\vt[\gvec][\runalt]}^2 $. Clearly,
		\[\sum_{\runalt=\start}^{\run}
		\dnorm{\vt[\gvec][\runalt]}^2\le
		\sum_{\runalt\in\vt[\set][\run]}
		\dnorm{\vt[\gvec][\runalt]}^2 + \gbound^2(\card(\vt[\setout])+1);\]
\item 
A proxy for $\sum_{\{\runalt,\runano\}\in \vt[\setdel]}\dnorm{\vt[\gvec][\runalt]}\dnorm{\vt[\gvec][\runano]}$,
	is $\sum_{\runalt\in\vt[\set]}
	(\dnorm{\vt[\gvec][\runalt]}
	\sum_{\runano\in\setexclude{\vt[\setrec][\runalt]}{\vt[\set][\runalt]}}\dnorm{\vt[\gvec][\runano]})$.
	Thanks to \cref{prop:setdel-charac} and \cref{prop:setbck-charac}, we have indeed
	\begin{equation}
		\notag
		\sum_{\{\runalt,\runano\}\in \vt[\setdel]}\dnorm{\vt[\gvec][\runalt]}\dnorm{\vt[\gvec][\runano]}
		\le
		\sum_{\runalt\in\vt[\set]}
		\bigg(\dnorm{\vt[\gvec][\runalt]}
		\sum_{\runano\in\setexclude{\vt[\setrec][\runalt]}{\vt[\set][\runalt]}}\dnorm{\vt[\gvec][\runano]}\bigg)
		+ \gbound^2 (\card(\vt[\setdel]) - \card(\vt[\setbck])).
	\end{equation}
\end{enumerate}

\begin{algorithm}[tb]
   \caption{\ref{eq:adadelayOplus}}
    \label{algo:adadelayOplus}
\begin{algorithmic}[1]
    \STATE {\bfseries Initialize:}
        $\gvecs \subs \emptyset$, $\run\subs1$, $\delayres\subs0$, $\regparint \subs 0$.\\[0.2em]
    \WHILE{not stopped}
    \IF{receive feedback $\vt[\gvec]$}
    \STATE $\delayres \subs \delayres - 1 - 2(\card(\gvecs)-\card(\vt[\set]))$
    \STATE $\regparint \subs \regparint + \dnorm{\vt[\gvec]}^2 + 2\dnorm{\vt[\gvec]}
    (\sum_{\runalt\in\gvecs}\dnorm{\vt[\gvec][\runalt]}
    -\sum_{\runalt\in\vt[\set]}\dnorm{\vt[\gvec][\runalt]})$ \label{algo:adadelayOplus:subtract}
    \STATE $\gvecs \subs \gvecs\union\{\vt[\gvec]\}$
    \ELSIF{requested to play an action $\vt[\state]$}
    \STATE $\vt[\set] \subs \gvecs$
    \STATE $\delayres \subs \delayres + 1 + 2((\run-1)-\card(\vt[\set]))$
    \STATE $\regpar \subs \max(\regpar, \regparint+\gbound^2\delayres)$
    \STATE $\vt[\state] \subs \argmin_{\point\in\points} \sum_{\runalt\in\vt[\set]}\product{\vt[\gvec][\runalt]}{\point} + (\sqrt{\regpar}/\radius)\hreg(\point)$
    \STATE $\run\subs\run+1$
    \ENDIF
    \ENDWHILE
\end{algorithmic}
\end{algorithm} 
In summary, we have shown that $\vt[\gsum]\le\vt[\Gsumbck]+\gbound^2\vt[\delayres]$ where $\vt[\delayres] \defeq \run + 2\vt[\totaldelay] - \card(\vt[\set]) - 2\card(\vt[\setbck])$.
This has the following immediate consequences:

\begin{restatable}{proposition}{PropAdaDelayOPlus}
\label{prop:adadelayoplus}
Assume that the norms of the gradients are bounded by $\gbound$ and the sequence of active feedback is non-decreasing, \ie $ \vt[\set] \subseteq  \vt[\set][\run+1]$.
Assume further that \acl{DDA} \eqref{eq:delayed-dual-avg} is run with the learning rate sequence
\begin{align}
\label{eq:adadelayOplus}
\tag{AdaDelay+}
	\vt[\step] = \min\left(  \vt[\step][\run-1] ,  \frac{\radius}{\sqrt{  \vt[\Gsumbck]  + \gbound^2 \vt[\delayres] }} \right)
\end{align}
where $\vt[\delayres] = \run + 2\vt[\totaldelay] - \card(\vt[\set]) - 2\card(\vt[\setbck])$.
Then, for any $\comp$ such that $\hreg(\comp)\le\radius^2$, the generated points $\seq{\state}{\start}{\nRuns}$ enjoy the regret bound
\begin{equation}
\notag
	\reg_{\nRuns}(\comp)
	\le
	2\radius\max_{1\le\run\le\nRuns}
	\sqrt{\vt[\Gsumbck] + \gbound^2\vt[\delayres]}
	\le
	2\radius
	\min\left(\max_{1\le\run\le\nRuns}
	\sqrt{\vt[\gsum]+\gbound^2\vt[\delayres]},
	\gbound\sqrt{\nRuns+2\vt[\totaldelay][\nRuns]}\right).
\end{equation}
\end{restatable}
\begin{proof}
The proof is detailed in \cref{apx:adaptive}.
We apply \cref{thm:delay-regret} with the choice $\Dpermu=\idp$ and conclude by using the inequality $\vt[\gsum]\le\vt[\Gsumbck]+\gbound^2\vt[\delayres]$ and the AdaGrad lemma (\cref{lem:adaptive}). 
\end{proof}

We refer to this new adaptive scheme as \ref{eq:adadelayOplus} and we provide one possible pseudocode implementation as \cref{algo:adadelayOplus}.
Notice that we do not use directly $\vt[\step]=\radius/\sqrt{\vt[\Gsumbck]+\gbound^2\vt[\delayres]}$ since we want the learning rate to be non-increasing.
To the best of our knowledge, \ref{eq:adadelayOplus} is the first online algorithm with regret guarantees that are both data- \emph{and} delay-dependent, all the while bypassing the bounded delay assumption.
In particular, its regret bound achieves the best of both worlds:
\begin{enumerate}
\item
When the delays are bounded by $\delaybound$, we have $\delayres\le2\delaybound^2+3\delaybound+1$ (proved in \cref{apx:adaptive}), so this worst-case bound still outperforms (by an additive constant) the data-dependent bound of \cref{thm:adadelayi}.
In the same setting, \citet{JGS16} also proposed an adaptive algorithm based on FTRL-Prox with a regret bound of the same order.
\item
It also achieves the optimal square-root dependence on the cumulative unavailability $\vt[\totaldelay][\nRuns]$ no matter whether the delays are bounded or not.
\end{enumerate}

\section{An Optimistic Variant}
\label{sec:optimistic}

In previous sections, we have established regret guarantees with respect to the worst case scenario.
In particular, the losses that we encounter can be arbitrary, and even adversarial.
Nonetheless, the environment can have a much more benign nature: there may be patterns of loss functions which can be exploited to achieve a smaller regret (\eg losses generated by a game mechanism, slowly-varying function sequence).
In this spirit, optimistic algorithms exploit the predictability of the loss sequence to obtain improved regret bounds.
In the unconstrained Euclidean setup ($\points = \vecspace$, $\hreg = 1/2\|\cdot\|^2$) that we will focus on in the following, the algorithm writes
\begin{equation}
\label{eq:OptGD}
\tag{OptGD}
\begin{aligned}
    \vt[\state] &= \last - \step\,\past[\gvec],
    ~~~~
    \inter &= \vt[\state] - \step\,\inter[\appr].
\end{aligned}
\end{equation}
The first update $ \vt[\state] = \last - \step\past[\gvec]$ is a classical online gradient step. However, for optimistic methods, the point $ \vt[\state]$ \emph{is not played} at time $\run$; instead, the agent plays $ \inter = \vt[\state] - \step\,\inter[\appr]$ after sensing the gradient of $\vt[\obj]$ by designing a gradient \emph{guess}  $\inter[\appr]=\inter[\appr](\vt[\state][\start],\vt[\gvec][\frac{3}{2}],\ldots,\past[\gvec])$. This is the \emph{optimistic step}. Following this action, the player suffers a loss $\vt[\obj](\inter)$ and receives the feedback $\inter[\gvec]\in\partial\vt[\obj](\inter)$.

The regret of 
\eqref{eq:OptGD} was shown 
\citep{CYLM+12,JGS17,MY16,RS13-NIPS} to be bounded by
\begin{equation}
\label{eq:OptGD-regret}
\reg_{\nRuns}(\comp) \le \frac{\norm{\comp-\vt[\state][\start]}^2}{2\step} + \sum_{\run=1}^{\nRuns}\frac{\step}{2}\norm{\inter[\gvec]-\inter[\appr[\gvec]]}^2.
\end{equation}
By optimally choosing $\step$, we attain a regret in $\bigoh\left(\sqrt{\sum_{\run=1}^{\nRuns}\norm{\inter[\gvec]-\inter[\appr[\gvec]]}^2}\right)$.

This bound gets smaller as $\inter[\appr]$ gets closer to $\inter[\gvec]$ (\ie when the {optimistic} guess is good), while we recover the regret of vanilla online gradient descent for $\inter[\appr]=0$ (no optimistic guess). A possible choice in practice is to use the last received feedback as a guess, \ie $\inter[\appr]=\past[\gvec]$, in which case, favorable guarantees can be derived when the function sequence has a small total variation and when these functions are smooth (see \eg \citealp{CYLM+12,JGS17}).

In this section, we present how Delayed Dual Averaging can be extended to incorporate an optimistic step in the unconstrained Euclidean setup. 
Importantly, we show that the dual averaging step has to be done with a smaller learning rate than the optimistic step.

\subsection{Delayed Optimistic Dual Averaging}

While \ac{OptGD} successfully leverages the predictability of the loss sequence for achieving a smaller regret, the effect of delay on this algorithm remains, as far as we are aware, unknown.

By extending \eqref{eq:delayed-dual-avg} to incorporate an optimistic step, \acl{DOptDA} can then be stated as follows:\footnote{The same algorithm (in a more general form) is called optimistic FTRL in \cite{JGS17}. We choose to employ the term optimistic dual averaging to maintain consistency with preceding sections.}
\begin{equation}
\label{eq:D-ODA}
\tag{DOptDA}
\begin{aligned}
    \vt[\state] &= \argmin_{\point\in\vecspace} \sum_{\runalt\in\vt[\set]}\product{\inter[\gvec][\runalt]}{\point} + \frac{\norm{\point-\vt[\point][\start]}^2}{2\vt[\step]}
    = \vt[\state][\start] - \vt[\step] \sum_{\runalt\in\vt[\set]} \inter[\gvec][\runalt],
    \\
    \inter &= \argmin_{\point\in\vecspace}\thinspace
    \product{\inter[\appr]}{\point} + \frac{\norm{\point-\vt[\point]}^2}{2\vt[\stepalt]}
    = \vt[\state] - \vt[\stepalt]\,\inter[\appr].
\end{aligned}
\end{equation}
Following our delay framework, $\current$ is computed using gradients from time moments $\vt[\set]$. Similarly, $\inter[\appr]$ must be derived solely based on information available to the active agent $\activeworker{\run}$ at time $\run$.

One key feature of our algorithm is we allow the optimistic step (\ie the step that leads to $\inter$) of \eqref{eq:D-ODA} to use a larger learning rate than the actual update step (\ie the step that obtains $\update$), \ie $\vt[\stepalt]\ge\vt[\step]$.
This additional flexibility allows us to compensate the missing information that have not arrived due to delays and provides the following regret bound
proved in\;\cref{apx:optimistic-delay-regret}.

\begin{restatable}{theorem}{OptDelayRegret}
\label{thm:optimistic-delay-regret}
Assume that the maximum delay is bounded by $\delaybound$. 
Let \acl{DOptDA} \eqref{eq:D-ODA} be run with learning rate sequences $\seqinf[\oneto{\nRuns}]{\step}{\run}$, $\seqinf[\oneto{\nRuns}]{\stepalt}{\run}$ satisfying $\update[\step]\le\current[\step]$ and $(2\delaybound+1)\vt[\step]\le\vt[\stepalt]$ for all $\run$. Then the regret of the algorithm (evaluated at the points $\vt[\state][\frac{3}{2}],\ldots,\vt[\state][\nRuns+\frac{1}{2}]$) satisfies
\begin{equation}
\notag
    \reg_{\nRuns}(\comp)
    \le
    \frac{\norm{\comp-\vt[\state][\start]}^2}{2\vt[\step][\nRuns]}
    +\sum_{\run=\start}^{\nRuns}
    \frac{\vt[\stepalt]}{2}\left(\norm{\inter[\gvec]-\inter[\appr[\gvec]]}^2-\norm{\inter[\appr]}^2\right).
\end{equation}
\end{restatable}

In \cref{thm:optimistic-delay-regret}, we successfully show that \eqref{eq:D-ODA} retains the desired property of undelayed optimistic gradient descent:
the regret of the algorithm is solely determined by the distance between $\inter[\gvec]$ and $\inter[\appr]$ (see \cref{eq:OptGD-regret}).
Precisely, the theorem guarantees a regret in $\bigoh\left(\sqrt{\delaybound\sum_{\run=1}^{\nRuns}\norm{\inter[\gvec]-\inter[\appr[\gvec]]}^2}\right)$ for fix learning rate sequences $\vt[\step]\equiv\step$, $\vt[\stepalt]\equiv(2\delaybound+1)\step$ that are optimally chosen.
Similar to the case of delayed mirror descent and delayed dual averaging, an additional factor of $\sqrt{\delaybound}$ appears in the regret bound, and their regret is recovered tightly by setting $\inter[\appr]=0$.

\begin{remark*}
The bounded delay assumption can in fact be relaxed in \cref{thm:optimistic-delay-regret}.
Nonetheless, we choose to adopt this assumption for ease of understanding.
Otherwise, denoting $\vt[\delay]=\card(\vt[\setout])+\card(\setdef{\runalt\in\oneto{\nRuns}}{\run\in\vt[\setout][\runalt]})+1$ and employing a constant update learning rate $\vt[\step]\equiv\step$ and $\vt[\stepalt]=\vt[\delay]\step$, we achieve a regret in $\bigoh\left(\sqrt{\sum_{\run=\start}^{\nRuns}\vt[\delay]\norm{\inter[\gvec]-\inter[\appr[\gvec]]}^2}\right)$.
Note that $\sum_{\run=\start}^\nRuns\vt[\delay]=2\totaldelay+\nRuns$ and when $\,\inter[\appr[\gvec]]=0$ the bound can be inferred from \cref{thm:delay-regret} with the choice $\Dpermu=\idp$.
\end{remark*}

\subsection{The Necessity of Scale Separation for Robustness to Delays}

In the following, we discuss the \emph{necessity} of having a relatively aggressive optimistic step compared to the update ($\vt[\stepalt]\ge\vt[\step]$) in order to be robust to delay.\footnote{The optimistic step is also called \emph{extrapolation} step to mirror the vocabulary of the extragradient method \cite{Kor76}.} Note that taking a more aggressive extrapolation update compared to the actual state update was shown to clearly improve the robustness of the extragradient method with respect to both rates and convergence itself in \cite{hsieh2020explore}.

For this, we consider linear losses $\vt[\obj]=\product{\vt[\gvec]}{\cdot}$ and uniform delay $\delaybound$ (\ie every feedback becomes available after a delay of $\delaybound$ time steps).\footnote{For linear losses, the gradient does not depend on the calling point and thus $\inter[\gvec] = \nabla\vt[\obj](\inter[\state]) = \vt[\gvec]$.}
We define the \emph{$\delaybound$-variation} of the loss sequence by $\vt[\variation^{\delaybound}][\nRuns]=\sum_{\run=\start}^{\nRuns}\norm{\vt[\gvec]-\vt[\gvec][\run-\delaybound]}^2$ where we set $\vt[\gvec]=0$ for $\run\le0$. For ease of notation we further denote $\vt[\variation^{\delaybound^+}][\nRuns]=\vt[\variation^{\delaybound+1}][\nRuns]$.
The following corollary is immediate from \cref{thm:optimistic-delay-regret}. 

\begin{corollary}
\label{cor:delay-optimistic-linear}
In the context of linear losses $\vt[\obj]=\product{\vt[\gvec]}{\cdot}$ and uniform delay $\delaybound$ ($\vt[\set]=\oneto{\run-\delaybound-1}$ for all $\run$), running \acl{DOptDA} \eqref{eq:D-ODA} with $\inter[\appr]=\vt[\gvec][\run-\delaybound-1]$ and constant learning rates $\step=\radius/\sqrt{(2\delaybound+1)\vt[\variation^{\delaybound^+}][\nRuns]}$ and $\stepalt=(2\delaybound+1)\step$ where $\radius\ge\norm{\comp-\vt[\state][\start]}$ guarantees the regret bound
\begin{equation}
    \notag
    \reg_{\nRuns}(\comp)\le\radius\sqrt{(2\delaybound+1)\vt[\variation^{\delaybound^+}][\nRuns]}.
\end{equation}
\end{corollary}

This results indicates that with an optimistic learning rate $\stepalt$ taken  $(2\delaybound+1)$ times bigger than the update learning rate $\step$, one can guarantee a regret bound of the order of the square root of the $(\delaybound+1)$-variation. 
In contrast, we now demonstrate the impossibility to obtain a regret  that is sub-linear in $\vt[\variation^{\delaybound^+}][\nRuns]$ when $\stepalt=\step$ (or even when $\stepalt\le\delaybound\step$).

\begin{restatable}{theorem}{OptRegretLB}
\label{thm:optimistic-regret-lower-bound}
Consider the setup of \cref{cor:delay-optimistic-linear}. Let $\step=\step(\radius,\nRuns,\delaybound,\vt[\variation^{\delaybound^+}][\nRuns])$ be uniquely determined by $\radius\ge \norm{\comp-\vt[\state][\start]}$, the time horizon $\nRuns$, the uniform delay $\delaybound$, and the $(\delaybound+1)$-variation $\vt[\variation^{\delaybound^+}][\nRuns]$.
If we run \acl{DOptDA} 
\eqref{eq:D-ODA} with $\inter[\appr]=\vt[\gvec][\run-\delaybound-\frac{1}{2}]$ and $\stepalt\le\delaybound\step$, it is impossible to guarantee a regret in $\smalloh(\max(\vt[\variation^{\delaybound^+}][\nRuns], \sqrt{\nRuns}))$.
\end{restatable}

\begin{proof}
The proof is reported in \cref{apx:optimistic-regret-lower-bound}; its construction is partially inspired by \citet{CYLM+12}, and as a special case, in the undelayed setting, we recover the result that the optimistic step is necessary to guarantee a regret in $\bigoh\Big(\sqrt{\sum_{\run=\start}^{\nRuns}\norm{\vt[\gvec]-\vt[\gvec][\run-1]}^2}\Big)$.

Nonetheless, in the original proof of \cite{CYLM+12}, the learning rate was first fixed and then a loss sequence was constructed to yield large regret, which could possibly also prevent optimistic algorithms to achieve low regret. Our approach fixes this fallacy by informing the algorithm of the variation in advance so that optimistic algorithms provably obtain low regrets on these sequences (cf. \cref{cor:delay-optimistic-linear}).
\end{proof}

Finally, we also show that among all the online algorithms with the same prior information, the bound achieved in \cref{cor:delay-optimistic-linear} is tight in its dependence on $\delaybound$ and $\vt[\variation^{\delaybound^+}][\nRuns]$.

\begin{restatable}{proposition}{VarRegretLB}
\label{prop:lowerboundopt}
For any online learning algorithm with prior knowledge of $\nRuns$, $\delaybound$ and $\overline{\variation^{\delaybound}}\ge\vt[\variation^{\delaybound^+}][\nRuns]$, there exists a sequence of linear losses such that if the feedback is subject to constant delay $\delaybound$, then the regret of the algorithm on this sequence with respect to a vector\;$\comp$ with $\norm{\comp-\vt[\state][\start]}\le1$ is $\Omega(\sqrt{\delaybound\overline{\variation^{\delaybound}}})$.
\end{restatable}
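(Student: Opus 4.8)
The plan is to exhibit a hard instance through an oblivious randomized adversary and then derandomize via the probabilistic method, so that the bound holds even against an algorithm that knows $\nRuns$, $\delaybound$ and $\overline{\variation^{\delaybound}}$ in advance. I will work in dimension one ($\vecspace=\R$) and translate so that $\vt[\state][\start]=0$, so that admissible comparators are exactly $\comp\in[-1,1]$. I group the horizon into $K=\lfloor\nRuns/(\delaybound+1)\rfloor$ consecutive blocks of length $L=\delaybound+1$, and on block $k$ I set the gradient to the constant value $\vt[\gvec]=a\,\sigma_k$, where $a>0$ is a scale to be fixed and $\sigma_1,\dots,\sigma_K$ are i.i.d.\ Rademacher signs. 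Choosing the block length equal to the shift $\delaybound+1$ appearing in $\vt[\variation^{\delaybound^+}][\nRuns]$ is what makes the variation controllable: since $\vt[\gvec]-\vt[\gvec][\run-\delaybound-1]$ depends only on $\sigma_k-\sigma_{k-1}$ and each block contributes $L$ identical terms, one gets $\vt[\variation^{\delaybound^+}][\nRuns]\le L a^2(4K-3)$ \emph{surely} (using $(\sigma_k-\sigma_{k-1})^2\le4$). I therefore fix $a^2=\overline{\variation^{\delaybound}}/(L(4K-3))$, which guarantees $\vt[\variation^{\delaybound^+}][\nRuns]\le\overline{\variation^{\delaybound}}$ for every realization of the signs, so the promised variation budget is always respected.

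The core of the argument is an informational independence lemma forced by the delay. Because the feedback available when producing the round-$\run$ action is $\{\vt[\gvec][\runalt]:\runalt\le\run-\delaybound-1\}$ and $\run-\delaybound-1=\run-L$ lies in a strictly earlier block, the point played at any $\run$ in block $k$ is measurable with respect to $\sigma_1,\dots,\sigma_{k-1}$ (and the algorithm's own randomness), hence independent of $\sigma_k$. Writing $x_\run$ for the played point at round $\run$ and $S_k=\sum_{\run\in\text{block }k}x_\run$, conditioning on $\sigma_1,\dots,\sigma_{k-1}$ gives $\mathbb{E}[\sigma_k S_k]=0$, so the algorithm's expected cumulative loss is $\mathbb{E}\big[\sum_\run \vt[\gvec]\,x_\run\big]=a\sum_k\mathbb{E}[\sigma_k S_k]=0$ for \emph{every} algorithm. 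Against this, the adversary selects the comparator $\comp=-\mathrm{sign}(G)$ where $G=\sum_\run\vt[\gvec]=aL\sum_k\sigma_k$, whose loss is $-|G|$; by Khintchine's inequality $\mathbb{E}\big|\sum_k\sigma_k\big|=\Omega(\sqrt{K})$, whence the expected regret is $\mathbb{E}\big[\sum_\run\vt[\gvec]\,x_\run\big]+\mathbb{E}|G|=\Omega(aL\sqrt{K})$.

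It remains to combine the two estimates and derandomize. With $a=\Theta\big(\sqrt{\overline{\variation^{\delaybound}}/(LK)}\big)$ the expected regret is of order $aL\sqrt{K}=\Theta\big(\sqrt{L\,\overline{\variation^{\delaybound}}}\big)=\Omega\big(\sqrt{(\delaybound+1)\overline{\variation^{\delaybound}}}\big)=\Omega\big(\sqrt{\delaybound\,\overline{\variation^{\delaybound}}}\big)$, using $\nRuns=\Theta(LK)$ and $\delaybound\ge1$ (the case $\delaybound=0$ makes the claimed bound vacuous). Since the variation bound holds surely, there is a realization of the signs for which both $\vt[\variation^{\delaybound^+}][\nRuns]\le\overline{\variation^{\delaybound}}$ and the regret lower bound hold; this fixed sign vector together with its comparator is the desired deterministic loss sequence. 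The main obstacle is precisely the informational independence step: one must verify that the delay window $\delaybound+1$ exactly matches the block length so that no action within a block can react to that block's sign, while simultaneously ensuring that this same matching keeps the $(\delaybound+1)$-variation within budget. It is this tension — blocks long enough to hide $\sigma_k$, yet numerous enough that the comparator's $\Omega(\sqrt{K})$ advantage survives — that pins down the exact $\sqrt{\delaybound\,\overline{\variation^{\delaybound}}}$ scaling. The remaining details (handling the leftover partial block when $\nRuns$ is not a multiple of $\delaybound+1$, and the elementary $\Omega(\sqrt{K})$ anticoncentration bound) are routine.
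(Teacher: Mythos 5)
Your proposal is correct, and it builds essentially the same hard instance as the paper: losses held constant on consecutive blocks of length $\delaybound+1$, so that the constant delay prevents any reaction within a block, while the block length matching the shift in $\vt[\variation^{\delaybound^+}][\nRuns]$ keeps the variation under control. The difference lies only in how the $\Omega(\sqrt{K})$ factor over the $K$ blocks is extracted. The paper proceeds by reduction: from the delayed algorithm $\algo$ it builds an undelayed algorithm $\algo_{/\delaybound}$ that plays the average of the $\delaybound+1$ points of each block, observes that the delayed regret is exactly $(\delaybound+1)$ times the undelayed one, and then invokes the classical $\Omega(\sqrt{\len})$ lower bound (citing \citealp{SS07}) as a black box on a horizon of $\len=\overline{\variation^{\delaybound}}/(4(\delaybound+1))$ rounds with $\pm1$ losses. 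You instead inline the proof of that classical bound: i.i.d.\ Rademacher signs per block, the delay-induced measurability argument giving zero expected algorithm loss, Khintchine's inequality for the comparator's $\Omega(\sqrt{K})$ advantage, and the probabilistic method to derandomize. What the paper's route buys is brevity and losses that stay in $\{\pm1\}$; what yours buys is self-containedness and more flexibility in the parameters \textendash\ the paper's proof ties $\nRuns=(\delaybound+1)\len$ to $\overline{\variation^{\delaybound}}$, whereas your scaling of the amplitude $a$ handles any horizon $\nRuns\ge\delaybound+1$, which is admissible since the proposition imposes no bound on the gradients. One shared caveat, glossed over in both arguments: asserting that the algorithm's expected cumulative loss vanishes tacitly assumes integrability of the played points (the domain here is unconstrained); this is standard in such lower-bound proofs and harmless.
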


\begin{proof}
The proof is reported in \cref{apx:lowerboundopt}. It combines the standard $\Omega(\sqrt{\nRuns})$ lower bound of undelayed online learning with idea from \cite{LSZ09}.
\end{proof}

Thus, in this section we showed that using \eqref{eq:D-ODA} \emph{with a double learning rate strategy} enables to achieve a  $\bigoh(\sqrt{\delaybound \vt[\variation^{\delaybound^+}][\nRuns]})$ regret which is tight among online learning methods and out of reach of single learning rate \eqref{eq:D-ODA}.

\subsection{Delayed Online Learning with Slow Variation}

Now that we have laid out our main results concerning the optimistic variant of delayed dual averaging, we investigate the choice of $\inter[\appr]$ for slowly varying loss functions $(\vt[\obj])_{\run\in\oneto{\nRuns}}$.

For this, we consider the case where the full gradient $\grad\vt[\obj]$ is obtained as feedback (and not only $\vt[\gvec] = \grad\vt[\obj](\vt[\state])$).
Using this kind of feedback, we can compute the gradient of the last received function at the current point immediately,\footnote{\ie without any delay, the delays considered here are either due to communication between agents or inherent to the feedback mechanism.} and use it as a guess for the current function's gradient. Formally, we make the following assumption.

\begin{assumption}
\label{asm:whole-vecfield}
The feedback associated to time step $\run$ is the whole vector field $\vt[\vecfield]=\grad\vt[\obj]$, the evaluation of which at any point $\point\in\vecspace$ is immediate and does not induce any delay.
\end{assumption}

The first part of the assumption is sometimes referred to as the ``full-information'' online learning model, and is typically satisfied when the learning system is used for prediction (\eg classification, regression). 
In fact, in such problems, the actions of the agents represent the model parameters, for which the whole loss and its gradient can be computed once the corresponding data is observed \citep{SS11}.

With this assumption, we can set  $\inter[\appr[\gvec]]=\vt[\appr[\vecfield]](\vt[\state]) $ where $\vt[\appr[\vecfield]] $ is some \emph{past} vector field (\ie $\vt[\appr[\vecfield]] = \vt[\vecfield][\runalt]$ for some $\runalt\in\vt[\set]$).
Now, for smooth losses, the following regret bound can be derived.

\begin{restatable}{theorem}{OptDelayRegretV}
\label{thm:optimistic-delay-regret-V}
Let the maximum delay be bounded by $\delaybound$ and that \cref{asm:whole-vecfield} holds. Assume in addition that the vector fields  $\vt[\vecfield]$ are $\lips$-Lipschitz continuous.
Take $\inter[\appr[\gvec]]=\vt[\appr[\vecfield]](\vt[\state])$, 
$\update[\step]\le\current[\step]$, $(2\delaybound+1)\vt[\step]\le\vt[\stepalt]$, and $2\vt[\stepalt^2]\lips^2\le1$.
Then, the regret of \acl{DOptDA} \eqref{eq:D-ODA} (evaluated at the points $\vt[\state][\frac{3}{2}],\ldots,\vt[\state][\nRuns+\frac{1}{2}]$) satisfies
\begin{equation}
\notag
    \reg_{\nRuns}(\comp)
    \le
    \frac{\norm{\comp-\vt[\state][\start]}^2}{2\vt[\step][\nRuns]}
    +\sum_{\run=\start}^{\nRuns}
    \vt[\stepalt]\norm{\vt[\vecfield](\vt)-\vt[\appr[\vecfield]](\vt)}^2.
\end{equation}
\end{restatable}
\begin{proof}
The proof is immediate from \cref{thm:optimistic-delay-regret} and is deferred to \cref{apx:optimistic-slow-var}.
\end{proof}

\cref{thm:optimistic-delay-regret-V} reduces the problem of choosing an adequate vector $\inter[\appr]$ to that of choosing a vector field $\vt[\appr[\vecfield]]$ which approximates well $\vt[\vecfield]$.
In our setup of full gradient feedback with a loss sequence evolving slowly over time, one natural option is reuse some recent function for the constitution of $\vt[\appr[\vecfield]]$.
Since we are in a distributed setting, the evolution of the loss functions may have both global and local components. We discuss these two typical cases below.

\begin{example}[Global variation]
\label{ex:global-variation}
If the loss functions vary slowly following a global trend, we can timestamp every gradient field which makes it possible to choose $\vt[\appr[\vecfield]]=\vt[\vecfield][\tilde{\run}]$ where $\tilde{\run}=\max\vt[\set]$, \ie the active agent $\activeworker{\run}$ uses the most recent data available at hand (independent of its source) when playing $\vt[\state]$. 
This would however require the agents to share the whole vector field $\vt[\vecfield]$.
\end{example}

\begin{example}[Local variation]
\label{ex:local-variation}
If the loss functions vary slowly for all the agents, the active agent $\activeworker{\run}$ can choose the last feedback corresponding to a point it played, \ie
$\vt[\appr[\vecfield]]=\vt[\vecfield][\tilde{\run}]$ where $\tilde{\run}=\max\setdef{\runalt\in\vt[\set]}{\activeworker{\runalt}=\activeworker{\run}}$.
Compared to \cref{ex:global-variation}, we gain in terms of both data privacy and communication efficiency since only the gradients $\inter[\gvec]$ need to be shared among the agents in this scenario.
\end{example}

Denoting the total deviation of our approximation by $\vt[\variation][\nRuns]=\sum_{\run=\start}^{\nRuns}\norm{\vt[\vecfield](\vt)-\vt[\appr[\vecfield]](\vt)}^2$, \cref{thm:optimistic-delay-regret-V} guarantees a regret in $\bigoh(\radius^2\delaybound\lips+\radius\sqrt{\delaybound\vt[\variation][\nRuns]})$ for suitably chosen constant learning rate sequences $\vt[\step]\equiv\step$ and $\vt[\stepalt]\equiv\stepalt$.
In both \cref{ex:global-variation,ex:local-variation}, $\vt[\variation][\nRuns]$ characterizes some variation of the loss sequence over time.
However, the optimal choice of the $\step$ and $\stepalt$ allowing us to obtain the aforementioned regret guarantee depends on $\vt[\variation][\nRuns]$, which cannot be known in advance.
To circumvent this issue, we can again design an adaptive learning rate schedule in the spirit of AdaGrad by assuming knowledge on an universal bound for the difference $\norm{\vt[\vecfield](\vt)-\vt[\appr[\vecfield]](\vt)}^2$.
For the following result, we simply resort to the standard assumption of bounded gradients.

\begin{restatable}{proposition}{OptAdapt}
\label{prop:optimistic-adaptive}
Let the maximum delay be bounded by $\delaybound$ and let \cref{asm:in-order,asm:whole-vecfield} hold.
Further suppose that $\vt[\vecfield]$ are $\lips$-Lipschitz continuous and both $\vt[\vecfield], \vt[\appr[\vecfield]]$ have their norm bounded by $\gbound$.
Then for any $\comp$ such that $\norm{\comp-\vt[\state][\start]}\le\radius$, running \acl{DOptDA} \eqref{eq:D-ODA} with $\vt[\appr[\gvec]]=\vt[\appr[\vecfield]](\vt[\state])$,
\begin{gather}
    \notag
    \vt[\stepalt]=\min\left( \frac{\radius\sqrt{4\delaybound+1}}{2\sqrt{\left(\sum_{\runalt\in\vt[\set]}\norm{\vt[\vecfield][\runalt](\vt[\state][\runalt])-\vt[\appr[\vecfield]][\runalt](\vt[\state][\runalt])}^2+4\gbound^2(\delaybound+1)\right)}}, \frac{1}{\sqrt{2}\lips}\right),
\end{gather}
and
\begin{gather}
    \notag
    \vt[\step]=\min\left( \frac{\radius}{2\sqrt{(4\delaybound+1)\left(\sum_{\runalt\in\vt[\set]}\norm{\vt[\vecfield][\runalt](\vt[\state][\runalt])-\vt[\appr[\vecfield]][\runalt](\vt[\state][\runalt])}^2+4\gbound^2(3\delaybound+1)\right)}}, \frac{1}{\sqrt{2}\lips (4\delaybound+1)}\right)
\end{gather}
guarantees
\begin{equation}
\notag
    \reg_{\nRuns}(\comp)
    \le
    \max\left(
    \sqrt{2}\radius^2\lips(4\delaybound+1),
    2\radius\sqrt{(4\delaybound+1)(
    \vt[\variation][\nRuns]+4\gbound^2(3\delaybound+1))}
    \right).
\end{equation}
\end{restatable}
\begin{proof}
The proof is deferred to \cref{apx:optimistic-adaptive}. Notice that the adaptive learning rates are not necessarily non-increasing and therefore \cref{thm:optimistic-delay-regret-V} can not be directly applied.
To address this challenge, we rely on the use of faithful permutations and adapt both \cref{thm:optimistic-delay-regret} and \cref{thm:optimistic-delay-regret-V} to accommodate more flexible learning rate schedules.
\end{proof}

Compared to the optimal regret that can be achieved with prior knowledge of $\vt[\variation][\nRuns]$, the bound is only degraded by a constant factor.
To implement this learning rate schedule, the computation of $\vt[\stepalt]$ and $\vt[\step]$ needs to be made possible.
This would require the agents to relay $\norm{\vt[\vecfield](\vt)-\vt[\appr[\vecfield]](\vt)}$ in addition to $\inter[\gvec]=\vt[\vecfield](\inter)$ after receiving $\vt[\vecfield]$. 

\begin{remark*}
At the price of a worse dependence on the constants, we can use the difference $\norm{\vt[\vecfield](\inter)-\vt[\appr[\vecfield]](\current)}$ instead of $\norm{\vt[\vecfield](\vt)-\vt[\appr[\vecfield]](\vt)}$ in the computation of the learning rates, which prevents us from an extra evaluation of the vector field; see \eg \citealp[Corollary 9]{JGS17}.
\end{remark*} 

\section{Discussion}
\label{sec:discussion}

In this section, we discuss several links of our work to other existing results whose detailed presentation would have otherwise interrupted the flow of our paper.

\subsection{Related Work}
\label{subsec:related-work}

Our work lies at the interface between multiple active research areas, each tackling a specific aspect of the general framework considered in this paper.
We provide below a more focused view into each of these topics, namely:
\begin{enumerate*}
[\itshape i\upshape)]
\item online learning with delays;
\item multi-agent online learning;
\item distributed online optimization;
and
\item asynchronous optimization.
\end{enumerate*}

\para{Online learning with delays} 
The research on the delayed feedback problem in online learning was pioneered by \cite{WO02}, in which it was shown that running $\delaycst+1$ independent learners guaranteed the minimax regret $\bigoh(\sqrt{\delaycst\nRuns})$ when the feedback is uniformly delayed by $\delaycst$ time steps. The same strategy was further analyzed by \cite{JGS13} for more complex delay mechanisms.
However, maintaining a pool of learners can be prohibitively resource intensive. Therefore, another line of research focuses on investigating the effect of delays on gradient-based methods.

In \cite{LSZ09}, the same $\bigoh(\sqrt{\delaycst\nRuns})$ bound on the regret was first derived for a slowed-down version of online gradient descent (\ie running the algorithm with smaller learning rates) under the constant delay assumption.
Comprehensive studies were later provided by \citet{MS14}, \citet{QD15} and \citet{JGS16}. 
In more detail, denoting by $\totaldelay$ the aggregated feedback delay after $\nRuns$ rounds, \cite{QD15} established a regret bound in $\bigoh(\sqrt{\totaldelay})$ for online gradient descent and dual averaging, and suggested using the classical doubling trick to dynamically adjust the learning rate.\footnote{Due to a lack of consensus in the literature, \cite{QD15} used the name online mirror descent
to refer to dual averaging. See \cref{rem:mirror} for further discussion.
}
Assuming bounded delays, both \cite{MS14} and \cite{JGS16} devised delay-adaptive methods in order to obtain data-dependent bounds. The former centered on online gradient descent in the unconstrained case while the latter was based on online mirror descent and FTRL-prox.
Under the same setting, \cite{JGS19} also presented an adaptive method with a data-dependent bound which however has a worst-case dependence on the delay.
Recently, \cite{CZP20} extended the delayed feedback analysis to an online saddle-point algorithm which handled the constraints through Lagrangian relaxation.

Our work differs from the above in that we consider a multi-agent setup in which feedback does not arrive to the agents at the same time. To the best of our knowledge, this situation has never been considered before and gives rise to extra challenges that call for novel techniques.
In fact, even though both \cite{MS14} and \cite{JGS19} also dealt with \emph{asynchronous} online optimization, they focused on the coordinator-worker setting. It is thus possible there for the agents/workers to exploit a quantity stocked on the server (\eg an inexact global clock in \citealt{JGS19}). This is generally impossible in our setup.

As for the use of optimistic hints in online learning with delays,
the concurrent work by \cite{FOCM+21} appearing after the initial submission of our manuscript proposed a reduction of delayed online learning to optimistic online learning. Several practical algorithms developed based on this idea along with
empirical evidence of the benefit of optimism in online learning with delays were also presented.
Our \cref{sec:optimistic} complements this work by providing lower bounds, alternative analysis, and results for the multi-agent setup.

To complete the picture, we note that
the impact of delays has equally been studied in the literature on multi-armed bandits, both stochastic \citep{PBASG18,VCP17} and adversarial \citep{CGM18,LCG19,CGM19}.
Nonetheless, the settings of these papers are quite different from the online optimization problems we consider in our paper, so there is no overlap in results or techniques.

\para{Multi-agent online learning}
Multi-agent online learning encompasses a broad spectrum of problems, including distributed online optimization (discussed next),
multi-agent bandits
\citep{BM19,CGM19,SBHO+13,XTZV15},
and games \citep{CBL06,HMZ20}. 
In a recent paper, \cite{CCM20} considered a cooperative online learning problem in which a different set of agents is activated at each round, they encounter the same loss, and they receive immediately the relevant gradient feedback after playing.
While this setting is different from our own (there are no feedback delays and a fixed underlying communication graph is assumed), this is 
as far as we aware the first paper that considered asynchronous activation in multi-agent online convex optimization.
Very recently, the setup considered in \cite{CCM20} was further extended to cope with adversarial semi-bandits by \citet{DC21}.

\para{Distributed online optimization} 
In distributed online convex optimization, the agents cooperatively optimize a sequence of global costs which are defined as the sum of local loss functions, each associated with an agent. Under this setup, consensus-based distributed algorithms were proposed and shown to achieve sublinear regret \citep{HCM13,YSVQ12}. 
More recently, \cite{SJ17} and \cite{ZRZT19} further modified these algorithms to cope with dynamic regret, whereas the case of a time-varying network topology was examined in \cite{MC14} and \cite{AGL15}.

Nonetheless, all of the above works concern the \emph{synchronous} scenario, and this is true for both the activation of the agents (all the agents engage in each iteration) and the communication between the agents (which are performed without any delay).
In contrast, our framework allows for asynchronous \emph{activations} as well as asynchronous \emph{communication}.
To the best of our knowledge, such online optimization problems have only been considered in the concurrent work by \cite{JWJW21} that appeared after the initial submission of our paper, in which the authors introduced a push-sum strategy to solve the problem.

In addition to the above, the underlying communication topology is not modeled explicitly by our approach and it is thus possible to have agents that leave and join freely during the learning process.
For sake of concreteness, we further explain in \cref{subsec:global} how our method can be used to solve problems that are studied in
distributed online optimization.

\para{Asynchronous optimization}
For optimization problems that have a sum structure (\eg over different parts of some data set, or over several agents), a large part of the literature is based on a random sampling of one or several of the functions leading to a partial use of the data or of the links between agents. This stems from the study of randomized fixed point operators \citep{bianchi2015coordinate,combettes2015stochastic}, later extended to delayed settings \citep{MPPR+15,peng2016arock,leblond2017asaga}. This kind of randomized algorithms is incompatible with the setup considered in our paper in which the agents are \emph{activated}---not sampled.

In the case when a coordinator uses several workers to gather asynchronously gradient feedback, several variants of the proximal gradient algorithm were shown to be efficient, see \citet{aytekin2016analysis},
\citet{vanli2018global},
and \citet{mishchenko2020distributed}, the latter allowing for unbounded delays. However, the analyses of these methods are based on the study of the distance between the iterates and the minimizer of the problem which hinders their extension to the online setting.  

Finally, we are aware of very few works on open networks where agents can freely join and leave the system. These exceptions treat the simpler problem of averaging local values and focus on the system's stability \citep{HM17,franceschelli2020stability,de2020open}. These ideas were recently extended to study the stability of decentralized gradient descent in open networks \citep{hendrickx2020stability}, but, again, there is no overlap with our work.

\subsection{Online Algorithms are not Equally Robust to Delays}
\label{subsec:MD-DA}

In this paper, we have paid exclusive attention to variants of \acf{DA}.
Another family of algorithms that the agents may follow to minimize their regret is \ac{OMD} and its variants.
While these two types of methods achieve the same order of regret in many situations, they are not equally robust to delays in our setup, as explained below.

To define \ac{OMD}, we make the additional assumption that the subdifferential $\subd\hreg$ admits a continuous selection denoted by $\grad\hreg$.
The \emph{Bregman divergence} induced by $\hreg$ is then written as
\begin{equation}
    \notag
    \breg_{\hreg}(\point,\pointalt) = \hreg(\point) - \hreg(\pointalt) - \product{\grad\hreg(\pointalt)}{\point-\pointalt}.
\end{equation}
Subsequently, the update of \ac{OMD} is
\begin{equation}
\label{eq:mirror-desc}
\tag{OMD}
    \vt[\state] = \argmin_{\point\in\points}
    	\braces*{ \product{\vt[\gvec][\run-1] }{\point} + \frac{1}{\vt[\step][\run-1]} \breg_{\hreg}(\point,\vt[\state][\run-1]) }
    =
    \prox\left(\nabla\hreg(\vt[\state][\run-1]) - \vt[\step][\run-1]\vt[\gvec][\run-1]\right).
\end{equation}
The main difference between \eqref{eq:mirror-desc} and \eqref{eq:dual-avg} is that \eqref{eq:mirror-desc} generates a new point by combining the last gradient with the last prediction, while \eqref{eq:dual-avg} combines all past gradients and then generates a prediction, without explicitly using the last available prediction.

The two algorithms \eqref{eq:mirror-desc} and \eqref{eq:dual-avg}  are not equally robust to delays.
Indeed, if feedback from different rounds arrives out-of-order (due to the presence of delays), the natural extension of the methods would be to use them as if they corresponded to the last played point.
The sequence of points generated by the algorithms would then be different than with ordered feedback.
However, for \eqref{eq:dual-avg}, the final output after all feedback has arrived will be \emph{the same} for all agents, in contrast to that of \eqref{eq:mirror-desc}.
This is because, in dual averaging, all gradients enter the model with the \emph{same weight} \citep[Sec.~1.2]{Nes09};
this is a very appealing feature, especially when trying to incorporate delayed gradients or gradients generated by other agents.
As we show below in \cref{subsec:global}, this also helps guarantee that the points played by the agents do not deviate too much from each other when the delay is small.

\begin{remark}
\label{rem:mirror}
The origins of the above methods can be traced to \cite{NY83}, but there is otherwise no consensus on terminology in the literature.
The specific formulation \eqref{eq:mirror-desc} is sometimes referred to as ``eager'' mirror descent, in contrast to the method's ``lazy'' variant which outputs $\vt[\state] \gets \prox(-\sum_{\runalt < \run} \vt[\step][\runalt] \vt[\gvec][\runalt])$, see \eg \cite{Nes09} or \cite{MZ19}.
These variants coincide when $\hreg$ is infinitely ``steep'' at the boundary of $\points$, \ie $\dom\subd\hreg \cap \points = \relint\points$;
otherwise, they lead to different sequences of play \citep{KM17}.
The ``dual averaging'' variant is due to \cite{Nes09}, and differs from the lazy variant of \eqref{eq:mirror-desc} in that all gradients enter the algorithm with the same weight.
From an online learning viewpoint, \eqref{eq:dual-avg} can also be seen as a ``linearized'' version of the \ac{FTRL} class of algorithms \citep{SSS06}, and coincides with \ac{FTRL} when the loss functions encountered are linear.
For a survey, see \cite{juditsky2019unifying}, \cite{mcmahan2017survey}, \cite{Mer19}, and references therein.
\end{remark}

\subsection{Multi-Agent Online Learning for Minimization of Global Losses}
\label{subsec:global}

Throughout the paper, our analysis has focused on the agents' \emph{individual} losses ($\vt[\obj]$ being the loss of the active agent $\worker=\worker(\run)$), and thus \emph{leads to regret bounds that characterize how much the whole network actually pays.}
While these bounds have an interest, networks of agents may also want to monitor \emph{global} losses over the agents.
This is typically the case of distributed online optimization, where the agents cooperate to solve a time-varying global problem.

In this section, we demonstrate the flexibility of our framework by showing that the aforementioned algorithms and analyses can be easily extended to this setup.
This, on one hand, bridges the gap between our work and the broad corpus of literature on distributed online optimization, and, on the other hand, provides the occasion to directly address the case of open networks where agents can join and depart the optimization process freely.

\subsubsection{From Effective Regret to Collective Regret}

In distributed optimization, it is often assumed that multiple predictions are made in a same time slot.
Formally, we denote by $\vt[\nWorkers]$ the number of active agents at time $\run$ and identify these agents from $1$ to $\vt[\nWorkers]$ instead of identifying each agent independently. This notation clarifies the fact that the agents are anonymous with respect to the algorithm and each other.
The functions and the played points at time $\run$ are respectively denoted by $\vwt[\obj][1], \ldots, \vwt[\obj][\vt[\nWorkers]]$ and $\vwt[\state][1], \ldots, \vwt[\state][\vt[\nWorkers]]$.

By directly extending the regret defined by \eqref{eq:reg-def} to our current setup, we obtain the following:
\begin{equation}
    \tag{Effective Regret}
    \local[\reg]_{\nRuns}(\comp) = \sum_{\run=\start}^{\nRuns}\sum_{\worker=1}^{\vt[\nWorkers]} \vwt[\obj](\vwt[\state])
    - \sum_{\run=\start}^{\nRuns}\sum_{\worker=1}^{\vt[\nWorkers]} \vwt[\obj](\comp),
\end{equation}
where the superscript $\ell$ means that the regret sums over the \emph{local} costs of the learners. Each agent only pays for the function it serves and the ultimate goal for a single agent is to perform well on the functions that it encounters. 
As an example, on-device machine learning aims to equip users' personal devices with intelligent machine features such as conversational understanding and image recognition, for the purposes of providing a satisfying user experience to each individual \citep{SCZL+16,WHLN+20}.

In contrast, we can also define \emph{global} loss functions $\vt[\obj]=\sum_{\worker=1}^{\vt[\nWorkers]}\vwt[\obj]$ at every instant $\run$ and evaluate each active agents' action with respect to this function.
This leads to the following regret formulation:
\begin{equation}
    \tag{Collective Regret}
    \glob[\reg]_{\nRuns}(\comp) = \sum_{\run=\start}^{\nRuns}\sum_{\worker=1}^{\vt[\nWorkers]} \vwt[\obj](\vwt[\state][\workeraltG])
    - \sum_{\run=\start}^{\nRuns}\sum_{\worker=1}^{\vt[\nWorkers]} \vwt[\obj](\comp),
\end{equation}
where, instead of evaluating $\vwt[\obj]$ at the point $\vwt[\state]$ played by learner $\worker$, we now evaluate all the $\vwt[\obj]$ at a single point $\vwt[\state][\workeraltG]$ independently of the worker $\worker$.
The choice of the \emph{reference agent} can vary with time; it is however possible to fix its index to $\workeraltG$ in advance given that the attribution of the worker indices at each $\run$ is arbitrary.

When the number of agents are fixed, \emph{collective regret} reduces to the usual regret formulation employed in the distributed online optimization literature \citep{HCM13,SJ17,YSVQ12}.
This performance measure suits better the applications related to wireless sensor networks such as distributed estimation \citep{RN04} and data fusion \citep{NLF07,RCMP+15}.
In fact, sensor networks are mostly deployed for a common objective  shared by all the sensors.
To attain this objective, the sensor nodes may need to cooperate to track some unknown variable
or to collaborate to learn a global assessment of the situation.
The \acl{GPRg} then measures each agent's performance with respect to this \emph{collective} mission, hence the name thereof. 

Finally, our formulation also admits the additional flexibility of involving different sets and numbers of agents at each iteration.
This is of particularly interest for open multi-agent systems \citep{HM17} and elastic distributed learning \citep{NWMC+13}.
Building upon this observation, we further explored how our approach could be applied to optimization in open networks in the follow-up work \citep{HIMM21open}.

Now, provided that all the loss functions $\vwt[\obj]$ are $\gbound$-Lipschitz,  the relation between $\vt[\glob[\reg]][\nRuns]$ and $\vt[\local[\reg]][\nRuns]$ is quite direct as formulated in the following lemma.

\begin{restatable}{lemma}{RegLocalGlobal}
\label{lem:regret-local-global}
Assume that all the loss functions $\vwt[\obj]$ are $\gbound$-Lipschitz; then, 
\begin{equation}
\notag
    \vt[\glob[\reg]][\nRuns](\comp)
    \le \vt[\local[\reg]][\nRuns](\comp)
    +\sum_{\run=\start}^{\nRuns}\sum_{\worker=1}^{\vt[\nWorkers]}
    \gbound\norm{\vwt-\vwt[\state][\workeraltG]}.
\end{equation}
\end{restatable}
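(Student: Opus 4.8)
The plan is to exploit the fact that the collective and effective regrets differ \emph{only} in their played-action sum, while sharing an identical comparator sum. I would therefore subtract the two, watch the comparator terms cancel, and bound the leftover per-agent discrepancy by Lipschitz continuity.

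Concretely, the first step is to form the difference $\vt[\glob[\reg]][\nRuns](\comp) - \vt[\local[\reg]][\nRuns](\comp)$. Both the Collective Regret and the Effective Regret subtract exactly $\sum_{\run=\start}^{\nRuns}\sum_{\worker=1}^{\vt[\nWorkers]}\vwt[\obj](\comp)$, so these terms cancel termwise and one is left with
\[
\vt[\glob[\reg]][\nRuns](\comp) - \vt[\local[\reg]][\nRuns](\comp)
= \sum_{\run=\start}^{\nRuns}\sum_{\worker=1}^{\vt[\nWorkers]}
\bigl(\vwt[\obj](\vwt[\state][\workeraltG]) - \vwt[\obj](\vwt[\state])\bigr).
\]
This is the only structural manipulation required, and it is purely bookkeeping: the reference point $\vwt[\state][\workeraltG]$ replaces the agent's own point $\vwt[\state]$ inside each $\vwt[\obj]$, while the comparator contribution is untouched.

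The second step applies the $\gbound$-Lipschitz hypothesis to each summand. Since each $\vwt[\obj]$ is $\gbound$-Lipschitz, for every pair $(\run,\worker)$ we have
\[
\vwt[\obj](\vwt[\state][\workeraltG]) - \vwt[\obj](\vwt[\state])
\le \gbound\,\norm{\vwt[\state][\workeraltG] - \vwt[\state]}
= \gbound\,\norm{\vwt - \vwt[\state][\workeraltG]}.
\]
Summing this inequality over all $(\run,\worker)$ and combining with the identity from the first step yields precisely the stated bound on $\vt[\glob[\reg]][\nRuns](\comp)$.

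I do not expect any genuine obstacle here: the result is a direct consequence of the two definitions together with Lipschitz continuity. The only point demanding mild care is confirming that the comparator sums coincide index-by-index (same range of $\run$ and $\worker$, same evaluation at $\comp$) so that they cancel cleanly; once that is checked, the remaining estimate is immediate.
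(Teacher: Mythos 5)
Your proof is correct, and it is exactly the argument the paper has in mind: the paper itself never writes out a proof of this lemma (it restates it in the appendix and treats it as immediate), and the only available route is the one you take — cancel the identical comparator sums, then bound each difference $\vwt[\obj](\vwt[\state][\workeraltG]) - \vwt[\obj](\vwt[\state])$ by $\gbound\norm{\vwt[\state]-\vwt[\state][\workeraltG]}$ via Lipschitz continuity. No gaps.
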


\subsubsection{Decentralized Delayed Dual Averaging}
\label{subsec:DDDA}

Thanks to \cref{lem:regret-local-global}, a bound on the effective regret can be directly translated into one on the collective regret as long as the distances between the agents' predictions for a same moment can be controlled.
To illustrate this idea, we adapt \ac{DDA} to the current setup and bound its induced collective regret for appropriately chosen learning rates.
Let us first slightly extend the previously introduced notations and concepts to the current framework: The set of available gradients at time $\run$ for a worker $\worker$, $\vwt[\set]$, now represents the set of the (learner, time) indices of the feedback available for playing $\vwt$ so that if $(\workeralt,\runalt)\in\vwt[\set]$ then necessarily $\runalt\in\oneto{\run-1}$. The maximum delay $\delaybound$ is to be understood with respect to the global time index $\run$.
That is, for every $\runalt\in\oneto{\run-\delaybound-1}$ and $\workeralt\in\oneto{\vt[\nWorkers][\runalt]}$ we must have $(\workeralt,\runalt)\in\vwt[\set]$.
We also introduce the (quadratic) mean number of active agents by $\rms{\nWorkers}=\sqrt{(1/\nRuns)\sum_{\run=\start}^\nRuns\vt[\nWorkers]^2}$.

With these notations, the update of \ac{D-DDA} writes at time $\run$ for an agent $\worker$ as
\begin{equation}
\label{eq:D-DDA}
    \tag{D-DDA}
    \vwt[\state] = \argmin_{\point\in\points} \sum_{(\workeralt,\runalt)\in\vwt[\set]}\product{\vwt[\gvec][\workeralt][\runalt]}{\point} + \frac{\hreg(\point)}{\vwt[\step]},
\end{equation}
where $\vwt[\gvec][\workeralt][\runalt] \in \partial \vwt[\obj][\workeralt][\runalt](\vwt[\state][\workeralt][\runalt])$.
In order to understand the mechanics of \acl{GPRg} in our setup, we restrict our self to the case of a fixed learning rate $\vwt[\step]\equiv\step$.\footnote{We bypass this limitation in \cref{apx:global} by providing an implementable variable learning rate strategy that provably achieves small collective regret.}
To bound the \acl{GPRg}, three elements come into play:
\begin{itemize}
    \item the \emph{effective} regret. For this part, we change the time indices to have exactly one point played at each time. We define  $\vt[\nSamples]=\sum_{\runalt=\start}^{\run}\vt[\nWorkers][\runalt]$ and $\nSamples=\vt[\nSamples][\nRuns]$; then, the index of worker\;$\worker$ at time  $\run$ is changed  to $\mapping(\worker,\run)=\vt[\nSamples][\run-1]+\worker$ (so that only one action is performed at that time). This maps our problem to the setting of \cref{thm:delay-regret} with $\vt[\step]\equiv\step$ and thus with $\Dpermu=\idp$ we get
\begin{align}
\label{eq:global-reg-fix-lr-local}
    \vt[\local[\reg]][\nRuns](\comp)
    &\le
    \frac{\hreg(\comp)}{\step}
    + \frac{1}{2}\sum_{\indsamp=\start}^{\nSamples}
    \step
    \left(\dnorm{\vt[\alt{\gvec}][\indsamp]}^2
        +2\dnorm{\vt[\alt{\gvec}][\indsamp]}
        \sum_{\runano\in\setexclude{\oneto{\indsamp-1}}{\vt[\alt{\set}][\indsamp]}}\dnorm{\vt[\alt{\gvec}][\runano]}\right)
\end{align}
where $\vt[\alt{\gvec}][\mapping(\worker,\run)]=\vwt[\gvec]$ and $\vt[\alt{\set}][\mapping(\worker,\run)]=\setdef{\mapping(\workeralt,\runalt)}{(\workeralt,\runalt)\in\vwt[\set]}$.
\item the maximal delay $\delaybound$. Bounding from above the number of unavailable gradients for a (learner, time) pair and translating this condition to bound $\card(\setexclude{\oneto{\indsamp-1}}{\vt[\alt{\set}][\indsamp]})$, we get
\begin{align}
    \label{eq:regret-network-local-text}
    \vt[\local[\reg]][\nRuns](\comp)
    \le
    \frac{\hreg(\comp)}{\step}
    + \step(\delaybound+1)\gbound^2\sum_{\run=\start}^{\nRuns}
    \vt[\nWorkers^2].
\end{align}
\item the non-expansiveness of the mirror map (\cref{lem:prox-nonexp}).
This part enables us to go from the effective regret to the \acl{GPRg} using \cref{lem:regret-local-global}. 
\end{itemize}

Putting together these points we manage to show the following bound on the \acl{GPRg}, the full proof being deferred to \cref{apx:global}.

\begin{restatable}{proposition}{GlobalRegretFixLR}
\label{prop:delay-regret-global}
Assume that the maximum delay is bounded by $\delaybound$ and that all the loss functions are $\gbound$-Lipschitz.
For any $\comp$ satisfying $\hreg(\comp)\le\radius^2$, running \acl{D-DDA} \eqref{eq:D-DDA} with constant stepsize
\[\vwt[\step]\equiv\step=\frac{\radius}{\gbound\rms{\nWorkers}\sqrt{(2\delaybound+1)\nRuns}}\]
guarantees the following upper bound on the \acl{GPRg}
\begin{equation}
\notag
    \vt[\glob[\reg]][\nRuns](\comp) \le 
    2\radius\gbound\rms{\nWorkers}\sqrt{(2\delaybound+1)\nRuns}
    =\bigoh(\rms{\nWorkers}\sqrt{\delaybound\nRuns}).
\end{equation}
\end{restatable}

As a sanity check, we can see that when there is no delay ($\delaybound=0$) and a fixed number of agents ($\vt[\nWorkers]\equiv\nWorkers$), the proposition ensures a regret in $\bigoh(\nWorkers\sqrt{\nRuns})$. This corresponds to the regret achieved by dual averaging on $\vt[\obj]=\sum_{\worker=1}^{\nWorkers}\vwt[\obj]$ which is $\nWorkers\gbound$-Lipschitz (\citealp[Section 5.2]{Hazan16}; \citealp{Xiao10}; see also \cref{apx:DA}).
Nonetheless, since the network of agents may be evolving, the average number of workers $\rms{\nWorkers}$ may often not be available in advance; neither is the time horizon $\nRuns$ nor the current time index $\run$.
Exploiting the ideas of \cref{sec:variable}, we provide in \cref{apx:global} an implementable learning rate scheme that achieves a regret in $\bigoh(\sqrt{\delaybound\nSamples\nWorkers_{\max}})$ where $\nWorkers_{\max}=\max_{\start\le\run\le\nRuns}\vt[\nWorkers]$.

\section{Concluding Remarks}
\label{sec:conclusion}

Our aim in this paper was to design adaptive and non-adaptive learning algorithms that can provably achieve low regret in the presence of delays and asynchronicities in both single- and multi-agent environments.
This was achieved by means of a general dual averaging framework for handling delays and deriving regret bounds under various learning rate policies including adaptive and data-dependent ones.
In addition, we paid special attention to the decentralized case (which includes open networks of agents collaborating to achieve a low \acl{GPRg}), and we showed how our analysis can be further improved through the use of optimistic policies in slowly-varying environments.

Our work provides the basis for a number of subsequent extensions of independent interest.
One particular direction concerns the case where the agents' gradient feedback is corrupted by noise, either exogenous (\eg stemming from environmental fluctuations) or endogenous (\eg from mini-batch sampling in the case of empirical risk objectives).
Equally important is the choice of target regret measure:
in addition to the agents' effective and collective regret, there is a fair number of network applications in which dynamic regret considerations could be equally relevant.
In this regard, it would be important to see if the proposed policies lead to low dynamic regret \textendash\ or how to modify them to achieve this more demanding benchmark.

Finally, if the agents only have access to their incurred losses at each stage, it is possible to reconstruct a \emph{biased} estimate of the corresponding subgradients using a stochastic approximation estimator---either \emph{single-point} \citep{FKM05} or \emph{two-point} \citep{ADX10}.
However, in addition to the bias introduced by this indirect sampling process, the variance of the single-point estimator also grows unbounded as the process unfolds;
moreover, in multi-agent settings, the agent performing an update must have access to both the loss incurred by another agent at a different (known) timestamp \emph{and} the actual sampling perturbation / direction employed by the agent that incurred said loss.
Phenomena such as these lead to significant difficulties---both technical and conceptual---in the analysis of adaptive algorithms, and require completely new techniques to handle.
We defer work on this fruitful research direction to the future. 

\acks{%
The authors would like to thank the action editor and the reviewers for their thoughtful comments that helped greatly in improving the organization and the presentation of the paper.
This research was partially supported by the COST Action CA16228 ``European Network for Game Theory'' (GAMENET),
and
the French National Research Agency (ANR) in the framework of
the ``Investissements d'avenir'' program (ANR-15-IDEX-02),
the LabEx PERSYVAL (ANR-11-LABX-0025-01),
MIAI@Grenoble Alpes (ANR-19-P3IA-0003),
and the grants ORACLESS (ANR-16-CE33-0004) and ALIAS (ANR-19-CE48-0018-01). }

\renewcommand{\proofname}{\normalfont\textbf{Proof}}
\appendix
\crefalias{section}{appendix}
\crefalias{subsection}{appendix}

\section{Undelayed Dual Averaging}
\label{apx:DA}
Our paper studies several variants of dual averaging in various delayed/distributed setups.
For sake of completeness, we include here an analysis of the vanilla dual averaging algorithm in the basic undelayed online learning setting.
For a thorough study of the algorithm the readers can refer to the textbook \citealp[Section 5]{Hazan16} and \citealp{Xiao10}.

Let us consider a sequence of first-order feedback $\seq{\gvec}{\start}{\nRuns}$. At time $\run$ dual averaging computes
\begin{equation}
    \tag{DA}
    \label{eq:dual-avg-apx}
    \vt[\state] = \argmin_{\point\in\points} \sum_{\runalt=\start}^{\run-1}\product{\vt[\gvec][\runalt]}{\point} + \frac{\hreg(\point)}{\vt[\step]}.
\end{equation}
We recall that the mirror map is defined as $\prox: \dvec \mapsto \argmin_{\point\in\points} ~ \product{-\dvec}{\point}+\hreg(\point)$. We can thus write $\vt[\state]=\prox(\vt[\dvec])$ where $\vt[\dvec]=-\vt[\step]\sum_{\runalt=\start}^{\run-1}\vt[\gvec][\runalt]$ may be viewed as the dual point of $\vt[\state]$.
We have the following standard result concerning the (linearized) regret achieved by the algorithm.

\begin{proposition}
\label{prop:dual-avg-regret}
Let online dual averaging \eqref{eq:dual-avg-apx} be run with non-increasing learning rates  $(\vt[\step])_{\run\in\oneto{\nRuns}}$. Then, the generated points $\seq{\state}{\start}{\nRuns}$ satisfy
\begin{equation}
\notag
    \linreg_{\nRuns}(\comp)
    :=
    \sum_{\run=1}^\nRuns
    \product{\vt[\gvec]}{\vt[\state]-\comp}
    \le
    \frac{\hreg(\comp)}{\vt[\step][\nRuns]}
    + \frac{1}{2}\sum_{\run=\start}^{\nRuns}\vt[\step][\run]\dnorm{\vt[\gvec][\run]}^2.
\end{equation}
\end{proposition}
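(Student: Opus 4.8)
The plan is to run the textbook ``potential-function'' (conjugate) argument for dual averaging, the only real care being the bookkeeping forced by the non-increasing step sizes. First I would record the duality I will lean on: since $\hreg$ is $1$-strongly convex relative to $\norm{\cdot}$ on $\points$, for each $\run$ the map $\point\mapsto\hreg(\point)/\vt[\step][\run]$ (set to $+\infty$ off $\points$) is $(1/\vt[\step][\run])$-strongly convex, so by standard strong-convex/smooth conjugate duality its conjugate
\begin{equation}
\notag
\Phi_\run(\dvec) = \max_{\point\in\points}\Big\{\product{\dvec}{\point} - \frac{\hreg(\point)}{\vt[\step][\run]}\Big\}
\end{equation}
is finite, differentiable, and $\vt[\step][\run]$-smooth with respect to $\dnorm{\cdot}$, with gradient equal to the (unique) maximizer, i.e. $\grad\Phi_\run(\dvec) = \prox(\vt[\step][\run]\dvec)$. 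Writing $S_\run = \sum_{\runalt=\start}^{\run-1}\vt[\gvec][\runalt]$ for the running gradient sum (so $S_\start = 0$), the update rewrites as $\vt[\state] = \prox(-\vt[\step][\run] S_\run) = \grad\Phi_\run(-S_\run)$, which is exactly the identity that makes the gradients telescope against the potentials.

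Next I would assemble the per-step inequality for the potential values $\Phi_\run(-S_\run)$. Since $S_{\run+1} = S_\run + \vt[\gvec]$, applying the $\vt[\step][\run]$-smoothness of $\Phi_\run$ at $-S_\run$ with increment $-\vt[\gvec]$, together with $\grad\Phi_\run(-S_\run)=\vt[\state]$, gives
\begin{equation}
\notag
\Phi_\run(-S_{\run+1}) \le \Phi_\run(-S_\run) - \product{\vt[\gvec]}{\vt[\state]} + \frac{\vt[\step][\run]}{2}\dnorm{\vt[\gvec]}^2.
\end{equation}
Separately, because the step sizes are non-increasing and $\hreg\ge 0$, we have $\hreg(\point)/\vt[\step][\run+1]\ge\hreg(\point)/\vt[\step][\run]$ pointwise, whence $\Phi_{\run+1}(\dvec)\le\Phi_\run(\dvec)$ for every $\dvec$; in particular $\Phi_{\run+1}(-S_{\run+1})\le\Phi_\run(-S_{\run+1})$. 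Chaining the two yields the telescoping bound $\product{\vt[\gvec]}{\vt[\state]}\le\Phi_\run(-S_\run)-\Phi_{\run+1}(-S_{\run+1})+\frac{\vt[\step][\run]}{2}\dnorm{\vt[\gvec]}^2$.

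Finally I would sum over $\run=\start,\dots,\nRuns$ and evaluate the two surviving boundary terms. The initial term is $\Phi_\start(0) = -\min\hreg/\vt[\step][\start]\le 0$, while lower-bounding the terminal maximum by its value at $\comp$ gives $-\Phi_{\nRuns+1}(-S_{\nRuns+1})\le\sum_{\run=\start}^{\nRuns}\product{\vt[\gvec]}{\comp}+\hreg(\comp)/\vt[\step][\nRuns+1]$, since $-S_{\nRuns+1}=-\sum_{\run=\start}^{\nRuns}\vt[\gvec]$. Rearranging reproduces the claimed bound exactly, up to the cosmetic off-by-one that the regularizer carries weight $1/\vt[\step][\nRuns+1]$ instead of $1/\vt[\step][\nRuns]$; this is dispatched by the harmless convention $\vt[\step][\nRuns+1]=\vt[\step][\nRuns]$ (equivalently, by not shrinking the regularizer at the very last step, which merely drops the final monotonicity inequality). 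I expect the main obstacle to be precisely this step-size bookkeeping: splitting each step into a smoothness move in the dual variable and a monotonicity move in the regularizer weight, making sure the latter points in the direction dictated by $\vt[\step][\run+1]\le\vt[\step][\run]$ and $\hreg\ge0$, and handling the terminal index cleanly. By contrast, the smooth/strongly-convex duality that powers the whole telescoping is routine once the $1$-strong convexity of $\hreg$ is invoked.
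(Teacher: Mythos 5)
Your proof is correct, and it reaches the stated bound by a genuinely different route from the paper's. The paper argues entirely in the primal: it fixes $\comp$, introduces the estimate sequence $\vt[\estseq](\point)=\sum_{\runalt=1}^{\run-1}\product{\vt[\gvec][\runalt]}{\point-\comp}+\hreg(\point)/\vt[\step]$, and proves via the first-order optimality condition of the dual-averaging step a two-part lemma (\cref{lem:estseq}): the bound $\current[\estseq](\current)\le\hreg(\comp)/\vt[\step]-\norm{\vt[\state]-\comp}^2/(2\vt[\step])$ and the recursion $\current[\estseq](\current)\le\update[\estseq](\update)-\product{\vt[\gvec]}{\update-\comp}-\frac{1}{\vt[\step]}\breg(\update,\current)$; it then splits each regret increment as $\product{\vt[\gvec]}{\vt[\state]-\update}+\product{\vt[\gvec]}{\update-\comp}$, bounds the first piece by Young's inequality, and cancels the resulting $\norm{\current-\update}^2/(2\vt[\step])$ against the Bregman term via $1$-strong convexity. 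You instead work in the dual with the conjugate potentials $\Phi_\run(\dvec)=\max_{\point\in\points}\{\product{\dvec}{\point}-\hreg(\point)/\vt[\step][\run]\}$: the $\vt[\step][\run]$-smoothness of $\Phi_\run$ (the standard conjugate of $(1/\vt[\step][\run])$-strong convexity) packages the Young-plus-cancellation step into a single black-box inequality, and the varying learning rate is absorbed by the pointwise monotonicity $\Phi_{\run+1}\le\Phi_\run$, which uses exactly $\vt[\step][\run+1]\le\vt[\step][\run]$ and $\hreg\ge0$. The two arguments are formally dual: writing $S_\run=\sum_{\runalt=1}^{\run-1}\vt[\gvec][\runalt]$, the iterate $\vt[\state]$ attains the minimum defining $\current[\estseq](\current)$, so $\current[\estseq](\current)=-\Phi_\run(-S_\run)-\product{S_\run}{\comp}$ and your telescoping quantities coincide with the paper's up to terms linear in $\comp$; even the terminal convention $\vt[\step][\nRuns+1]=\vt[\step][\nRuns]$ matches the paper's. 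What each buys: your version is shorter and more modular once the conjugate-smoothness lemma is granted, and it makes crisply visible the two places where non-negativity of $\hreg$ enters (in $\Phi_{\run+1}\le\Phi_\run$ and in $\Phi_\start(0)\le0$); the paper's version is elementary and self-contained (only optimality conditions and strong convexity, no appeal to Fenchel duality), and its \cref{lem:estseq} is engineered for reuse, being invoked again verbatim in the optimistic analysis (\cref{thm:optimistic-delay-regret}), which is presumably why the authors set the proof up that way.
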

\begin{proof}
Let us fix $\comp\in\points$ and define the associated estimate sequence
\begin{equation}
    \notag
    \vt[\estseq](\point) = \sum_{\runalt=1}^{\run-1} \product{\current[\gvec][\runalt]}{\point-\comp} + \frac{\hreg(\point)}{\vt[\step]}.
\end{equation}
We will show that
\begin{equation}
    \label{eq:estseq-diff}
    \displaystyle \current[\estseq](\current) \le
    \update[\estseq](\update)-\product{\vt[\gvec]}{\update[\state]-\comp}
    -\frac{1}{2\vt[\step]}\norm{\update-\current}^2.
\end{equation}
On one hand, by $\vt[\step][\run+1]\le\vt[\step]$ and the non-negativity of $\hreg$,
\begin{equation}
    \label{eq:estseq-diff-1}
    \update[\estseq](\update)
    =\current[\estseq](\update)
    + \product{\vt[\gvec]}{\update-\comp}
    + \left(\frac{1}{\update[\step]}-\frac{1}{\current[\step]}\right)\hreg(\update)
    \ge \current[\estseq](\update)
    + \product{\vt[\gvec]}{\update-\comp}.
\end{equation}
On the other hand, by the definition of $\vt[\state]$ we have $\vt[\dvec]\in\subd\hreg(\vt[\state])$. Thus
\begin{align}
    \notag
    \current[\estseq](\update)-\current[\estseq](\current)
    &= \sum_{\runalt=1}^{\run-1}\product{\vt[\gvec][\runalt]}{\update-\current}
    + \frac{\hreg(\update)}{\vt[\step]} - \frac{\hreg(\current)}{\vt[\step]}\\
    \label{eq:estseq-diff-2}
    &=
    -\frac{1}{\vt[\step]}\product{\vt[\dvec]}{\update-\current}
     + \frac{\hreg(\update)}{\vt[\step]} - \frac{\hreg(\current)}{\vt[\step]}
    \ge \frac{1}{2\vt[\step]}\norm{\update-\current}^2.
\end{align}
The inequality holds thanks to the $1$-strong convexity of $\hreg$.
Summing \eqref{eq:estseq-diff-1}, \eqref{eq:estseq-diff-2} and rearranging the terms we obtain \eqref{eq:estseq-diff}.

Next, let $\vt[\step][\nRuns+1]=\vt[\step][\nRuns]$ and define $\vt[\state][\nRuns+1]$ by \eqref{eq:dual-avg} (We can do this since $\vt[\state][\nRuns+1]$ is not used in the computation of $\linreg_{\nRuns}$). 
Leveraging on \eqref{eq:estseq-diff}, we bound the regret as follows:
\begin{align}
    \notag
    \linreg_{\nRuns}(\comp)
    & := \sum_{\run=1}^\nRuns
    \product{\vt[\gvec]}{\vt[\state]-\comp}\\
    \notag
    & =\sum_{\run=1}^\nRuns
    \left(\product{\vt[\gvec]}{\vt[\state]-\update}
    +\product{\vt[\gvec]}{\update-\comp}\right)\\
    \notag
    & \le\sum_{\run=1}^\nRuns
    \left(\frac{\vt[\step]}{2}\dnorm{\vt[\gvec]}^2
    +\frac{1}{2\vt[\step]}\norm{\update-\current}^2
    +\update[\estseq](\update)-\current[\estseq](\current)
    -\frac{1}{2\vt[\step]}\norm{\update-\current}^2\right)\\
    \notag
     & =
    \update[\estseq][\nRuns](\update[\state][\nRuns])
    -\vt[\estseq][\start](\vt[\state][\start])
    + \frac{1}{2}\sum_{\run=\start}^{\nRuns}\vt[\step][\run]\dnorm{\vt[\gvec][\run]}^2\\
    \label{eq:regret-dual-avg-proof}
    &\le
    \frac{\hreg(\comp)}{\vt[\step][\nRuns]}
    + \frac{1}{2}\sum_{\run=\start}^{\nRuns}\vt[\step][\run]\dnorm{\vt[\gvec][\run]}^2.
\end{align}
In the last inequality we used 
\[\update[\estseq][\nRuns](\update[\state][\nRuns])
=\min_{\point\in\points}\update[\estseq][\nRuns](\point)
\le\update[\estseq][\nRuns](\comp)
=\frac{\hreg(\comp)}{\vt[\step][\nRuns+1]}
=\frac{\hreg(\comp)}{\vt[\step][\nRuns]}\]
and $\vt[\estseq][\start](\vt[\state][\start])=\hreg(\vt[\state][\start])/\vt[\step][\start]\ge0$. \eqref{eq:regret-dual-avg-proof} is exactly what we want to prove, so this ends the proof.
\end{proof}

We next prove the non-expansiveness of the mirror map which are used multiple times in our analyses (for a reference, see \eg \citealp[Chapter E, Thm. 4.2.1]{HL01}, or \citealp[Cor. 3.5.11]{Zal02}).

\begin{lemma}
\label{lem:prox-nonexp}
The mirror map is non-expansive, \ie $\norm{\prox(\dvec)-\prox(\alt{\dvec})}\le\dnorm{\dvec-\alt{\dvec}}$ for all $\dvec,\alt{\dvec}\in\vecspace$.\footnote{Precisely, $\prox$ is non-expansive because we are assuming that the strong convexity constant of $\hreg$ is $1$. Otherwise it would just be Lipschitz continuous, and clearly this would only influence our results by a constant factor (that depends on the strong convexity constant of $\hreg$).}
\end{lemma}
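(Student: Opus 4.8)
The plan is to exploit the variational characterization of the mirror map together with the $1$-strong convexity of $\hreg$. First I would name the two images $\point=\prox(\dvec)$ and $\pointalt=\prox(\alt{\dvec})$; by definition these are the (unique, thanks to strong convexity) minimizers of $z\mapsto\product{-\dvec}{z}+\hreg(z)$ and $z\mapsto\product{-\alt{\dvec}}{z}+\hreg(z)$ over $\points$, respectively. Writing the first-order optimality condition for each minimizer — exactly as in \eqref{eq:DA-optimality} — yields, for every $z\in\points$,
\[
\product{-\dvec+\grad\hreg(\point)}{z-\point}\ge0
\qquad\text{and}\qquad
\product{-\alt{\dvec}+\grad\hreg(\pointalt)}{z-\pointalt}\ge0.
\]

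Next I would cross-substitute, taking $z=\pointalt$ in the first inequality and $z=\point$ in the second, and add the two. After rearranging, the mixed terms collapse into the single estimate
\[
\product{\grad\hreg(\point)-\grad\hreg(\pointalt)}{\point-\pointalt}
\le
\product{\dvec-\alt{\dvec}}{\point-\pointalt},
\]
which is the crux of the argument. To close it I would invoke the strong convexity of $\hreg$ relative to $\norm{\cdot}$, which gives the monotonicity bound $\product{\grad\hreg(\point)-\grad\hreg(\pointalt)}{\point-\pointalt}\ge\norm{\point-\pointalt}^2$; combining this with the previous display leaves $\norm{\point-\pointalt}^2\le\product{\dvec-\alt{\dvec}}{\point-\pointalt}$.

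Finally, applying the Hölder-type bound $\product{\dvec-\alt{\dvec}}{\point-\pointalt}\le\dnorm{\dvec-\alt{\dvec}}\,\norm{\point-\pointalt}$ and cancelling one factor of $\norm{\point-\pointalt}$ delivers $\norm{\point-\pointalt}\le\dnorm{\dvec-\alt{\dvec}}$, which is precisely the claim. The one place demanding care — and the step I would flag as the main (if modest) obstacle — is the cancellation: when $\point=\pointalt$ the inequality holds trivially, so I would dispose of that degenerate case before dividing. A secondary subtlety is that $\hreg$ need not be differentiable, so both the optimality conditions and the strong-convexity estimate should be read with subgradients drawn from $\subd\hreg$; since any such choice satisfies the monotonicity bound, writing everything through the continuous selection $\grad\hreg$ is merely a notational convenience and does not affect the argument.
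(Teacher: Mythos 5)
Your proposal is correct and follows essentially the same route as the paper's proof: writing the optimality conditions for $\prox(\dvec)$ and $\prox(\alt{\dvec})$, cross-substituting and summing to isolate $\product{\grad\hreg(\point)-\grad\hreg(\pointalt)}{\point-\pointalt}$, then combining the strong-monotonicity bound from $1$-strong convexity with the Cauchy--Schwarz/H\"older estimate and cancelling a factor of $\norm{\point-\pointalt}$. The care you flag about the degenerate case $\point=\pointalt$ and the subgradient reading of $\grad\hreg$ is sound and consistent with the paper's use of a continuous selection of $\subd\hreg$.
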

\begin{proof}
Let $\point=\prox(\dvec)$ and $\alt{\point}=\prox(\alt{\dvec})$. By definition of the mirror map,
\begin{equation}
    \notag
    \point = \argmin_{\test\in\points}\product{-\dvec}{\test}+\hreg(\test), ~~~~
    \alt{\point} = \argmin_{\test\in\points}\product{-\alt{\dvec}}{\test}+\hreg(\test).
\end{equation}
The optimality condition implies that $\dvec\in\subd\hreg(\point)$ and $\alt{\dvec}\in\subd\hreg(\alt{\point})$.
Hence, with the Cauchy–Schwarz inequality and the $1$-strong convexity of $\hreg$ with respect to $\norm{\cdot}$, we have
\begin{equation}
\notag
    \dnorm{\dvec-\alt{\dvec}}\norm{\alt{\point}-\point}
    \ge
    \product{\alt{\dvec}-\dvec}{\alt{\point}-\point}
    \ge \norm{\point-\alt{\point}}^2.
\end{equation}
It follows immediately $\dnorm{\dvec-\alt{\dvec}}\ge\norm{\point-\alt{\point}}.$
\end{proof}

\section{Missing Proofs for Variable Learning Rate Methods}
\label{apx:adaptive}
In this part, we complete the proofs of the results presented in \cref{sec:variable}.
To begin, the well-known ``inverse-root-sum'' lemma (see \eg \citealp[Lemma 3.5]{ACG02}) is essential for proving the regret guarantees of these methods.

\adaptive*
\begin{proof}
The function $y\in\R^+\mapsto\sqrt{y}$ being concave and has derivative $y\mapsto1/(2\sqrt{y})$, it holds for every $z\ge0$,
\begin{equation}
    \notag
    \sqrt{z}\le\sqrt{y}+\frac{1}{2\sqrt{y}}(z-y).
\end{equation}
Take $y=\sum_{\runalt=1}^{\run}\vt[\scalar][\runalt]$ and $z=\sum_{\runalt=1}^{\run-1}\vt[\scalar][\runalt]$ gives
\begin{equation}
\notag
    2\sqrt{\sum_{\runalt=1}^{\run-1}\vt[\scalar][\runalt]}+\frac{\vt[\scalar][\run]}{\sqrt{\sum_{\runalt=1}^{\run}\vt[\scalar][\runalt]}}\le2\sqrt{\sum_{\runalt=1}^{\run}\vt[\scalar][\runalt]}.
\end{equation}
We conclude by summing the inequality from $\run=2$ to $\run=\nRuns$ and using $\sqrt{\vt[\scalar][1]}\le2\sqrt{\vt[\scalar][1]}$.
\end{proof}

Recall that $\vwt[\setbck] =
\setdef{\{\runalt, \runano\}}
{\runalt\in\vt[\set],\runano\in
\setexclude{\vwt[\setrec][\worker][\runalt]}{\vt[\set][\runalt]}}$
and $\vt[\setbck]=\vwt[\setbck][\activeworker{\run}]$.
The next two propositions were used in the proof of \cref{thm:adadelayi}.

\SetbckCharac*
\begin{proof}
We will prove
    \begin{equation}
    \notag
    \vt[\setbck]
	= \setdef*{\{\runalt,\runano\} \subseteq \vt[\set]}{\runalt\notin\vt[\set][\runano], \runano\notin\vt[\set][\runalt]}
\end{equation}
by a two-way inclusion argument.
\para{Inclusion (``\,$\subseteq$\,'')}
Let $\runalt\in\vt[\set]$ and $\runano\in\setexclude{\vwt[\setrec][\activeworker{\run}][\runalt]}{\vt[\set][\runalt]}$.
The inclusion $\runano\in\vwt[\setrec][\activeworker{\run}][\runalt]$ means that $\vt[\gvec][\runano]$ arrives earlier than $\vt[\gvec][\runalt]$ on node $\activeworker{\run}$. As all the available gradients are used when playing $\vt[\state]$ and $\runalt\in\vt[\set]$, we deduce $\runano\in\vt[\set]$.
On the other hand, $\runano\in\vwt[\setrec][\activeworker{\run}][\runalt]$ also implies $\runalt\notin\vwt[\setrec][\activeworker{\run}][\runano]$.
Using \cref{asm:in-order} we know that $\vt[\set][\runano]\subseteq\vwt[\setrec][\activeworker{\run}][\runano]$, and consequently $\runalt\notin\vt[\set][\runano]$.

\para{Containment (``\,$\supseteq$\,'')}
Let $\{\runalt, \runano\}\subseteq\vt[\set]$ such that $\runalt\notin\vt[\set][\runano]$ and $\runano\notin\vt[\set][\runalt]$. Since either $\runano\in\vwt[\setrec][\activeworker{\run}][\runalt]$
or $\runalt\in\vwt[\setrec][\activeworker{\run}][\runano]$ (but not both) we conclude immediately $\{\runalt, \runano\}\in\vwt[\setbck][\activeworker{\run}]=\vt[\setbck]$. 
\end{proof}

\FaithfulDelayBound*
\begin{proof}
(\emph{a})
Let $\runalt,\run\in\oneto{\nRuns}$ such that $\runalt\le\run$. We need to prove $\Dpermu(\runalt)\le\Dpermu(\run)+\delaybound$.
Assume the opposite, that is, $\Dpermu(\runalt)>\Dpermu(\run)+\delaybound$.
Then, from the bounded delay assumption, $\Dpermu(\run)\in\vt[\set][\Dpermu(\runalt)]$.
$\Dpermu$ being a faithful permutation, this implies $\run=\Dpermu^{-1}(\Dpermu(\run))<\Dpermu^{-1}(\Dpermu(\runalt))=\runalt$, a contradiction.
Finally, $\oneto{\run}^{\Dpermu}=\{\Dpermu(1),\ldots,\Dpermu(\run)\}=\setdef{\Dpermu(\runalt)}{\runalt\le\run}$ and hence $\oneto{\run}^{\Dpermu}\subseteq\oneto{\Dpermu(\run)+\delaybound}$.

(\emph{b}) This is immediate from (\emph{a}) and the inclusion $\oneto{\Dpermu(\run)-\delaybound-1}\subseteq\vt[\set][\Dpermu(\run)]$ which holds since the maximum delay is assumed to be bounded by $\delaybound$.

(\emph{c}) Fix $\run\in\oneto{\nRuns}$. For all $\runalt\le\run$, we have $\Dpermu(\runalt)\le\Dpermu(\run)+\delaybound$ and therefore $\max_{\runalt\le\run}\Dpermu(\runalt)\le\Dpermu(\run)+\delaybound$.
$\Dpermu$ being a permutation of $\oneto{\nRuns}$, it holds $\max_{\runalt\le\run}\Dpermu(\runalt)\ge\run$ and subsequently $\run\le\Dpermu(\run)+\delaybound$.
Similarly, we also have $\Dpermu(\run)-\delaybound\le\min_{\run\le\runalt}\Dpermu(\runalt)$ and $\min_{\run\le\runalt}\Dpermu(\runalt)\le\run$.
This implies $\Dpermu(\run)-\delaybound\le\run$. Combining the two we conclude $|\Dpermu(\run)-\run|\le\delaybound$.
\end{proof}

We close this section with the single-agent adaptive algorithm \eqref{eq:adadelayOplus}.
\PropAdaDelayOPlus*
\begin{proof}
Let $\vt[\gsumupper]=\radius^2/\vt[\step^2]$ so that $\vt[\step]=\radius/\sqrt{\vt[\gsumupper]}$.
It holds that $\vt[\gsumupper]\ge\vt[\Gsumbck]+\vt[\delayres]\gbound^2\ge\vt[\gsum]$.
The first inequality comes from the definition of $\vt[\step]$ and the second inequality was shown in \cref{subsec:adadelayplus}.
Applying \cref{thm:delay-regret} with $\Dpermu=\idp$ and \cref{lem:adaptive} yields
\begin{equation}
\notag
\begin{aligned}
    \reg_{\nRuns}(\comp)
    &\le
    \frac{\hreg(\comp)}{\vt[\step][\nRuns]}
    + \frac{1}{2}\sum_{\run=\start}^{\nRuns}
    \vt[\step][\run]
    \Bigg(\dnorm{\vt[\gvec][\run]}^2
        +2\dnorm{\vt[\gvec][\run]}
        \sum_{\runalt\in\vt[\setout]}\dnorm{\vt[\gvec][\runalt]}\Bigg)\\
    &\le\radius\sqrt{\vt[\gsumupper][\nRuns]}
    + \frac{\radius}{2}\sum_{\run=\start}^{\nRuns}
    \frac{1}{\sqrt{\vt[\gsum]}}
    \Bigg(\dnorm{\vt[\gvec][\run]}^2
        +2\dnorm{\vt[\gvec][\run]}
        \sum_{\runalt\in\vt[\setout]}\dnorm{\vt[\gvec][\runalt]}\Bigg)\\
    &\le\radius\sqrt{\vt[\gsumupper][\nRuns]}+\radius\sqrt{\vt[\gsum][\nRuns]}
    \le 2\radius\sqrt{\vt[\gsumupper][\nRuns]}.
\end{aligned}
\end{equation}
Since $\vt[\gsumupper][\nRuns] = \max_{1\le\run\le\nRuns}\vt[\Gsumbck]+\vt[\delayres]\gbound^2$, we have already proved the first inequality.
For the second inequality, we use both $\vt[\Gsumbck]\le\vt[\gsum]$ and $\vt[\Gsumbck]\le(\card(\vt[\set])+2\card(\vt[\setbck]))\gbound^2$ (\cf \eqref{eq:gsumbcki-setbck}).
\end{proof}

When the delays are bounded by a constant, it is possible to further bound\;$\delayres$ from above, as shown below.

\begin{proposition}
Assume that the maximum delay is bounded by $\delaybound$.
Then $\delayres\le2\delaybound^2+3\delaybound+1$.
\end{proposition}
\begin{proof}
To begin, we have $\run-\card(\vt[\set]) \le \delaybound+1$ as $\oneto{\run-\delaybound-1}\subseteq\vt[\set]$.
Next, let us consider a pair $\{\runalt, \runano\}\in\setexclude{\vt[\setdel]}{\vt[\setbck]}$.
From \cref{prop:setbck-charac} we know that $\{\runalt, \runano\}\not\subseteq\vt[\set]$, so we have either $\runalt\in\intinterval{\run-\delaybound}{\run}$ or $\runano\in\intinterval{\run-\delaybound}{\run}$.
Without loss of generality, we suppose $\runalt<\runano$, then $\runano\in\intinterval{\run-\delaybound}{\run}$.
By \cref{prop:setdel-charac} we have $\runalt\notin\vt[\set][\runano]$, and thus $\runalt\in\intinterval{\runano-\delaybound}{\runano-1}$.
This shows $\card(\setexclude{\vt[\setdel]}{\vt[\setbck]})\le\delaybound(\delaybound+1)$. We can therefore conclude $\vt[\delayres]\le2\delaybound(\delaybound+1)+\delaybound+1=2\delaybound^2+3\delaybound+1$.
\end{proof}

Therefore, the bound of \cref{prop:adadelayoplus}
potentially improves upon the bounds obtained in \cite{MS14} and \cite{JGS16}. 
\section{Proofs Related to the Optimistic Variant}
\subsection{Delayed Optimistic Dual Averaging}
\label{apx:optimistic-delay-regret}

\OptDelayRegret*

\begin{proof}
Let us consider the virtual iterates
\begin{equation}
    \notag
    \vt[\virtual] = \vt[\state][\start] - \current[\step]\sum_{\runalt=\start}^{\run-1}\inter[\gvec][\runalt].
\end{equation}
We define the estimate sequence
\begin{equation}
    \notag
    \vt[\estseq](\point) = \sum_{\runalt=1}^{\run-1} \product{\inter[\gvec][\runalt]}{\point-\comp} + \frac{\norm{\point-\vt[\point][\start]}^2}{2\vt[\step]}.
\end{equation}
Notice that the regret is measured with the leading states
\begin{align}
\label{eq:opt-regret-decomp}
    \vt[\obj](\inter) - \vt[\obj](\comp)
    \le \product{\inter[\gvec]}{\inter-\comp}
    = \product{\inter[\gvec]}{\inter-\vt[\virtual][\run+1]}
    + \product{\inter[\gvec]}{\vt[\virtual][\run+1]-\comp}
\end{align}
As shown in the proof of \cref{prop:dual-avg-regret}, we have
\begin{equation}
\label{eq:opt-regret-rec}
    \product{\inter[\gvec]}{\vt[\virtual][\run+1]-\comp}
    \le \update[\estseq](\update[\virtual]) - \current[\estseq](\current[\virtual]) - \frac{1}{2\vt[\step]}\norm{\update[\virtual]-\current[\virtual]}^2.
\end{equation}
For the other term, we recall the definition $\vt[\setout]=\setexclude{\oneto{\run-1}}{\vt[\set]}$ and define $\vt[\out]=\card(\vt[\setout])$. Then,
\begin{align}
    \notag
    \product{\inter[\gvec]}{\inter-\vt[\virtual][\run+1]}
    &=
    \product{\inter[\gvec]}{\inter-\current}
    + \product{\inter[\gvec]}{\current-\current[\virtual]}
    + \product{\inter[\gvec]}{\current[\virtual]-\update[\virtual]}\\
    \notag
    &=
    \product{\inter[\gvec]}{-\current[\stepalt]\inter[\appr]}
    + \product{\inter[\gvec]}{\current[\step]\sum_{\runalt\in\vt[\setout]}\inter[\gvec][\runalt]}
    + \product{\inter[\gvec]}{\current[\virtual]-\update[\virtual]}\\
    \notag
    &=
    \frac{\vt[\stepalt]}{2}\left(\norm{\inter[\gvec]-\inter[\appr[\gvec]]}^2-\norm{\inter[\gvec]}^2-\norm{\inter[\appr]}^2\right)\\
    \notag
    &~~+
    \current[\step]\sum_{\runalt\in\vt[\setout]}\product{\inter[\gvec]}{\inter[\gvec][\runalt]}
    + \product{\inter[\gvec]}{\current[\virtual]-\update[\virtual]}\\
    \notag
    &\le
    \frac{\vt[\stepalt]}{2}\left(\norm{\inter[\gvec]-\inter[\appr[\gvec]]}^2-\norm{\inter[\gvec]}^2-\norm{\inter[\appr]}^2\right)\\
    \label{eq:opt-regret-decomp-2}
    &~~+
    \frac{\vt[\step]}{2}\norm{\inter[\gvec]}^2
    +\frac{1}{2\vt[\step]}\norm{\current[\virtual]-\update[\virtual]}^2
    +\frac{\vt[\out]\vt[\step]}{2}\norm{\inter[\gvec]}^2
    +\frac{\vt[\step]}{2}\sum_{\runalt\in\vt[\setout]}\norm{\inter[\gvec][\runalt]}^2.
\end{align}
Combining \eqref{eq:opt-regret-decomp}, \eqref{eq:opt-regret-rec}, \eqref{eq:opt-regret-decomp-2} and summing from $\run=\start$ to $\nRuns$ yields
\begin{align}
    \notag
    \reg_{\nRuns}(\comp)
    &\le
    \update[\estseq][\nRuns](\update[\virtual][\nRuns])
    -\vt[\estseq][\start](\vt[\virtual][\start])
    +\sum_{\run=\start}^{\nRuns}
    \frac{\vt[\stepalt]}{2}\left(\norm{\inter[\gvec]-\inter[\appr[\gvec]]}^2-\norm{\inter[\appr]}^2\right)\\
    \label{eq:opt-regret-sum}
    &~~+
    \left(-\frac{\vt[\stepalt]}{2}+\frac{(\vt[\out]+1)\vt[\step]}{2}
    +\sum_{\run\in\vt[\setout][\runano]}\frac{\vt[\step][\runano]}{2}\right)\norm{\inter[\gvec]}^2.
\end{align}
Since the maximum delay is $\delaybound$, we have $\vt[\out]\le\outbound\le\delaybound$ and if $\run\in\vt[\setout][\runano]$ it holds $\runano>\run\ge\runano-\delaybound$ giving that $\card(\setdef{\runano}{\run\in\vt[\setout][\runano]})\le\delaybound$. Moreover, as $\seqinf{\step}{\run}$ is a decreasing sequence, $\run\in\vt[\setout][\runano]$ also implies $\vt[\step][\runano]\le\vt[\step]$. The last term of \eqref{eq:opt-regret-sum} can thus be bounded as following
\begin{equation}
    \left(-\frac{\vt[\stepalt]}{2}+\frac{(\vt[\out]+1)\vt[\step]}{2}
    +\sum_{\run\in\vt[\setout][\runano]}\frac{\vt[\step][\runano]}{2}\right)\norm{\inter[\gvec]}^2
    \le \frac{1}{2}((2\delaybound+1)\vt[\step]-\vt[\stepalt])\norm{\inter[\gvec]}^2\le0,
\end{equation}
where the second inequality leverages the condition $\vt[\stepalt]\ge(2\delaybound+1)\vt[\step]$.

To conclude, we use $\update[\estseq][\nRuns](\update[\virtual][\nRuns])\le\update[\estseq][\nRuns](\comp)$ and observe that $\vt[\estseq][\start](\vt[\virtual][\start])=\vt[\estseq][\start](\vt[\state][\start])=0$ by definition, so that
\begin{align}
    \notag
    \reg_{\nRuns}(\comp)
    &\le
    \frac{\norm{\comp-\vt[\state][\start]}^2}{2\update[\step][\nRuns]}
    +\sum_{\run=\start}^{\nRuns}
    \frac{\vt[\stepalt]}{2}\left(\norm{\inter[\gvec]-\inter[\appr[\gvec]]}^2-\norm{\inter[\appr]}^2\right).
\end{align}
Let $\update[\step]=\vt[\step]$ and we get the desired bound.
\end{proof}

\subsection{The Necessity of Scale Separation}
\label{apx:optimistic-regret-lower-bound}

\OptRegretLB*

\begin{proof}
Assume, for the sake of contradiction, that there exists $\step=\step(\radius,\nRuns,\delaybound,\vt[\variation^{\delaybound^+}][\nRuns])$ and a corresponding $\stepalt$ with $\stepalt\le\delaybound\step$ such that \eqref{eq:D-ODA} with $\inter[\appr]=\vt[\gvec][\run-\delaybound-1]$ guarantees a regret in $\smalloh(\max(\vt[\variation^{\delaybound^+}][\nRuns], \sqrt{\nRuns}))$.
Formally, we define a round of the algorithm as a composition a loss sequence, a delay mechanism, a initial point $\vt[\state][\start]$ and a competing vector $\comp$,
and denote by $\mathcal{R}(\radius,\nRuns,\delaybound,\vt[\variation^{\delaybound^+}][\nRuns])$ the set of all the rounds with time horizon $\nRuns$, $(\delaybound+1)$-variation $\vt[\variation^{\delaybound^+}][\nRuns]$, uniform delay $\delaybound$ and $\norm{\comp-\vt[\state][\start]}\le\radius$.
Then, fixing $\radius$ and $\delaybound$, for every $\eps>0$, we can find $N>0$ such that if $\max(\vt[\variation^{\delaybound^+}][\nRuns], \sqrt{\nRuns})\ge N$, the regret achieved by the algorithm for every instance in $\mathcal{R}(\radius,\nRuns,\delaybound,\vt[\variation^{\delaybound^+}][\nRuns])$ is smaller than $\eps\max(\vt[\variation^{\delaybound^+}][\nRuns], \sqrt{\nRuns})$.
The proof then consists in finding two instances of $\mathcal{R}(\radius,\nRuns,\delaybound,\vt[\variation^{\delaybound^+}][\nRuns])$ such that the regret achieved by the algorithm on these two instances can not be simultaneously smaller than $\eps\max(\vt[\variation^{\delaybound^+}][\nRuns], \sqrt{\nRuns})$.

For this, we fix the delay $\delaybound$, set $\radius=1$ without loss of generality
and explicit these two instances in the following ($\points=\R$):

\vskip 0.25em
1. Let $\nPeriods, \len > \delaybound$ be two positive integers.
We first consider a loss sequence of length $2\nPeriods\len+\delaybound+1$ (\ie $\nRuns=2\nPeriods\len+\delaybound+1$) as illustrated below:
\begin{equation}
\notag
\underbrace{\underbrace{-1\thinspace \ldots\thinspace -1}_{\len}\enspace
\underbrace{+1\thinspace \ldots\thinspace +1}_{\len}\quad
\ldots\quad
\underbrace{-1\thinspace \ldots\thinspace -1}_{\len}\enspace
\underbrace{+1\thinspace \ldots\thinspace +1}_{\len}}_{2\nPeriods\len ~\text{losses}}\enspace
\underbrace{-1\thinspace \ldots\thinspace -1}_{\delaybound+1}
\end{equation}
A period is defined as a subsequence of $2\len$ losses with $\ell$ consecutive $-1$s followed by $\ell$ consecutive $+1$s.
The whole loss sequence is then composed of $2\nPeriods$ periods followed by $\delaybound+1$ consecutive $-1$s.
We would like to compute the regret achieved by \eqref{eq:D-ODA} with $\step, \stepalt, \inter[\appr]$ as specified in the statement and $\vt[\state][\start]=\comp=0$.

For the first $\delaybound+1$ steps, the algorithm stays at $\vt[\state][\start]=\comp$ so the accumulative regret is $0$.
For the remaining of the round, the algorithm goes through the same trajectory for each period of delayed feedback vectors it receives and this happens $\nPeriods$ times.
To compute the regret, we just need to match the position of the iterate with the actual loss at each moment, which is done in \cref{fig:optimistic-lower-bound} (as the loss vectors of a single period sum to $0$, after receiving all the vectors from one period it is as if we started again from $\vt[\state][\start]=\comp=0$).
Notice that the algorithm uses the most recent vector it receives for extrapolation.

\begin{figure}[t]
\centering
\newcommand\tikzmark[1]{
  \tikz[remember picture,overlay,inner xsep=0pt]\node (#1) {};}
\begin{gather}
\notag
\arraycolsep=4pt\def\arraystretch{1.2}
{\footnotesize{
\begin{array}{c|c@{\hskip0pt}ccc@{\hskip12pt}ccc@{\hskip0pt}c c@{\hskip0pt}ccc@{\hskip10pt}ccc@{\hskip0pt}c}
\toprule
\boldsymbol{\run}&&\delaybound+2 & \dots & \len & \len+1 & \cdots & \len+\delaybound+1
&&& \len+\delaybound+2 & \ldots & 2\len & 2\len+1 & \dots
& 2\len+\delaybound+1&
\\
\boldsymbol{\vt}& \tikzmark{start} & \step &  \dots & (\len-\delaybound-1)\step & (\len-\delaybound)\step & \cdots & \len\step & \tikzmark{end}
\tikz[remember picture,overlay]{
  \draw[decorate,decoration={brace,mirror,raise=4pt}] (start) --node[below=5pt] {$\scriptstyle+\gamma$} (end);
}&
\tikzmark{start} & (\len-1)\step & \ldots & (\delaybound+1)\step
& \delaybound\step & \ldots  & 0 & \tikzmark{end}
\tikz[remember picture,overlay]{
  \draw[decorate,decoration={brace,mirror,raise=4pt}] (start) --node[below=5pt] {$\scriptstyle-\gamma$} (end);}
\\[0.6em]
\boldsymbol{\vt[\gvec]} && \multicolumn{3}{c|}{-1} & \multicolumn{8}{c}{+1} & \multicolumn{3}{|c}{-1}& \\
\bottomrule
\end{array}}}
\end{gather}
\vspace{-1em}
\caption{Illustration of the evolution of the optimistic algorithm for a period of feedback in the first example of the proof of \cref{thm:optimistic-regret-lower-bound}. The time is taken modulo $2\len$.}
\label{fig:optimistic-lower-bound}
\end{figure}

The regret for each period of feedback is thus
\begin{align}
\notag
    \reg_{per} &= 
    \frac{-(\len-\delaybound-1)(\len-\delaybound)\step}{2}
    -(\len-\delaybound-1)\stepalt
    +\frac{(\delaybound+1)(2\len-\delaybound)\step}{2}
    +(\delaybound+1)\stepalt\\
\notag
    &~~+\frac{(\len-\delaybound-1)(\len+\delaybound)\step}{2}
    -(\len-\delaybound-1)\stepalt
    -\frac{(\delaybound+1)\delaybound\step}{2}
    +(\delaybound+1)\stepalt\\
\notag
    &= (\delaybound+1)(\len-\delaybound)\step +
    (\len-\delaybound-1)\delaybound\step
    +2(2\delaybound-\len+2)\stepalt\\
\notag
    &= (\step+2\delaybound\step-2\stepalt)\len
    - 2\delaybound(\delaybound+1)\step
    + (4\delaybound+4)\stepalt.
\end{align}
Accordingly, the total regret is
\begin{equation}
    \notag
    \reg_1 = \nPeriods((\step+2\delaybound\step-2\stepalt)\len
    - 2\delaybound(\delaybound+1)\step
    + (4\delaybound+4)\stepalt)
    \ge \nPeriods (\len- 2\delaybound(\delaybound+1))\step,
\end{equation}
where for the inequality we use the fact that $\stepalt\le\delaybound\step$.

Moreover, for every $m\in\N_{0}$, from time $2m\len+\delaybound+2$ to $2m\len+2\len+\delaybound+1$ the $(\delaybound+1)$-variation increases by $8(\delaybound+1)$: there are $\delaybound+1$ switches both from\;$-1$ to\;$+1$ and from\;$+1$ to\;$-1$ with each switch contributing $4$ to the variation.
Remember also that in the definition of the $\vt[\variation^{\delaybound^+}][\nRuns]$ we compare the first $\delaybound+1$ losses with $0$. For the whole sequence we therefore count $\vt[\variation^{\delaybound^+}][\nRuns]=(8\nPeriods+1)(\delaybound+1)$.

\vskip 0.25em
2. We now construct another example with the same $\nRuns, \vt[\variation^{\delaybound^+}][\nRuns]$ as follows (with $\len>4\delaybound+4$):
\begin{equation}
\notag
\underbrace{\underbrace{0\thinspace \ldots\thinspace 0}_{\delaybound+1}\enspace
\underbrace{1\thinspace \ldots\thinspace 1}_{\delaybound+1}\quad
\ldots\quad
\underbrace{0\thinspace \ldots\thinspace 0}_{\delaybound+1}\enspace
\underbrace{1\thinspace \ldots\thinspace 1}_{\delaybound+1}}_{8\nPeriods(\delaybound+1) ~\text{losses}}\enspace
\underbrace{0\thinspace \ldots\thinspace 0}_{2\nPeriods\len-8\nPeriods(\delaybound+1)}\enspace
\underbrace{1\thinspace \ldots\thinspace 1}_{\delaybound+1}
\end{equation}
In particular, $2\nPeriods\len-8\nPeriods(\delaybound+1)>2\nPeriods>\delaybound+1$. It follows immediately $\vt[\variation^{\delaybound^+}][\nRuns]=(8\nPeriods+1)(\delaybound+1)$ and of course $\nRuns=2\nPeriods\len+\delaybound+1$.

Let $\vt[\state][\start]=0$ and $\comp=-1$.
In the sequence the loss $1$ appears $(4\nPeriods+1)(\delaybound+1)$ times while the remaining feedback are all $0$s.
Given that the last $\delaybound+1$ losses are never received by the algorithm,
we have indeed always $\vt\ge-4\nPeriods(\delaybound+1)\step-\stepalt$.
The regret can therefore be lower bounded as:
\begin{align}
    \notag
    \reg_2
    &= \sum_{\run=\start}^\nRuns\vt[\gvec](\vt+1)\\
    \notag
    &= \sum_{\run=\start}^\nRuns\vt[\gvec]\vt + (4\nPeriods+1)(\delaybound+1)\\
    \notag
    &\ge (4\nPeriods+1)(\delaybound+1) - 4\nPeriods(4\nPeriods+1)(\delaybound+1)^2\step
    - (4\nPeriods+1)(\delaybound+1)\stepalt\\
    \notag
    &\ge (4\nPeriods+1)(\delaybound+1) - (4\nPeriods+1)^2(\delaybound+1)^2\step,
\end{align}
where in the last inequality we use again $\stepalt\le\delaybound\step$.

\vskip 0.25em
\textbf{Conclude.}
We choose $\nPeriods,\len$ so that $\len=(16\nPeriods+9)(\delaybound+1)^2+2\delaybound(\delaybound+1) > 4\delaybound+4$.
Notice that $\nRuns$ and $\vt[\variation^{\delaybound^+}][\nRuns]$ can be made arbitrarily large.
We run the algorithm in question on the two problem instances described above. We have on one side
\begin{equation}
    \notag
    \reg_1
    \ge \nPeriods (\len- 2\delaybound(\delaybound+1))\step
    = (16\nPeriods^2 + 9\nPeriods)(\delaybound+1)^2\step.
\end{equation}
On the other side,
\begin{align}
    \notag
    \reg_2
    &\ge (4\nPeriods+1)(\delaybound+1) - (4\nPeriods+1)^2(\delaybound+1)^2\step\\
    \notag
    &\ge (4\nPeriods+1)(\delaybound+1) - (16\nPeriods^2 + 9\nPeriods)(\delaybound+1)^2\step.
\end{align}
Recalling that $\vt[\variation^{\delaybound^+}][\nRuns]=(8\nPeriods+1)(\delaybound+1)$, the above shows
\begin{equation}
    \notag
    \reg_1 + \reg_2 \ge (4\nPeriods+1)(\delaybound+1) \ge \vt[\variation^{\delaybound^+}][\nRuns]/2.
\end{equation}
Similarly, we have
$\nRuns=2\nPeriods\len+\delaybound+1\le(32\nPeriods^2+22\nPeriods)(\delaybound+1)^2$. As a consequence
\begin{equation}
    \notag
    \reg_1 + \reg_2 \ge (4\nPeriods+1)(\delaybound+1) \ge \sqrt{\nRuns}/2.
\end{equation}
To summarize, we have proven for some $\nRuns$ and $\vt[\variation^{\delaybound^+}][\nRuns]$ arbitrarily large, we can find two instances from $\mathcal{R}(\radius,\nRuns,\delaybound,\vt[\variation^{\delaybound^+}][\nRuns])$ so that the regrets achieved by the algorithm on these two instances satisfy
\begin{equation}
    \notag
    \max(\reg_1, \reg_2) \ge \max(\vt[\variation^{\delaybound^+}][\nRuns], \sqrt{\nRuns})/2.
\end{equation}
This is in contradiction with the initial hypothesis by choosing $\eps=1/2$.
\end{proof}

\subsection{A Lower Bound for Delayed Online Learning}
\label{apx:lowerboundopt}

\VarRegretLB*

\begin{proof}
Let $\len=\overline{\variation^{\delaybound}}/(4(\delaybound+1))$ be a positive integer and $\nRuns = (\tau+1)\len$.
We consider $\algo$ an arbitrary online algorithm compatible with delayed feedback.
From $\algo$ we define $\algo_{/\delaybound}$ another online algorithm as follows: For any sequence of losses with undelayed feedback, we repeat each loss $\delaybound+1$ times and only send the feedback after a delay of $\delaybound$.
In other words, for the loss sequence $\gvec_1, \gvec_2, \ldots$, at the end of iteration $\indg(\delaybound+1)$ to $\indg(\delaybound+1)+\delaybound$ we receive feedback $\gvec_{\indg-1}$ (with the convention $\gvec_0=0$) while we suffer a loss $\product{\gvec_\indg}{\vt}$ from iteration $p_\indg=(\indg-1)(\delaybound+1)+1$ to $\indg(\delaybound+1)$.
We then play $\algo$ on this new loss sequence with delayed feedback and after every $\delaybound+1$ iterations we return $\vt[\avg[\state]][\indg]=\sum_{\run=p_\indg}^{p_\indg+\delaybound}\vt[\state]/(\delaybound+1)$. This is a legitimate online algorithm because the knowledge of $\gvec_\indg$ is not required for playing $\vt[\avg[\state]][\indg]$.
Moreover, the regret achieved by $\algo$ on the constructed sequence is exactly $\delaybound+1$ times the regret achieved by $\algo_{/\delaybound}$ on the original sequence.

We now apply the the well known $\Omega(\sqrt{\len})$ lower bound for a horizon of $\len$ (see \eg \citealp{SS07}), and this proves the existence of a sequence of linear losses of length $\len$ and a corresponding $\comp$ with $\norm{\comp-\vt[\state][\start]}\le1$ such that the regret achieved by $\algo_{/\delaybound}$ is $\Omega(\sqrt{\len})$. Moreover, the loss vectors are either $1$ or $-1$.
Let us now considered the loss sequence constructed as in the previous paragraph.
The $(\delaybound+1)$-variation $\vt[\variation^{\delaybound^+}][\nRuns]$ is then bounded by $(\delaybound+1)+4(\delaybound+1)(\len-1)<\overline{\variation^{\delaybound}}$ and we have effectively $\nRuns = (\tau+1)\len$.
To finish, we observe that the regret achieved by $\algo$ on the constructed sequence is $\Omega((\delaybound+1)\sqrt{\len})$ and $(\delaybound+1)\sqrt{\len}\sim\sqrt{\delaybound\overline{\variation^\delaybound}}/2$ (where $\sim$ stands for asymptotically equivalent).
\end{proof}

\subsection{Delayed Online Learning with Slow Variation}
\label{apx:optimistic-slow-var}

\OptDelayRegretV*
\begin{proof}
The proof is immediate from \cref{thm:optimistic-delay-regret}.
Indeed,
\begin{equation}
    \notag
    \norm{\vt[\vecfield](\inter)-\vt[\appr[\vecfield]](\current)}^2
    \le
    2\norm{\vt[\vecfield](\inter)-\vt[\vecfield](\current)}^2
    +2\norm{\vt[\vecfield](\current)-\vt[\appr[\vecfield]](\current)}^2.
\end{equation}
Then, using the Lipschitz continuity of $\vt[\appr[\vecfield]]$ and the condition $2\vt[\stepalt^2]\lips^2\le1$, we have:
\begin{equation}
    \notag
    2\norm{\vt[\vecfield](\inter)-\vt[\vecfield](\current)}^2
    \le
    2\lips^2\norm{\inter-\current}^2
    =
    2\vt[\stepalt]^2\lips^2\norm{\vt[\appr[\vecfield]](\current)}^2
    \le\norm{\vt[\appr[\vecfield]](\current)}^2.
\end{equation}
In other words, we have proven $\norm{\inter[\gvec]-\inter[\appr[\gvec]]}^2-\norm{\inter[\appr]}^2\le2\norm{\vt[\vecfield](\current)-\vt[\appr[\vecfield]](\current)}^2$ and the bound follows.
\end{proof}

\subsection{More Flexible Learning Rates}
\label{apx:optimistic-adaptive}

In order prove \cref{prop:optimistic-adaptive}, we generalize both \cref{thm:optimistic-delay-regret} and \cref{thm:optimistic-delay-regret-V} to the case where the learning rate is non-increasing along a faithful permutation.

{
\addtocounter{theorem}{-1}
\renewcommand{\thetheorem}{\ref{thm:optimistic-delay-regret}$'$}
\begin{theorem}
\label{thm:optimistic-delay-regret-decen}
Assume that the maximum delay is bounded by $\delaybound$. 
Consider a faithful permutation $\Dpermu$ and let \acl{DOptDA} \eqref{eq:D-ODA} be run with learning rate sequences $\seqinf[\oneto{\nRuns}]{\step}{\run}$, $\seqinf[\oneto{\nRuns}]{\stepalt}{\run}$ satisfying $\vt[\step][\Dpermu(\run+1)]\le\vt[\step][\Dpermu(\run)]$ and $(4\delaybound+1)\max_{\setdef{\runalt}{|\runalt-\run|\le\delaybound}}\vt[\step][\runalt]\le\vt[\stepalt]$ for all $\run$. Then, the regret of the algorithm (evaluated at the points $\vt[\state][\frac{3}{2}],\ldots,\vt[\state][\nRuns+\frac{1}{2}]$) satisfies
\begin{equation}
\notag
    \reg_{\nRuns}(\comp)
    \le
    \frac{\norm{\comp-\vt[\state][\start]}^2}{2\vt[\step][\nRuns]}
    +\sum_{\run=\start}^{\nRuns}
    \frac{\vt[\stepalt]}{2}\left(\norm{\inter[\gvec]-\inter[\appr[\gvec]]}^2-\norm{\inter[\appr]}^2\right).
\end{equation}
\end{theorem}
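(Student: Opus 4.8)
The plan is to transplant the argument of \cref{thm:optimistic-delay-regret} into the time ordering induced by the faithful permutation $\Dpermu$, in the same way that \cref{thm:delay-regret} extends \cref{thm:delay-regret-first}. First I would introduce the \emph{permuted} virtual iterates
\[
\vt[\virtual] = \vt[\state][\start] - \vt[\step][\Dpermu(\run)]\sum_{\runalt=\start}^{\run-1}\inter[\gvec][\Dpermu(\runalt)]
\]
together with the associated estimate sequence
\[
\vt[\estseq](\point) = \sum_{\runalt=1}^{\run-1}\product{\inter[\gvec][\Dpermu(\runalt)]}{\point-\comp} + \frac{\norm{\point-\vt[\state][\start]}^2}{2\vt[\step][\Dpermu(\run)]}.
\]
Since the update step-sizes are non-increasing along $\Dpermu$, \cref{lem:estseq} applies verbatim to this reordered sequence, supplying the recursive inequalities that power the telescoping.

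Exploiting that $\reg_{\nRuns}(\comp)$ is an order-independent sum, I would write $\reg_{\nRuns}(\comp)\le\sum_{\run=\start}^{\nRuns}\product{\inter[\gvec][\Dpermu(\run)]}{\inter[\state][\Dpermu(\run)]-\comp}$ and split each summand as in \eqref{eq:opt-regret-decomp} into a contribution measured against $\vt[\virtual][\run+1]$ — which telescopes through \cref{lem:estseq}\ref{lem:estseq-b}, the divergence term cancelling exactly as in \eqref{eq:opt-regret-rec} — and a residual $\product{\inter[\gvec][\Dpermu(\run)]}{\inter[\state][\Dpermu(\run)]-\vt[\virtual][\run+1]}$. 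Faithfulness guarantees $\vt[\set][\Dpermu(\run)]\subset\vt[\runs^{\Dpermu}][\run-1]$, so $\vt[\state][\Dpermu(\run)]-\vt[\virtual][\run]=\vt[\step][\Dpermu(\run)]\sum_{\runalt\in\vt[\setout^{\Dpermu}]}\inter[\gvec][\runalt]$; substituting the optimistic update $\inter[\state][\Dpermu(\run)]=\vt[\state][\Dpermu(\run)]-\vt[\stepalt][\Dpermu(\run)]\inter[\appr][\Dpermu(\run)]$ and applying Young's inequality to each resulting cross term reproduces \eqref{eq:opt-regret-decomp-2} with every index replaced by its image under $\Dpermu$.

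Summing over $\run$ and reindexing back to the natural order recovers the first two terms of the claimed bound, and the entire proof reduces to showing that the coefficient multiplying each $\norm{\inter[\gvec][\Dpermu(\run)]}^2$ is non-positive. This coefficient equals $\frac{1}{2}\big(-\vt[\stepalt][\Dpermu(\run)]+(\vt[\out^{\Dpermu}]+1)\vt[\step][\Dpermu(\run)]+\sum_{\runano:\,\Dpermu(\run)\in\vt[\setout^{\Dpermu}][\runano]}\vt[\step][\Dpermu(\runano)]\big)$, where $\vt[\out^{\Dpermu}]\defeq\card(\vt[\setout^{\Dpermu}])$. Here \cref{prop:decent-delay-bound}\emph{(b)} is the decisive input: it gives $\setexclude{\vt[\runs^{\Dpermu}]}{\vt[\set][\Dpermu(\run)]}\subset\intinterval{\Dpermu(\run)-\delaybound}{\Dpermu(\run)+\delaybound}$, whence $\vt[\out^{\Dpermu}]\le2\delaybound$ and, symmetrically, at most $2\delaybound$ later steps $\runano$ satisfy $\Dpermu(\run)\in\vt[\setout^{\Dpermu}][\runano]$, each with $|\Dpermu(\runano)-\Dpermu(\run)|\le\delaybound$. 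Bounding every step-size in the sum by $\max_{|\runalt-\Dpermu(\run)|\le\delaybound}\vt[\step][\runalt]$ then yields $\frac{1}{2}\big((4\delaybound+1)\max_{|\runalt-\Dpermu(\run)|\le\delaybound}\vt[\step][\runalt]-\vt[\stepalt][\Dpermu(\run)]\big)\norm{\inter[\gvec][\Dpermu(\run)]}^2\le0$, which is precisely the stated hypothesis. The main obstacle is this bookkeeping: unlike in \cref{thm:optimistic-delay-regret}, reordering along $\Dpermu$ may shift a gradient by up to $\delaybound$ positions in \emph{either} direction, which doubles both cardinality counts — producing $4\delaybound+1$ rather than $2\delaybound+1$ — and forces a neighbourhood maximum of the update step-sizes in place of a single $\vt[\step]$. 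Finally, invoking \cref{lem:estseq}\ref{lem:estseq-a} together with $\vt[\estseq][\start](\vt[\virtual][\start])=0$ converts the telescoped estimate-sequence terms into $\norm{\comp-\vt[\state][\start]}^2/(2\vt[\step][\nRuns])$, completing the bound.
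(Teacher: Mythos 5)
Your proposal is correct and follows essentially the same route as the paper: the same permuted virtual iterates and estimate sequence, the same decomposition of each summand against $\vt[\virtual][\run+1]$, and the same use of \cref{prop:decent-delay-bound}\emph{(b)} to bound both $\card(\vt[\setout^{\Dpermu}])$ and the number of later steps $\runano$ with $\Dpermu(\run)\in\vt[\setout^{\Dpermu}][\runano]$ by $2\delaybound$ each, yielding exactly the $(4\delaybound+1)$ neighbourhood-maximum condition. The only cosmetic difference is that the paper's telescoping naturally produces $\norm{\comp-\vt[\state][\start]}^2/(2\vt[\step][\Dpermu(\nRuns)])$ rather than $\norm{\comp-\vt[\state][\start]}^2/(2\vt[\step][\nRuns])$, a discrepancy already present between the paper's own statement and proof.
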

}
\begin{proof}
We define the virtual iterates
\begin{equation}
    \notag
    \vt[\virtual] = \vt[\state][\start] - \vt[\step][\Dpermu(\run)]\sum_{\runalt=\start}^{\run-1}\inter[\gvec][\Dpermu(\runalt)].
\end{equation}
We then decompose
\begin{align}
\notag
    \vt[\obj](\inter) - \vt[\obj](\comp)
    \le \product{\inter[\gvec]}{\inter-\comp}
    = \product{\inter[\gvec]}{\vt[\state][\Dpermu(\run)+\frac{1}{2}]-\vt[\virtual][\run+1]}
    + \product{\inter[\gvec]}{\vt[\virtual][\run+1]-\comp}.
\end{align}
Following closely the proof of \cref{thm:optimistic-delay-regret}, we obtain
\begin{align}
    \notag
    \reg_{\nRuns}(\comp)
    &\le
    \frac{\norm{\comp-\vt[\state][\start]}^2}{2\vt[\step][\Dpermu(\nRuns)]}
    +\sum_{\run=\start}^{\nRuns}
    \frac{\vt[\stepalt]}{2}\left(\norm{\inter[\gvec]-\inter[\appr[\gvec]]}^2-\norm{\inter[\appr]}^2\right)\\
    \notag
    &~~+
    \left(-\frac{\vt[\stepalt][\Dpermu(\run)]}{2}+\frac{(\card(\vt[\setout^\Dpermu])+1)\vt[\step][\Dpermu(\run)]}{2}
    +\sum_{\Dpermu(\run)\in\vt[\setout^\Dpermu][\runano]}\frac{\vt[\step][\Dpermu(\runano)]}{2}\right)\norm{\vt[\gvec][\Dpermu(\run)+\frac{1}{2}]}^2.
\end{align}
Invoking \cref{prop:decent-delay-bound}, we know that $\setexclude{\oneto{\run}^{\Dpermu}}{\vt[\set][\Dpermu(\run)]}\subseteq\intinterval{\Dpermu(\run)-\delaybound}{\Dpermu(\run)+\delaybound}$.
Given that $\Dpermu(\run)\notin\oneto{\run-1}^{\Dpermu}$, this implies $\vt[\setout^{\Dpermu}]\subseteq\intinterval{\Dpermu(\run)-\delaybound}{\Dpermu(\run)-1}\union\intinterval{\Dpermu(\run)+1}{\Dpermu(\run)+\delaybound}$.
Therefore, $\card(\vt[\setout^{\Dpermu}])\le2\delaybound$ and if $\Dpermu(\run)\in\vt[\setout^\Dpermu][\runano]$ then $|\Dpermu(\run)-\Dpermu(\runano)|\le\delaybound$ while $\Dpermu(\run)\neq\Dpermu(\runano)$, which also shows $\card(\setdef{\runano}{\Dpermu(\run)\in\vt[\setout^\Dpermu][\runano]})\le2\delaybound$. Accordingly,
\begin{equation}
    \notag
    \frac{(\card(\vt[\setout^\Dpermu])+1)\vt[\step][\Dpermu(\run)]}{2}
    +\sum_{\Dpermu(\run)\in\vt[\setout^\Dpermu][\runano]}\frac{\vt[\step][\Dpermu(\runano)]}{2}
    \le \frac{(4\delaybound+1)\max_{\setdef{\runalt}{|\runalt-\Dpermu(\run)|\le\delaybound}}\vt[\step][\runalt]}{2}.
\end{equation}
With the assumption $\vt[\stepalt]\ge(4\delaybound+1)\max_{\setdef{\runalt}{|\runalt-\run|\le\delaybound}}\vt[\step][\runalt]$, we effectively deduce
\begin{equation}
    \notag
    \reg_{\nRuns}(\comp)
    \le
    \frac{\norm{\comp-\vt[\state][\start]}^2}{2\vt[\step][\Dpermu(\nRuns)]}
    +\sum_{\run=\start}^{\nRuns}
    \frac{\vt[\stepalt]}{2}\left(\norm{\inter[\gvec]-\inter[\appr[\gvec]]}^2-\norm{\inter[\appr]}^2\right).
\end{equation}
This proves the theorem.
\end{proof}

{
\addtocounter{theorem}{-1}
\renewcommand{\thetheorem}{\ref{thm:optimistic-delay-regret-V}$'$}
\begin{theorem}
\label{thm:optimistic-delay-regret-V-decent}
Let the maximum delay be bounded by $\delaybound$ and that \cref{asm:whole-vecfield} holds. Assume in addition that the vector fields  $\vt[\vecfield]$ are $\lips$-Lipschitz continuous.
Consider a faithful permutation $\Dpermu$ and take $\vt[\appr[\gvec]]=\vt[\appr[\vecfield]](\vt[\state])$, 
$\vt[\step][\Dpermu(\run+1)]\le\vt[\step][\Dpermu(\run)]$, $(4\delaybound+1)\max_{\setdef{\runalt}{|\runalt-\run|\le\delaybound}}\vt[\step][\runalt]\le\vt[\stepalt]$, and $2\vt[\stepalt^2]\lips^2\le1$. 
Then, the regret of \acl{DOptDA} \eqref{eq:D-ODA} (evaluated at the points $\vt[\state][\frac{3}{2}],\ldots,\vt[\state][\nRuns+\frac{1}{2}]$) satisfies
\begin{equation}
\notag
    \reg_{\nRuns}(\comp)
    \le
    \frac{\norm{\comp-\vt[\state][\start]}^2}{2\vt[\step][\Dpermu(\nRuns)]}
    +\sum_{\run=\start}^{\nRuns}
    \vt[\stepalt]\norm{\vt[\vecfield](\current)-\vt[\appr[\vecfield]](\current)}^2.
\end{equation}
\end{theorem}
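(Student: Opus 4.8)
The plan is to obtain this result as a direct corollary of \cref{thm:optimistic-delay-regret-decen}, exactly mirroring how \cref{thm:optimistic-delay-regret-V} was derived from \cref{thm:optimistic-delay-regret}. First I would check that the hypotheses line up: the conditions $\vt[\step][\Dpermu(\run+1)]\le\vt[\step][\Dpermu(\run)]$ and $(4\delaybound+1)\max_{\setdef{\runalt}{|\runalt-\run|\le\delaybound}}\vt[\step][\runalt]\le\vt[\stepalt]$ imposed here are precisely those required by \cref{thm:optimistic-delay-regret-decen}, so that theorem applies verbatim and yields
\[
    \reg_{\nRuns}(\comp)
    \le
    \frac{\norm{\comp-\vt[\state][\start]}^2}{2\vt[\step][\Dpermu(\nRuns)]}
    +\sum_{\run=\start}^{\nRuns}
    \frac{\vt[\stepalt]}{2}\left(\norm{\inter[\gvec]-\inter[\appr[\gvec]]}^2-\norm{\inter[\appr]}^2\right).
\]
It therefore remains only to bound the per-round term $\norm{\inter[\gvec]-\inter[\appr[\gvec]]}^2-\norm{\inter[\appr]}^2$ by $2\norm{\vt[\vecfield](\current)-\vt[\appr[\vecfield]](\current)}^2$, after which substitution gives the claim.

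For this pointwise estimate I would use that under \cref{asm:whole-vecfield} the feedback is $\inter[\gvec]=\vt[\vecfield](\inter)$ and the guess is $\inter[\appr[\gvec]]=\vt[\appr[\vecfield]](\current)$, so $\norm{\inter[\gvec]-\inter[\appr[\gvec]]}^2=\norm{\vt[\vecfield](\inter)-\vt[\appr[\vecfield]](\current)}^2$. Splitting with the elementary inequality $\norm{a-b}^2\le2\norm{a-c}^2+2\norm{c-b}^2$ about the midpoint $c=\vt[\vecfield](\current)$ gives $\norm{\vt[\vecfield](\inter)-\vt[\appr[\vecfield]](\current)}^2\le2\norm{\vt[\vecfield](\inter)-\vt[\vecfield](\current)}^2+2\norm{\vt[\vecfield](\current)-\vt[\appr[\vecfield]](\current)}^2$. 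The first term is then controlled by the $\lips$-Lipschitz continuity of $\vt[\vecfield]$ together with the optimistic update relation $\inter-\current=-\vt[\stepalt]\inter[\appr]$ from the second line of \eqref{eq:D-ODA}, namely $2\norm{\vt[\vecfield](\inter)-\vt[\vecfield](\current)}^2\le2\lips^2\norm{\inter-\current}^2=2\vt[\stepalt]^2\lips^2\norm{\inter[\appr]}^2$; the step-size coupling $2\vt[\stepalt]^2\lips^2\le1$ then bounds this by $\norm{\inter[\appr]}^2$. Combining the two, $\norm{\inter[\gvec]-\inter[\appr[\gvec]]}^2-\norm{\inter[\appr]}^2\le2\norm{\vt[\vecfield](\current)-\vt[\appr[\vecfield]](\current)}^2$, and inserting this into the displayed bound finishes the argument.

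The calculation is routine, so there is no serious obstacle once \cref{thm:optimistic-delay-regret-decen} is in hand; the single point that deserves care is that the gradient guess $\inter[\appr]=\vt[\appr[\vecfield]](\current)$ is formed at the intermediate (leading) state $\current$ rather than at the actually played point $\inter$. This mismatch is exactly what forces the extra smoothness assumption: the resulting error $\norm{\vt[\vecfield](\inter)-\vt[\vecfield](\current)}^2$ is quadratic in the extrapolation displacement $\vt[\stepalt]\inter[\appr]$, and only the condition $2\vt[\stepalt]^2\lips^2\le1$ lets it be absorbed into the negative term $-\norm{\inter[\appr]}^2$ supplied by \cref{thm:optimistic-delay-regret-decen}, leaving only the genuine approximation error $\norm{\vt[\vecfield](\current)-\vt[\appr[\vecfield]](\current)}^2$ that quantifies the slow variation of the loss sequence.
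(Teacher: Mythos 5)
Your proposal matches the paper's own proof exactly: the paper likewise derives this result by applying \cref{thm:optimistic-delay-regret-decen} and then bounding $\norm{\inter[\gvec]-\inter[\appr[\gvec]]}^2-\norm{\inter[\appr]}^2$ by $2\norm{\vt[\vecfield](\current)-\vt[\appr[\vecfield]](\current)}^2$ via the same split about $\vt[\vecfield](\current)$, the Lipschitz bound $\norm{\vt[\vecfield](\inter)-\vt[\vecfield](\current)}\le\lips\vt[\stepalt]\norm{\inter[\appr]}$, and the coupling $2\vt[\stepalt]^2\lips^2\le1$. Your closing remark about why the mismatch between the evaluation points $\current$ and $\inter$ forces the smoothness assumption is also the correct reading of the argument.
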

}
\begin{proof}
Apply \cref{thm:optimistic-delay-regret-decen} and bound the term $\norm{\inter[\gvec]-\inter[\appr[\gvec]]}^2-\norm{\inter[\appr]}^2$ as in the proof of \cref{thm:optimistic-delay-regret-V}.
\end{proof}

\OptAdapt*
\begin{proof}
Let $\vt[\varappr]=\sum_{\runalt\in\vt[\set]}\norm{\vt[\vecfield][\runalt](\vt[\state][\runalt])-\vt[\appr[\vecfield]][\runalt](\vt[\state][\runalt])}^2$.
We consider a permutation $\Dpermu$ such that (\emph{i}) if $\vt[\varappr][\runalt]<\vt[\varappr]$ then $\Dpermu^{-1}(\runalt)<\Dpermu^{-1}(\run)$; (\emph{ii}) if $\vt[\varappr][\runalt]=\vt[\varappr]$ and $\runalt\in\vt[\set]$ then $\Dpermu^{-1}(\runalt)<\Dpermu^{-1}(\run)$. The sequence $(\vt[\varappr])_{\run}$ is non-decreasing along $\Dpermu$ (see \eg proof of \cref{prop:decen-decr-regret}) and 
accordingly the learning rate sequence $(\vt[\step])_{\run}$ is non-decreasing along $\Dpermu$.
Moreover, if $\runalt\in\vt[\set]$, we have $\vt[\set][\runalt]\subseteq\vwt[\setrec][\activeworker{\run}][\runalt]\subseteq\vt[\set]$ thanks to \cref{asm:in-order}.
This implies $\vt[\varappr][\runalt]\le\vt[\varappr]$; subsequently $\Dpermu^{-1}(\runalt)<\Dpermu^{-1}(\run)$.
The above shows that $\Dpermu$ is a faithful permutation.
The condition $2\vt[\stepalt^2]\lips^2\le1$ is automatically satisfied by the definition of $\vt[\stepalt]$.
To apply \cref{thm:optimistic-delay-regret-V-decent}, the last missing piece is to prove $(4\delaybound+1)\max_{\setdef{\runalt}{|\runalt-\run|\le\delaybound}}\vt[\step][\runalt]\le\vt[\stepalt]$.
This boils down to showing that
\begin{equation}
\label{eq:varappr-diff}
\vt[\varappr][\runalt]+4\gbound^2(3\delaybound+1)\ge\vt[\varappr]+4\gbound^2(\delaybound+1)
\end{equation}
for all $\runalt\in\oneto{\nRuns}\intersect\intinterval{\run-\delaybound}{\run+\delaybound}$.
The maximum delay being bounded by $\delaybound$, we have $|\card(\vt[\set][\runalt])-\card(\vt[\set])|\le|\runalt-\run|+\delaybound$.
By bounding each $\norm{\vt[\vecfield][\runano](\vt[\state][\runano])-\vt[\appr[\vecfield]][\runano](\vt[\state][\runano])}^2$ by $4\gbound^2$, we indeed prove \eqref{eq:varappr-diff} for $\runalt$ such that $|\runalt-\run|\le\delaybound$.

With all this at hand, applying \cref{thm:optimistic-delay-regret-V-decent} gives
\begin{equation}
\notag
    \reg_{\nRuns}(\comp)
    \le
    \frac{\norm{\comp-\vt[\state][\start]}^2}{2\vt[\step][\Dpermu(\nRuns)]}
    +\sum_{\run=\start}^{\nRuns}
    \vt[\stepalt]\norm{\vt[\vecfield](\current)-\vt[\appr[\vecfield]](\current)}^2.
\end{equation}
As the maximum delay is bounded by $\delaybound$ and the gradients are bounded by $\gbound$, we have $\vt[\varappr]+4\gbound^2(\delaybound+1)\ge\vt[\variation]$.
Invoking \cref{lem:adaptive} then gives
\begin{equation}
\label{eq:opt-adapt-regret-proof}
\begin{aligned}
    \reg_{\nRuns}(\comp)
    &\le
    \frac{\norm{\comp-\vt[\state][\start]}^2}{2\vt[\step][\Dpermu(\nRuns)]}
    +\frac{\radius\sqrt{4\delaybound+1}}{2}\sum_{\run=\start}^{\nRuns}
    \frac{1}{\sqrt{\vt[\variation]}}\norm{\vt[\vecfield](\current)-\vt[\appr[\vecfield]](\current)}^2\\
    &\le\frac{\radius^2}{2\vt[\step][\Dpermu(\nRuns)]}
    +\radius\sqrt{(4\delaybound+1)\vt[\variation][\nRuns]}.
\end{aligned}
\end{equation}
We bound the second term by
\begin{equation}
\label{eq:opt-adapt-regret-proof-2nd}
    \radius\sqrt{(4\delaybound+1)\vt[\variation][\nRuns]}
    \le
    \radius\sqrt{(4\delaybound+1)(\vt[\varappr][\nRuns]+4\gbound^2(3\delaybound+1))}
    \le\frac{\radius^2}{2\vt[\step][\nRuns]}
    \le\frac{\radius^2}{2\vt[\step][\Dpermu(\nRuns)]}.
\end{equation}
Combining \eqref{eq:opt-adapt-regret-proof} and \eqref{eq:opt-adapt-regret-proof-2nd} we get $\reg_{\nRuns}(\comp)\le\radius^2/\vt[\step][\Dpermu(\nRuns)]$.
We can conclude by using the definition of $\vt[\step][\Dpermu(\nRuns)]$ and $\vt[\varappr][\Dpermu(\nRuns)]\le\vt[\variation][\nRuns]$.
\end{proof} 
\section{Regret Bounds for Decentralized Delayed Dual Averaging}
\label{apx:global}
The proofs in this part leverage \cref{lem:regret-local-global}, which we recall below.

\RegLocalGlobal*

These proofs can thus be divided into two essential parts: a bound on the effective regret and a bound on the inter-agent discrepancies. For the first part we will utilize the change of index $\mapping(\worker,\run)=\vt[\nSamples][\run-1]+\worker$ introduced in \cref{subsec:DDDA}, where $\vt[\nSamples]=\sum_{\runalt=\start}^{\run}\vt[\nWorkers][\runalt]$ and $\nSamples=\vt[\nSamples][\nRuns]$.
We also recall the notations $\vt[\alt{\gvec}][\mapping(\worker,\run)]=\vwt[\gvec]$ and $\vt[\alt{\set}][\mapping(\worker,\run)]=\setdef{\mapping(\workeralt,\runalt)}{(\workeralt,\runalt)\in\vwt[\set]}$.

\subsection{Fixed Learning Rate}
\label{apx:globalfixed}

\GlobalRegretFixLR*

\begin{proof}
Let us start with \eqref{eq:global-reg-fix-lr-local}. Since the loss functions are $\gbound$-Lipschitz, the subgradients are bounded by $\gbound$.
\begin{align}
\notag
    \vt[\local[\reg]][\nRuns](\comp)
    &\le
    \frac{\hreg(\comp)}{\step}
    + \frac{1}{2}\sum_{\indsamp=\start}^{\nSamples}
    \step
    \left(\dnorm{\vt[\alt{\gvec}][\indsamp]}^2
        +2\dnorm{\vt[\alt{\gvec}][\indsamp]}
        \sum_{\runano\in\setexclude{\oneto{\indsamp-1}}{\vt[\alt{\set}][\indsamp]}}\dnorm{\vt[\alt{\gvec}][\runano]}\right)\\
    &\le
    \frac{\hreg(\comp)}{\step}
    + \frac{\step}{2}\sum_{\indsamp=\start}^{\nSamples}
    (1+2\card(\setexclude{\oneto{\indsamp-1}}{\vt[\alt{\set}][\indsamp]}))\gbound^2.
    \label{eq:delay-regret-local-prelim}
\end{align}
To bound $\card(\setexclude{\oneto{\indsamp-1}}{\vt[\alt{\set}][\indsamp]})$, we write $\indsamp=\mapping(\worker,\run)$.
On one hand, the subgradients
\[\{\vwt[\gvec][\worker-1][\run],\ldots,\vwt[\gvec][1][\run]\}=\{\vt[\alt{\gvec}][\indsamp-1],\ldots,\vt[\alt{\gvec}][\indsamp-\worker+1]\}\]
of instant $\run$ are necessarily unavailable when making the prediction $\vwt[\state][\worker][\run]=\vt[\alt{\state}][\indsamp]$.
On the other hand, the maximum delay assumption guarantees that all the subgradients received before time $\run-\delaybound$ are used in the computation of $\vwt[\state][\worker][\run]$.
This leads to the inequality
\begin{equation}
    \notag
    \card(\setexclude{\oneto{\indsamp-1}}{\vt[\alt{\set}][\indsamp]})
    \le \worker-1 + \sum_{\runalt=1}^{\delaybound}\vt[\nWorkers][\run-\runalt],
\end{equation}
with the convention $\vt[\nWorkers][\runano]=0$ if $\runano\le0$.
Subsequently, for any $\run\in\oneto{\nRuns}$,
\begin{equation}
    \sum_{\indsamp=\vt[\nSamples][\run-1]+1}^
    {\vt[\nSamples]}\card(\setexclude{\oneto{\indsamp-1}}{\vt[\alt{\set}][\indsamp]})
    \le \frac{\vt[\nWorkers](\vt[\nWorkers]-1)}{2}
    +\vt[\nWorkers]\sum_{\runalt=1}^{\delaybound}\vt[\nWorkers][\run-\runalt]
    \le \frac{(\delaybound+1)}{2}\vt[\nWorkers^2]
    +\frac{1}{2}\sum_{\runalt=1}^{\delaybound}\vt[\nWorkers^2][\run-\runalt].
    \label{eq:bound-outstanding}
\end{equation}
Substituting \eqref{eq:bound-outstanding} in \eqref{eq:delay-regret-local-prelim} then yields
\begin{align}
    \label{eq:regret-network-local}
    \vt[\local[\reg]][\nRuns](\comp)
    \le
    \frac{\hreg(\comp)}{\step}
    + \step(\delaybound+1)\gbound^2\sum_{\run=\start}^{\nRuns}
    \vt[\nWorkers^2].
\end{align}

We proceed to bound the difference $\norm{\vwt-\vwt[\state][\workeralt]}$ for all $\run\in\oneto{\nRuns}$ and $\worker,\workeralt\in\oneto{\vt[\nWorkers]}$.
In fact, we have $\vwt=\prox(-\vwt[\dvec])$ and $\vwt[\state][\workeralt]=\prox(-\vwt[\dvec][\workeralt])$ where $\vwt[\dvec]=\step\sum_{(\indg,\runalt)\in\vwt[\set]}\vwt[\gvec][\indg][\runalt]$ and $\vwt[\dvec][\workeralt]=\step\sum_{(\indg,\runalt)\in\vwt[\set][\workeralt]}\vwt[\gvec][\indg][\runalt]$.
From the maximum delay assumption we know that $\vwt[\set]$ and $\vwt[\set][\workeralt]$ differ by at most $\sum_{\runalt=1}^{\delaybound}\vt[\nWorkers][\run-\runalt]$ samples.
Using the $\gbound$-Lipshitz continuity of the loss functions and the non-expansiveness of the mirror map (\cref{lem:prox-nonexp}), we obtain
\begin{equation}
    \label{eq:regret-network-discrepancy}
    \sum_{\worker=1}^{\vt[\nWorkers]}
    \gbound\norm{\vwt-\vwt[\state][\workeralt]}
    \le \step\gbound^2\vt[\nWorkers]\sum_{\runalt=1}^{\delaybound}\vt[\nWorkers][\run-\runalt]
    \le \step\gbound^2\left(\frac{\delaybound\vt[\nWorkers^2]}{2}
    +\frac{1}{2}\sum_{\runalt=1}^{\delaybound}\vt[\nWorkers^2][\run-\runalt]\right).
\end{equation}

With \eqref{eq:regret-network-local} and \eqref{eq:regret-network-discrepancy}, invoking \cref{lem:regret-local-global} gives
\begin{align}
    \notag
    \vt[\glob[\reg]][\nRuns](\comp)
    \le
    \frac{\hreg(\comp)}{\step}
    + \step(2\delaybound+1)\gbound^2\sum_{\run=\start}^{\nRuns}
    \vt[\nWorkers^2].
\end{align}
The theorem follows immediately.
\end{proof}

\subsection{Learning Rates Based on the Number of Received Feedback Elements}
\label{apx:globalada}

As discussed in \cref{subsec:DDDA}, the learning rate proposed in \cref{prop:delay-regret-global} is generally not implementable in practice.
We will show below that a learning rate scheme similar to the one considered in \cref{subsec:nonada} equally guarantees low \acl{GPRg}.
To begin with, we rewrite \cref{asm:card-in-order} to accommodate the new notation.

{
\addtocounter{assumption}{-1}
\renewcommand{\theassumption}{\ref{asm:card-in-order}$'$}
\begin{assumption}
\label{asm:card-in-order-global}
If $(\workeralt,\runalt)\in\vwt[\set]$ then $\card(\vwt[\set][\workeralt][\runalt])<\card(\vwt[\set])$.
\end{assumption}
}

Under this assumption, we prove the following theorem which further extends the result of \cref{prop:decen-decr-regret}.

\begin{restatable}{proposition}{GlobalRegretVarLR}
\label{prop:delay-regret-global-oblivious}
Let \cref{asm:card-in-order-global} hold. Suppose that the maximum delay is bounded by $\delaybound$ and that all the loss functions are $\gbound$-Lipschitz.
Then, for any $\comp$ satisfying $\hreg(\comp)\le\radius^2$, \acl{D-DDA} \eqref{eq:D-DDA} with learning rates
\begin{equation}
\label{eq:delay-global-oblivious-stepsize}
\vwt[\step]=\frac{\radius}{\gbound\sqrt{(5\delaybound+3)(\card(\vwt[\set])+(\delaybound+1)\nWorkers_{\max})\nWorkers_{\max}}}
\end{equation}
guarantees a \acl{GPRg} in 
\begin{equation}
    \notag
    \vt[\glob[\reg]][\nRuns](\comp) = \bigoh(\sqrt{\delaybound\nSamples\nWorkers_{\max}}).
\end{equation}
\end{restatable}
\begin{proof}
With a slight abuse of notation, we will only work with the (worker, time) index pair in this proof, but it should be understood that the change of index $\mapping$ indeed intervenes implicitly when we apply the arguments of the previous sections (notably when we compare the indices).
Compared to \cref{prop:delay-regret-global}, the two additional difficulties here are:
\begin{enumerate*}
[\itshape i\upshape)]
    \item the non-monotonicity of learning rates which are solved by the introduction of a suitable faithful permutation;
    \item 
    the predictions of a time instant are not generated by the same learning rate, but we still manage to control the deviation since these learning rates are close enough.
\end{enumerate*}

To begin, we consider a permutation $\Dpermu$ satisfying $\Dpermu^{-1}(\workeralt,\runalt)<\Dpermu^{-1}(\worker,\run)$ if $\card(\vwt[\set][\workeralt][\runalt])<\card(\vwt[\set])$.
Such a $\Dpermu$ is necessarily faithful according to \cref{asm:card-in-order-global}.
We claim that $\card(\vt[\setout^{\Dpermu}][\Dpermu^{-1}(\worker,\run)])\le(\delaybound+1)\nWorkers_{\max}$ (where $\vt[\setout^{\Dpermu}][\Dpermu^{-1}(\worker,\run)]=\setexclude{\oneto{\Dpermu^{-1}(\worker,\run)-1}^{\Dpermu}}{\vwt[\set]}$).
Let $\runalt\in\intinterval{0}{\delaybound}$ such that $\vt[\nSamples][\run+\runalt-\delaybound]>\card(\vwt[\set])\ge\vt[\nSamples][\run+\runalt-\delaybound-1]$.
Then for any $\workeralt\in\oneto{\vt[\nWorkers][\run+\runalt+1]}$ it holds $\card(\vwt[\set][\workeralt][\run+\runalt+1])\ge\vt[\nSamples][\run+\runalt-\delaybound]>\card(\vwt[\set])$ and accordingly $\Dpermu^{-1}(\worker,\run)<\Dpermu^{-1}(\workeralt,\run+\runalt+1)$.
In other words, if $\Dpermu^{-1}(\indg,\runano)<\Dpermu^{-1}(\worker,\run)$ for some $\runano\in\oneto{\nRuns}$ and $\indg\in\oneto{\vt[\nWorkers][\runano]}$ then $\runano\le\run+\runalt$, and subsequently $\card(\oneto{\Dpermu^{-1}(\worker,\run)-1}^{\Dpermu})\le\vt[\nSamples][\run+\runalt]$.
We have therefore
\begin{equation}
\notag
    \card(\setexclude{\oneto{\Dpermu^{-1}(\worker,\run)-1}^{\Dpermu}}{\vwt[\set]})
    \le
    \vt[\nSamples][\run+\runalt]-\vt[\nSamples][\run+\runalt-\delaybound-1]
    =\sum_{\runano=0}^{\delaybound}\vt[\nWorkers][\run+\runalt-\runano]
    \le(\delaybound+1)\nWorkers_{\max}.
\end{equation}
Since $\vwt[\step]\le\vwt[\step][\workeralt][\runalt]$ if and only if $\card(\vwt[\set])\ge\card(\vwt[\set][\workeralt][\runalt])$, we have indeed $\vt[\step][\Dpermu((\worker,\run)+1)]\le\vt[\step][\Dpermu(\worker,\run)]$.
Invoking \cref{thm:delay-regret}, one has (notice that the sum is ordered differently as stated in the theorem)
\begin{align}
    \notag
    \vt[\local[\reg]][\nRuns](\comp)
    &\le
    \frac{\hreg(\comp)}{\vt[\step][\Dpermu(\vt[\nWorkers][\nRuns],\nRuns)]}
    + \frac{1}{2}
    \sum_{\run=\start}^{\nRuns}
    \sum_{\worker=1}^{\vt[\nWorkers][\run]}
    \vwt[\step]
    \left(\dnorm{\vwt[\gvec]}^2
        +2\dnorm{\vwt[\gvec]}
        \sum_{\runalt\in\vt[\setout^{\Dpermu}][\Dpermu^{-1}(\worker,\run)]}\dnorm{\vt[\gvec][\runalt]}\right)\\
    \label{eq:regret-local-oblivious}
    &\le
    \frac{\hreg(\comp)}{\min_{\run\in\oneto{\nRuns},\worker\in\oneto{\vt[\nWorkers]}}\vwt[\step]}
    + \frac{1}{2}
    \sum_{\run=\start}^{\nRuns}
    \left(\max_{\worker\in\oneto{\vt[\nWorkers]}}\vwt[\step]\right)\gbound^2(2\delaybound+3)\vt[\nWorkers]\nWorkers_{\max}.
\end{align}

In the second step we bound the difference $\norm{\vwt[\state]-\vwt[\state][\workeralt]}$ for $\worker,\workeralt\in\oneto{\vt[\nWorkers]}$.
Similar to the proof of \cref{prop:delay-regret-global}, we write $\vwt=\prox(-\vwt[\dvec])$ and $\vwt[\state][\workeralt]=\prox(-\vwt[\dvec][\workeralt])$ where $\vwt[\dvec]=\vwt[\step]\sum_{(\indg,\runalt)\in\vwt[\set]}\vwt[\gvec][\indg][\runalt]$ and $\vwt[\dvec][\workeralt]=\vwt[\step][\workeralt]\sum_{(\indg,\runalt)\in\vwt[\set][\workeralt]}\vwt[\gvec][\indg][\runalt]$.
By the non-expansiveness of the mirror map (\cref{lem:prox-nonexp}) it is then sufficient to bound $\norm{\vwt[\dvec]-\vwt[\dvec][\workeralt]}$.
For ease of notation, in the rest of the proof we will denote by $\setinter$ the intersection of $\vwt[\set]$ and $\vwt[\set][\workeralt]$, \ie $\setinter=\vwt[\set]\intersect\vwt[\set][\workeralt]$.
It follows
\begin{equation}
\label{eq:discrepancy-oblivious}
\begin{aligned}[b]
\norm{\vwt[\dvec]-\vwt[\dvec][\workeralt]}
&= \|(\vwt[\step]-\vwt[\step][\workeralt])\sum_{(\indg,\runalt)\in\setinter}\vwt[\gvec][\indg][\runalt]
+\vwt[\step]\sum_{(\indg,\runalt)\in\setexclude{\vwt[\set]}{\setinter}}\vwt[\gvec][\indg][\runalt]
-\vwt[\step][\workeralt]\sum_{(\indg,\runalt)\in\setexclude{\vwt[\set][\workeralt]}{\setinter}}\vwt[\gvec][\indg][\runalt]\|\\
&\le|\vwt[\step]-\vwt[\step][\workeralt]|\sum_{(\indg,\runalt)\in\setinter}\norm{\vwt[\gvec][\indg][\runalt]}
+\vwt[\step]\sum_{(\indg,\runalt)\in\setexclude{\vwt[\set]}{\setinter}}\norm{\vwt[\gvec][\indg][\runalt]}
+\vwt[\step][\workeralt]\sum_{(\indg,\runalt)\in\setexclude{\vwt[\set][\workeralt]}{\setinter}}\norm{\vwt[\gvec][\indg][\runalt]}\\
&\le\gbound(|\vwt[\step]-\vwt[\step][\workeralt]|\card(\setinter)+\max(\vwt[\step],\vwt[\step][\workeralt])\card(\symdiff{\vwt[\set]}{\vwt[\set][\workeralt]}))\\
&\le\gbound(|\vwt[\step]-\vwt[\step][\workeralt]|\vt[\nSamples][\run-1]+\max(\vwt[\step],\vwt[\step][\workeralt])\delaybound\nWorkers_{\max}).
\end{aligned}
\end{equation}
In the last inequality we use the fact that if one element belongs to one set but not the other then it must come from the last $\delaybound$ time steps.

To control $|\vwt[\step]-\vwt[\step][\workeralt]|$, we note that for any $b>a>0$, it holds
\begin{equation}
\notag
    \frac{1}{\sqrt{a}}-\frac{1}{\sqrt{b}}
    =\frac{b-a}{\sqrt{ab}(\sqrt{a}+\sqrt{b})}
    \le\frac{b-a}{2a\sqrt{a}}.
\end{equation}
For every $\indg\in\oneto{\vt[\nWorkers]}$, we have $\card(\vwt[\set][\indg])+(\delaybound+1)\nWorkers_{\max}\ge\vt[\nSamples]>\vt[\nSamples][\run-1]$.
Therefore, with the stepsize rule \eqref{eq:delay-global-oblivious-stepsize}, we get
\begin{equation}
\label{eq:lr-rate-difference}
    |\vwt[\step]-\vwt[\step][\workeralt]|
    \le \frac{\radius\thinspace|\card(\vwt[\set])-\card(\vwt[\set][\workeralt])|}{2\gbound\vt[\nSamples][\run-1]\sqrt{(5\delaybound+3)\vt[\nSamples]\nWorkers_{\max}}}
    \le \frac{\radius\delaybound\nWorkers_{\max}}{2\gbound\vt[\nSamples][\run-1]\sqrt{(5\delaybound+3)\vt[\nSamples]\nWorkers_{\max}}}.
\end{equation}
Let us denote $\vt[\step]=\radius/(\gbound\sqrt{(5\delaybound+3)\vt[\nSamples]\nWorkers_{\max}})$; then $\vwt[\step]\le\vt[\step]$ for all $\worker\in\oneto{\vt[\nWorkers]}$.
We also take
\[
\ubar{\step}=\frac{\radius}{\gbound\sqrt{(5\delaybound+3)(\nSamples\nWorkers_{\max}+(\delaybound+1)\nWorkers_{\max}^2)}}
\]
so that $\vwt[\step]\ge\ubar{\step}$ for all $\run\in\oneto{\nRuns},\worker\in\oneto{\vt[\nWorkers]}$.
We conclude with the help of \cref{lem:regret-local-global,lem:prox-nonexp,lem:adaptive}, and the inequalities \eqref{eq:regret-local-oblivious}, \eqref{eq:discrepancy-oblivious} and \eqref{eq:lr-rate-difference}:
\begin{equation}
\notag
    \begin{aligned}
    \vt[\glob[\reg]][\nRuns](\comp)
    &\le
    \frac{\hreg(\comp)}{\ubar{\step}}
    + \frac{1}{2}
    \sum_{\run=\start}^{\nRuns}
    \left(\vt[\step]\gbound^2(4\delaybound+3)\vt[\nWorkers]\nWorkers_{\max}
    +\frac{\radius\gbound\delaybound\vt[\nWorkers]\nWorkers_{\max}}{\sqrt{(5\delaybound+3)\vt[\nSamples]\nWorkers_{\max}}}
    \right)\\
    &=
    \frac{\hreg(\comp)}{\ubar{\step}}
    + \frac{1}{2}
    \sum_{\run=\start}^{\nRuns}
    \frac{\radius\gbound(5\delaybound+3)\vt[\nWorkers]\nWorkers_{\max}}{\sqrt{(5\delaybound+3)\vt[\nSamples]\nWorkers_{\max}}}\\
    &\le\radius\gbound\sqrt{(5\delaybound+3)(\nSamples\nWorkers_{\max}+(\delaybound+1)\nWorkers_{\max}^2)}
    +\radius\gbound\sqrt{(5\delaybound+3)\nSamples\nWorkers_{\max}}.
    \end{aligned}
\end{equation}
Accordingly, $\vt[\glob[\reg]][\nRuns](\comp) = \bigoh(\sqrt{\delaybound\nSamples\nWorkers_{\max}})$.
\end{proof}

Note that the bound of \cref{prop:delay-regret-global-oblivious} directly features the total number of actions taken in the full process; it is thus (at least partly) adaptive to the number of agents.
More importantly, since $\card(\vwt[\set])$ is obviously available to each agent at time $\run$, the learning rate \eqref{eq:delay-global-oblivious-stepsize} is indeed implementable by every single agent as long as the constants $\gbound$, $\delaybound$, and $\nWorkers_{\max}$ are known.
We leave the design of fully adaptive methods in the sense of \eqref{eq:adadelayDist} as an open question.

\footnotesize
\setlength{\bibsep}{\smallskipamount}
\bibliography{references,Bibliography-PM}

\end{document}